\documentclass{article}
\usepackage{ijcai26}

\usepackage{times}
\usepackage{soul}
\usepackage{url}
\usepackage[hidelinks]{hyperref}
\usepackage[utf8]{inputenc}
\usepackage[small]{caption}
\usepackage{graphicx}
\usepackage{amsmath}
\usepackage{amsthm}
\usepackage{booktabs}
\usepackage{algorithm}
\usepackage[switch]{lineno}
\usepackage{color}

\usepackage{amsmath}
\DeclareMathOperator*{\argmax}{arg\,max}

\usepackage{dblfloatfix}
\usepackage{graphicx}
\usepackage{subcaption}
\usepackage{gensymb}
\usepackage{booktabs}
\usepackage{xcolor}
\usepackage{algorithm}
\usepackage{amsfonts}
\usepackage{amsmath}
\usepackage{algpseudocode}
\usepackage{courier}
\usepackage{amsthm}

\newtheorem{example}{Example}[section]
\newtheorem{proposition}{Proposition}[section]

\newtheorem{definition}{Definition}[section]

\def\domainSet{\mathcal{C}}
\def\domainVar{\mathcal{V}^{var}}
\def\chunkStory{\domainSet_c^s}
\def\groundatoms{\mathcal{G}}
\def\parsimony{\sigma}
\def\hypothesisSet{\mathcal{H}}
\def\observationSet{\mathcal{O}}
\def\explanationSet{\mathcal{E}}
\def\logicProgram{\Pi}
\def\lattice{\mathcal{L}}
\def\sfeat{\texttt{s\_feat}}
\def\cfeat{\texttt{c\_feat}}
\def\head{\texttt{corpus\_sim}}
\def\contains{\texttt{contains}}
\def\conf{\texttt{conf}}
\def\assoc{\texttt{associated}}
\def\fixpointOperator{\Gamma}

\usepackage{tcolorbox}
\usepackage{xcolor}

\title{From We to Me: Theory Informed Narrative Shift with Abductive Reasoning}

\author{
Jaikrishna Manojkumar Patil$^{1*}$
\and
Divyagna Bavikadi$^{1}$\thanks{equal contribution}\and
Kaustuv Mukherji$^{1}$\and\\
Ashby Steward-Nolan$^1$\and
Peggy-Jean Allin$^2$\and
Tumininu Awonuga$^2$\and
Joshua Garland$^2$\\\And
Paulo Shakarian$^1$\\
\affiliations
$^1$Syracuse University\\
$^2$Arizona State University\\
\emails
\{jpatil01, dbavikad, kmukherj, astewa29\}@syr.edu,
\{pallin, osawonug, Joshua.Garland\}@asu.edu,
pashakar@syr.edu
}

\begin{document}

\maketitle

\begin{abstract}

Effective communication often relies on aligning a message with an audience's narrative and worldview. Narrative shift involves transforming text to reflect a different narrative framework while preserving its original core message—a task we demonstrate is significantly challenging for current Large Language Models (LLMs). To address this, we propose a neurosymbolic approach grounded in social science theory and abductive reasoning. Our method automatically extracts rules to abduce the specific story elements needed to guide an LLM through a consistent and targeted narrative transformation. Across multiple LLMs, abduction-guided transformed stories shifted the narrative while maintaining the fidelity with the original story. For example, with GPT-4o we outperform the zero-shot LLM baseline by $55.88\%$ for collectivistic to individualistic narrative shift while maintaining superior semantic similarity with the original stories ($40.4\%$ improvement in KL divergence). 
    For individualistic to collectivistic transformation, we achieve comparable improvements. We show similar performance across both directions for Llama-4, and Grok-4 and competitive performance for Deepseek-R1. 
    
    %
\end{abstract}

\section{Introduction}

        
        
        
        
        
        
        
        
Narratives function as cognitive constructs that shape social reality and influence how individuals interpret events~\cite{javadi2025can,li-etal-2024-search,bruner1991narrative}. Cultural orientations—particularly the distinction between individualism and collectivism systematically affect how people attribute causality, assign moral responsibility, and evaluate success or failure~\cite{choi2025individualism,markus2014culture,triandis1995individualism}. While LLMs have been applied to narrative-adjacent task such as story generation~\cite{rooein2025biased,tian2024large,pei2024swag}, narrative discourse analysis~\cite{jenner2025using,piper-bagga-2024-using}, text style transfer~\cite{tao2025cat,mukherjee2024large}
, they have not been shown to transform a narrative's cultural orientation while preserving its core semantics. This problem is non-trivial— as illustrated in Figure~\ref{fig:narrative_transform}, a zero-shot LLM prompt fails to identify and modify the key narrative marker (`\textit{all hands dig—young, old, weak, strong}') that signals collectivism. We propose a neurosymbolic abduction-guided approach grounded in social-science theory that identifies the specific story elements that require transformation. As seen in Figure~\ref{fig:narrative_transform}, our abduction-guided transformation correctly identifies the key collectivistic phrase and modifies it into `\textit{only when one determined soul digs.}' Across multiple LLMs, our approach outperforms the zero-shot baseline by up to 55.88\% in shifting narrative while maintaining up to 40.4\% better semantic fidelity with the original story. Note that we learns rules from social-science theory to ensure narrative transformations are aligned with the target corpus, rather than simply maximizing narrative orientation ratings. 

\begin{figure}[t!]
    \centering
    \fbox{
    \begin{minipage}{0.48\textwidth}
        \small
\textbf{Original (Collectivistic):}
\small \\  ``But it will not rise 
for the feet of a few. \textit{Only when all hands dig—young, old, weak, 
strong—will the spring awaken.}'' 
\vspace{3pt}
\\
\textbf{Baseline (Zero-shot LLM): }{\color{red} $\times$ Failed to transform}
\small \\ ``But it will not rise 
for the feet of a few. Only when all hands dig—young, old, weak, 
strong—will the spring awaken.''
\vspace{3pt}
\\
\textbf{Our Method: }{\color{blue} $\checkmark$ Individualistic Shift}
\small \\ ``But it will not rise 
for the feet of a few. \textit{Only when one determined soul digs will the 
spring awaken.}''
    \end{minipage}
    }
    \caption{Abduction-guided LLM transformation correctly identifies narrative element makes the target narrative shift.}
    
    \label{fig:narrative_transform}
    \vspace{-13pt}
\end{figure}

The rest of the paper is organized as follows: Section~\ref{sec:bck} provides background on individualism-collectivism as cultural dimensions and introduces our social-science grounded diagnostic survey. Section~\ref{sec:related_work} reviews related work in text style transfer, theory-grounded LLM systems, and abductive reasoning. Section~\ref{sec:logical_language} provides a logical reasoning framework for narrative shift and formalizes it as an abduction problem. Section~\ref{sec:methodology} presents our two-phase methodology—rule learning from training corpus and iterative abductive transformation. Section~\ref{sec:experimental} demonstrates the effectiveness of our approach against a zero-shot baseline across different LLMs and narrative shift directions. Finally, we conclude with our findings and future work in Section~\ref{sec:conclusion}.

\section{Background } \label{sec:bck}
Research in social sciences has shown that effective communication relies on aligning a message with an audience's narrative and worldview~\cite{GREEN20241,green2000role}. This principle extends across diverse contexts: diplomacy, journalism, and efforts to mitigate political polarization all depend on understanding how different communities interpret the same events through distinct cultural lenses. In military and intelligence contexts, information operations similarly tailor messages to specific populations. For example, state and non-state actors involved in contemporary conflicts frame the same events differently for domestic, allied, and adversary audiences based on historical memory and collective identity~\cite{paul2016russian}. Individualism and collectivism are foundational cultural dimensions that shape how the self is defined, how goals are prioritized, and how appropriate action is determined \cite{choi2025individualism,triandis1995individualism,hofstede1984culture}. Individualism is associated with an independent self-construal emphasizing autonomy, personal traits, and individual achievement, whereas collectivism is associated with an interdependent self-construal emphasizing social roles, relationships, and group belonging \cite{markus2014culture,singelis1994measurement}.

We define narrative shift to be the transformation of a text's narrative orientation while preserving the original narrative's core message and events. By refining LLMs to systematically alter narrative orientation along this cultural dimension while preserving fidelity—a capability currently absent in existing literature—we can facilitate more impactful communication across cultural divides. This allows for the intentional creation of messages that either align with or strategically contrast the narratives of specific communities. 
To operationalize these cultural dimensions for computational analysis, our team that includes social science experts developed a structured 20-item diagnostic survey grounded in cross-cultural psychology and narrative theory. The survey evaluates narratives across 20 distinct feature dimensions, 
with each feature paired bidirectionally—one question assesses individualistic framing and 
its complement assesses collectivistic framing. This results in 40 diagnostic questions (20 individualistic, 20 collectivistic). 
Individualistic questions emphasize personal goal prioritization, action guided by individual preferences, individual responsibility over outcomes, and recognition based on personal achievement \cite{mcadams1993stories,bercovitch1975puritan,moon2020small}. Collectivistic questions emphasize group goal prioritization, action guided by social norms and group expectations, collective responsibility, and recognition based on shared contribution \cite{singelis1995culture}. Each question targets a specific narrative element—such as conflict, actors, actions, goals, relationships, resolution, or moral evaluation—such that cumulative responses determine the degree to which a narrative aligns with individualistic or collectivistic orientations. 
We describe the diagnostic in more detail in Section~\ref{sec:experimental}, and include the full diagnostics in Appendix B.

\section{Related Work} \label{sec:related_work}

Recent work on zero-shot text-style transfer with LLMs~\cite{reif2022recipe} show that diverse rewriting examples in prompts can eliminate the need for complex task-specific training data. However they lack systematic guidance on what specific style elements to modify and thus suffer from hallucinations and incoherent generation. Another recent approach~\cite{suzgun2022prompt} attempt to improve style-transfer through ranking and selection of candidate transformations using contrastive prompts. However it seems to show directional biases—positive to negative transformation was easier than negative to positive. Our method replaces post-hoc ranking by abductive reasoning to identify which chunks require modification and allows for bidirectional coherent narrative shift without expensive candidate generation. An alternative line of research focuses on fine-tuning the LLMs on style-transfer data. Recent study~\cite{mukherjee2024large} comparing zero-shot, few-shot, and finetuning approaches shows that task-specific finetuning substantially improves multilingual style transfer. However, finetuning requires large amount of labeled data. Our approach does not require this task-specific finetuning and works well even with small language models. 

Recent work~\cite{singh-etal-2025-llms,mukherjee2025evaluating} show that combining multiple evaluation metrics with LLM-based judgment provide more reliable assessments of narrative quality instead of metrics or LLM judge alone. Our work extends this principle to the domain of narrative transformation. We use KL divergence to assess semantic fidelity and LLM-diagnosis to measure narrative shift. Theory-grounded LLM systems or generative agents\cite{yue2025relate,chun2025conflictlens,park2023generative} across domains have showed that constraining LLM outputs with domain specific social science principles improves coherency as well as real-world alignment. We extend this principle to narrative transformation by grounding our approach in social science theory—specifically, individualism and collectivism. 
Early work in abductive inference has been applied for movement data. It was used to generate faux trajectories that meet spatio-temporal constraints~\cite{Bavikadi_2025} and identifying future regions of an agent~\cite{Bavikadi2025SeacretAM}. On the contrary, we look at narrative shift on natural language text and use optimization to guide an LLM-based transformation.

\section{Technical Preliminaries}\label{sec:logical_language}

We consider a logical language with a sets of constants $\mathcal{C}$, variables $\mathcal{V}$, and predicates $\mathcal{P}$. Let $ \domainSet_s \cup \domainSet_c \cup \domainSet_n \cup \domainSet_f \subseteq \mathcal{C}$, where the domain $\domainSet_s$ is a set of constants denoting narrative stories, domain $\domainSet_c$ a set of constants representing unique chunks (subsets of a story) from all stories (additionally, $\domainSet_c^s \subset \domainSet_c$ denotes the set of unique chunks for the story $s \in \domainSet_s$) , domain $\domainSet_n = \{ ind, col \} $ is a set of constants indicating the individualistic and collectivistic narrative and domain $\domainSet_f = \{f_1, \ldots, f_{20}, \ldots, f_{40}\}$ is a set of constants indicating various narrative features. Correspondingly, the sets of variables are, $ \domainVar_s \cup \domainVar_c \cup \domainVar_n \cup \domainVar_f \subseteq \mathcal{V}$.

\noindent In addition to first-order logic syntax and semantics, we allow for annotations on atoms. Annotations are are elements of a lower semi-lattice structure $\mathcal{L}$, where each element is a subset interval of a unit interval $ \mu \subseteq [0,1]$, this generalizes fuzzy logic~\cite{ks92}. An atom can be represented as $a = p(t_1,\ldots,t_n)$, where $p \in \mathcal{P}$ is an $n$-ary predicate and $t_1, \ldots, t_n \in \mathcal{C} \cup \mathcal{V}$. A fact or a ground atom is an atom without any variables.
An annotated ground atom $a_\mu$ means $a$ is associated with an annotation $\mu$.  Consider $\groundatoms$ to a set of all ground atoms.
We use an annotated logic framework where the desirable annotation is positioned at top of the lattice and $[0,1]$ is the bottom element ($\bot$) of the lattice. If $\alpha,\alpha_1, \ldots, \alpha_m$ are atoms, then $\alpha \leftarrow \alpha_1 \land \ldots \land \alpha_m $ is called a rule. A program $\Pi$ is defined as set of facts and rules.
We use the notation $\Gamma^*_\Pi$ to refer to the deductive closure with the logic program $\Pi$ and initial set of facts with assigned annotations to $\bot$ (meaning that we start deduction under the assumption of uncertainty). We perform deductive inference with logic program $\Pi$ \cite{ssTAI22} and is efficiently implemented using PyReason \cite{aditya2023pyreason}.

\begin{example}[Language]
\label{ex:lang}
A narrative story $s \in \domainSet_s$ can be expressed as a series of chunks $\langle c_1,c_2,.. \rangle$ that have certain narrative characteristics, where $\{ c_1,c_2 \ldots \}\in \domainSet_c^s $. We represent the chunks as $\{ \contains(s, c_1)_1, \contains(s, c_2)_1 \ldots \}$ and $Ch_s$ is a set of chunks from $\domainSet_c$ in story $s$. Suppose $s$ is a collectivistic narrative story. We assume binary predicates $\{ \sfeat, \cfeat, \head, \contains,\\ \assoc \} \in \mathcal{P}$.  For our use-case, we assign the annotation with a scalar- the lower bound of the annotation intervals, as they always have the upper bound as 1 or is the same as the lower bound.
     Each natural language text or chunk can be related with multiple narrative features denoted by constants $\domainSet_f$.  We use a trainable generative function $g_\theta(x)= \mu; x \in \domainSet_s \cup \domainSet_c$ to obtain a rating $\mu \in \{ 0.2,0.4,0.6,0.8,1 \}$ on features for a given text. We notate this in our language using ground atoms $\cfeat(s, ind)_{0.6}$ for $s \in \domainSet_s$, $ind \in \domainSet_n$ showing that the story is of a individualistic narrative with a confidence of $0.6$, and $\cfeat(c, f_{20})_{0.2}$ for $c \in \domainSet_c$, $f_{20} \in \domainSet_f$ showing that the story has narrative characteristic $f$, with a confidence of $0.2$. Also, we use $\assoc(ind, f_{20})_1$ to show that the feature $f_{20}$ is associated with the individualistic narrative. We have $\{ f_1, \ldots, f_{20} \}$ associated with the individualistic narrative and $\{ f_{20}, \ldots, f_{40} \}$ associated with the collectivistic narrative. Finally, to represent the confidence of the story having an individualistic narrative, we use ground atom, $\head(x, f_{20})_\conf$ for $x \in \domainSet_c \bigcup \domainSet_s,$ and $\conf \in [0,1]$.

\end{example}
\paragraph{Abduction.} 
 Given a story, our goal is to transform it's narrative by identifying a set of chunks to modify along with the corresponding characteristics.
We formalize our problem as an abduction problem: Given a set of observations $\mathcal{O}$ that are annotated facts depicting the various chunks and corresponding narrative characteristics of the story, hypothesis $\mathcal{H}$ consisting of annotated facts that have the same chunks in $\mathcal{O}$ with different narrative characteristics, a logic program $\Pi $ governing the narrative structure of the story. This gives us an instance of an abduction problem $\langle \mathcal{O},\mathcal{H},\Pi \rangle$. We abduce an explanation $\mathcal{E} \subseteq \mathcal{H}$ to the abduction problem $\langle \mathcal{O},\mathcal{H},\Pi \rangle$, such that $\Pi  \bigcup \mathcal{O} \bigcup \mathcal{E}$ is consistent and $\mathcal{E}$ maximizes the parsimony function $\parsimony$. 
We now formally define Observations $\observationSet$, Hypothesis $\hypothesisSet$, and Logic Program $\Pi$.

\begin{definition}[Observations ($\observationSet$)]
    Given $s \in \domainSet_s$, a narrative diagnosis function, $diagnosis: \domainSet_s \rightarrow \groundatoms $, generates set of ground atoms, then $\mathcal{O}$, is defined as:
\begin{align*}
\observationSet = \forall_{c \in \domainSet_c^s}\{ \contains(s, c) \} \cup diagnosis(s) 
\end{align*}
\begin{align*}
diagnosis(s) = \forall_{c \in \domainSet_c^s}
&\{ \cfeat(c, f)_\mu 
\\
&\mid  f \in \domainSet_f, \mu \in \{0, 0.2, 0.4, 0.6, 0.8, 1\}\} 
\end{align*}
\end{definition}

We further elaborate on the narrative $diagnosis$ function in Section~\ref{sec:experimental}.

\begin{definition}[Hypothesis ($\mathcal{H}$)]
    For a story $s \in \domainSet_s, \mathcal{H}$ is the set of all possible atoms (that potentially contribute to the narrative transformation) indicating different chunks with various narrative characteristics. We define $\mathcal{H}$ as follows:
\begin{multline*}
\mathcal{H} = \forall_{f \in \domainSet_f} \{\cfeat(c, f)_{\mu'} \mid \\
\mu' \geq \max(r,0), \cfeat(c, f)_\mu \in \mathcal{O}\}
\end{multline*}
\end{definition}

\begin{definition}[Logic Program ($\Pi$)]
    For $S \in \domainVar_s, C \in \domainVar_c, F \in \domainVar_n, F'  \in \domainVar_f$, $\mu \in \{0,0.2,0.4,0.6,0.8,1\}$, and $c \in [0, 1]$, we define 2 types of non-ground rule templates:

\begin{equation*}\label{eq:rule_template}
\textit{R1 : }\head(S, F)_{c} \leftarrow \sfeat(S, F)_{\mu'}
\end{equation*}

\begin{multline*}\label{eq:rule_template2}
\textit{R2 : }\sfeat(S, F)_{\mu'} \leftarrow \cfeat(C, F')_{\mu} 
\\ \land 
\contains(S,C)_{1} \land \assoc(F,F')_1
\end{multline*}
\end{definition}

Consider an explanation $ \explanationSet = p(c,f):\mu \in \Pi $ formed with a binary predicate $p = \cfeat$, and a corresponding set of constants $\mathcal{S} = \domainSet_{f} \setminus \{ f \}$ (which we call suggestive features). We define a parsimony function, $\parsimony$, that outputs a scalar value (which we call the corpus similarity score)  indicating the cultural relevance for a given story $s \in \domainSet_s$.

\begin{align*}
\parsimony_{\langle \Pi, s, f \rangle}(\explanationSet) = 
& \Gamma^*_{\Pi \bigcup \explanationSet \bigcup \observationSet}
  (\head(s, f)) \\
& - \Gamma^*_{\Pi \bigcup \observationSet}
  (\head(s, f))
\end{align*}

We define $\domainSet_{c,\mathcal{O}}$ to indicate the set of constants from the ground atoms of $\observationSet$ that belong to $\domainSet_c$. Similarly, we also define $\domainSet_{f,\mathcal{O}}, \domainSet_{f,\mathcal{E}}$.
 We define a function $\phi_{k}$ as the optimization function to solve our abduction problem that forms a set of new facts of the form $p(c_k,f_i):\mu_i$ where $f_i \in \mathcal{S}, c_k \in \domainSet_{c,\mathcal{O}}$, that maximizes the corpus similarity score.

\begin{equation*}
   \begin{aligned}
    \phi_k(c, p) = \argmax_{\substack{\mathcal{E} \subset \mathcal{H} \setminus \mathcal{O} \\
|\mathcal{E}| = k, \, f \in \domainSet_{f,\explanationSet}}}
\parsimony_{\langle \Pi \cup \mathcal{E}, c, f \rangle}(\explanationSet) 
\\ \quad \textit{s.t. } \quad
 \Pi \cup \explanationSet \cup \observationSet \text{ is consistent}
\end{aligned} 
\end{equation*}

We note that we formalize the rule templates and $\hypothesisSet$ to guarantee the consistency of $\Pi \cup \observationSet \cup \explanationSet$. As $\forall\mu ( a_{\mu} \in \observationSet \implies \exists \mu'>\mu \mid a_{\mu'} \in \hypothesisSet )$, the explanation will always have an annotation that is above the annotation of the corresponding ground atom in the observation. $\Pi \cup \observationSet \cup \explanationSet$ is not inconsistent as $\Gamma_{\Pi \cup \observationSet \cup \explanationSet}$ does not result in deducing for an atom assigned with a new annotation that is lower than the initial annotation. Using the lower lattice structure $\lattice$, for the annotations and the rule templates, result in annotations of $\Gamma_{\Pi}$ to be higher in the lattice that the initial set of facts.

\begin{example}[Abduction]
\label{ex:abd}
    Building upon the Example~\ref{ex:lang}, $\mathcal{O}$ is a set of annotated facts with each fact representing a chunk in the given story $s$, $\{\cfeat(c_1, \texttt{individual\_accolades})_0, \\ \cfeat(c_1, \texttt{relationship\_framing})_{0.4}, \\ \contains(s, c_1)_1,\cfeat(c_2, \texttt{uniqueness})_{0.2}, \\\contains(s, c_2)_1 \ldots \}$ Hypothesis is also a set of annotated facts, $\mathcal{H} = \{ \cfeat(c_1, \texttt{uniqueness})_1 \} $, representing hypothetical features of the chunks to change the overall stories narrative to be individualistic. $\Pi$ is the set of annotated logical rules that specify the individualistic narrative of a given story like: 
    \begin{align*}
         \sfeat(s, ind)_{0.8} \leftarrow \cfeat(c_1, \texttt{uniqueness})_{0.8} \\ \land \contains(c_1,s)_1 \land \assoc(ind, \texttt{uniqueness})_1
    \end{align*} 
    Here, $c_1$ of the $s$ frames the individual as self-reliant- a quality associated with an individualistic narrative, then the story is of individualistic narrative with a confidence of $0.8$. That leads to $s$ being similar to the training corpus with the confidence of $0.56$ indicated by the rule:
    \begin{align*}
        \head(s, ind)_{0.56} \leftarrow \sfeat(s, ind)_{0.8}
    \end{align*}
    Explanation $\mathcal{E} = \{ \cfeat(c_1, \texttt{uniqueness})_{0.8} \} $ is a suggested set of chunks that to be modified to change the narrative characteristics of the story.
\end{example}

\section{Methodology}\label{sec:methodology}
We consider two narrative transformation tasks both formalized as abduction problem $\langle \observationSet, \hypothesisSet, \logicProgram \rangle$. Given a story $s \in \domainSet_s$, we search for an explanation $\explanationSet \subseteq \hypothesisSet$ such that when chunks from $\explanationSet$ are transformed, the story shifts toward a target narrative orientation. 
Our system combines symbolic reasoning with large language models through a two-phase architecture. Phase 1 uses training corpus to learn rules which are then used in Phase 2 to perform abductive-guided transformation of a test story as seen in Figure~\ref{fig:system_diagram}.

\begin{figure}[h]
    \centering
    \includegraphics[width=1\linewidth]{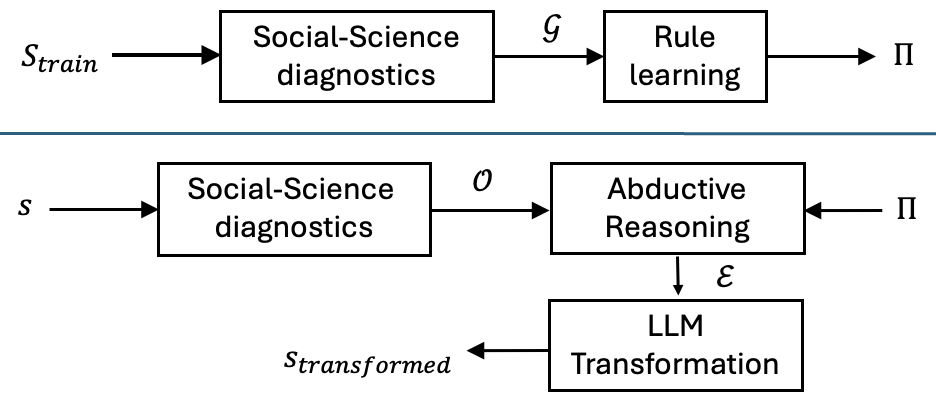}
    \caption{Abduction-guided Transformation (Our Approach).}
    \label{fig:system_diagram}
\end{figure}

\paragraph{Phase 1: Rule learning.}

We have a training corpus $S_{\text{train}} \subseteq \domainSet_s$ of stories from a target narrative orientation. We get the observations $\observationSet$ from all the training stories as follows.
\begin{align*}
\observationSet_{train} = \bigcup_{s \in S_{train}} diagnosis(s)  \cup \forall_{c \in \domainSet_c^s}\{ \contains(s, c) \} 
\end{align*}
We then infer the set of ground atoms with a predicate $q = \sfeat $ by one application of fixpoint operator, i.e. 
\[
\Pi_{q} = \fixpointOperator^*_{\observationSet_{train} \cup R1}(q(s,f))
\]
Now, for each feature $f \in \domainSet_f$ and each rating value $\mu \in \{0, 0.2, 0.4, 0.6, 0.8, 1\}$, 
we compute the confidence from observations:

\[
\textit{conf}(f, r) = \frac{|\{s \in S_{\text{train}} : \exists q(s, f)_\mu \in \Pi_{q}|}{|S_{\text{train}}|}
\]
For each $(f, \mu)$ pair, we extract a rule of the form $R2$:

\[
\texttt{corpus\_sim}(X, f)_{conf(f, \mu)} \leftarrow q(X, f)_{\mu}
\]

\paragraph{Phase 2: Iterative Abductive Transformation.}
For a test story $s_0 \in \domainSet_s$ requiring narrative transformation, we perform iterative abductive reasoning as described in Algorithm~\ref{alg:iterative_abduction}. At each iteration $t$, we solve the abduction problem $\langle \mathcal{O}_t, \mathcal{H}, \Pi \rangle$ 
as formalized in Section~\ref{sec:logical_language} to identify the chunks to transform.   
We define a narrative transformation function that modifies narrative content:
\[
\textit{llm\_transform} : \domainSet_c \times \domainSet_f \times \domainSet_s \times [0,1] \rightarrow \domainSet_s
\]
The transformation function takes as input a chunk $c \in \domainSet_c$, a feature $f \in \domainSet_f$, 
a story $s \in \domainSet_s$, and a target annotation value $\tau \in [0,1]$. It outputs a modified 
story $s^{\text{transformed}} \in \domainSet_s$ where the selected chunk has been modified by an LLM 
to shift the narrative toward the target annotation. The LLM prompt is fixed for a given transformation 
direction but may differ for different directions. The LLM prompt to transform particular identified story chunk given the current narrative and target narrative is provided in the Appendix C. 

\begin{algorithm}
\caption{Iterative Abductive Transformation}\label{alg:iterative_abduction}
    \begin{algorithmic}[1]
        \Require{$s_0$, $\logicProgram$, $T_{\max}$, $k$}
        \Ensure{Optimal transformed story $s_{t^*}$ and optimal explanation $\explanationSet_{t^*}$}

        \State $t \gets 0$
        \While{$t < T_{\max}$}
            \State $\observationSet_t \gets \textit{diagnosis}(s_t)$

            \State $\explanationSet_t \gets \phi_k$
            \Comment{Solve $\langle \observationSet_t, \mathcal{H}, \logicProgram_{\text{train}} \rangle$ as defined in Section \ref{sec:logical_language}}

            \State $C_t \gets \{c \in \domainSet_c \mid \exists (f, \mu) \texttt{c\_feature}(c, f)_{\mu} \in \explanationSet_t\}$

            \For{ $c \in C_t$}
            \State $f, \tau \leftarrow \textit{extractFeature}(\explanationSet_t)$
            \State $s_t \gets \textit{llm\_transform}(c, f, s_t, \tau)$
            \EndFor

            \State $t \gets t + 1$

        \EndWhile

        \State $t^* \gets \arg\max_t \{\text{median}(\observationSet_t)\}$
        \State \Return{$s_{t^*}$, $\explanationSet_{t^*}$}
        \end{algorithmic}
\end{algorithm}

 \paragraph{Baseline. }
We implement a baseline transformation function using a zero-shot LLM approach. 
 The baseline prompt is: \textit{``Make the following story more  [individualistic/collectivistic]''} where the direction is determined by the target transformation task.
\paragraph{Cost Analysis}
We show theoretically the bound on the number of LLM calls with Proposition~\ref{prop:cost} (proof in the Appendix A) and that it increases linearly with the number of abduced story chunks.
\begin{proposition}\label{prop:cost}
    Let $s$ be a story represented as a sequence of tokens and $\chunkStory$ denote 
the set of chunks from $s$. For each chunk $c \in \chunkStory$, 
let $\text{size}(c)$ denote the number of tokens in $c$.  
Then total number of LLM transformations is bounded by following quantity.
\[
N_c = \sum_{c \in \chunkStory} \text{size}(c)
\]


\end{proposition}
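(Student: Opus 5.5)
The plan is to bound the number of calls to \textit{llm\_transform} by counting how many distinct chunks of $s$ can ever be selected for transformation, and then bounding that count by the total token count. In Algorithm~\ref{alg:iterative_abduction}, an LLM call happens only in the inner \textbf{for} loop. That loop runs once per chunk $c \in C_t$, where $C_t$ is the set of chunks appearing in the abduced explanation $\explanationSet_t$. So the quantity to bound is $\sum_t |C_t|$. To make the bound independent of $T_{\max}$, I would read the statement as I believe the authors intend: each chunk of the story is transformed at most once over the whole run.

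The argument proceeds in three steps.

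\textbf{Step 1: each round draws from the story's own chunks.} By definition of $\hypothesisSet$, every hypothesis atom $\cfeat(c,f)_{\mu'}$ shares its chunk $c$ with an atom of $\observationSet$. Hence $\explanationSet_t \subseteq \hypothesisSet$ only mentions chunks in $\domainSet_{c,\observationSet} = \chunkStory$, which gives $C_t \subseteq \chunkStory$.

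\textbf{Step 2: a chunk is transformed at most once.} The hypothesis construction requires $\mu' \geq \mu$, so annotations are monotone: a chunk that has already been raised toward the target is not proposed again. Equivalently, one can treat each modified chunk as consumed, since the LLM rewrites it into new text. The sets $C_t$ are then pairwise disjoint subsets of $\chunkStory$. This gives the total count $\sum_t |C_t| \le |\chunkStory|$.

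\textbf{Step 3: chunks are nonempty.} Every chunk contains at least one token, so $\text{size}(c) \ge 1$. Therefore
\[
|\chunkStory| = \sum_{c\in\chunkStory} 1 \le \sum_{c \in \chunkStory}\text{size}(c) = N_c .
\]
Combining Steps 1--3 bounds the number of LLM transformations by $N_c$. This also yields the claimed linear dependence on the number of abduced chunks, since each abduced chunk costs exactly one call.

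The main obstacle is Step 2. Algorithm~\ref{alg:iterative_abduction} does not explicitly forbid re-selecting a chunk across iterations, and $\observationSet_t$ is recomputed from the rewritten story $s_t$. I would first try to derive disjointness from the monotonicity of annotations in $\hypothesisSet$ together with the consistency condition of $\phi_k$, which rules out lowering annotations. If that proves insufficient, I would use a token-level fallback. Each transformation alters at least one token, and the chunks that can be touched are ultimately tokens of the original story. Counting at the token level, with at most one call charged per original token, yields the same bound $N_c$ without requiring chunk-level disjointness. This token-level version is also presumably why the statement uses $\text{size}(c)$ rather than simply $|\chunkStory|$.
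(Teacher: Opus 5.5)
Your Step 2 is a genuine gap. It arises because you read the statement as a bound over the whole run of Algorithm~\ref{alg:iterative_abduction}, and nothing in the setup makes the sets $C_t$ disjoint across iterations.

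\begin{itemize}
\item The monotonicity you invoke (hypothesis annotations lie above observed ones) refers only to the current observation $\observationSet_t = \textit{diagnosis}(s_t)$. The same holds for the consistency condition in $\phi_k$.
\item That observation is recomputed from the rewritten story, and nothing ties $\explanationSet_{t+1}$ to $\explanationSet_t$.
\item The LLM rewrite is not guaranteed to reach the target annotation $\tau$. A rewritten chunk that still has some feature rated below the top of the scale can therefore be abduced again.
\item Strictly, $C_{t+1}$ consists of chunks of $s_{t+1}$, not of $s$, so ``pairwise disjoint subsets of $\chunkStory$'' is not even well-posed.
\end{itemize}

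Your token-level fallback fails for the same reason. Text produced by one call can be rewritten by a later call, so no charging scheme gives at most one call per original token. Concretely, take a story consisting of a single one-token chunk. Each of the $T_{\max}=3$ iterations makes at least one call as long as some feature is still rated below the top. That already gives $3 > N_c = 1$ calls, so the whole-run bound cannot be derived from the paper's definitions.

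The repair is to prove what the paper actually proves, namely the bound for a single iteration. The paper fixes one iteration and writes the story's tokens as $N' = N_c + N_{\overline{c}}$, with $N_c$ the tokens in the \emph{identified} chunks. It then derives a contradiction from $k > N_c$ via $k + N_{\overline{c}} > N'$.

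Your Steps 1 and 3 give this result directly and more transparently:
\begin{itemize}
\item The \textbf{for} loop issues exactly one call per element of the set $C_t$, so the number of calls is $|C_t|$ and no disjointness is needed.
\item Since $\text{size}(c) \ge 1$, you get $|C_t| \le \sum_{c \in C_t}\text{size}(c) \le \sum_{c \in \chunkStory}\text{size}(c)$.
\end{itemize}

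The fact that each call consumes at least one distinct identified token is exactly what the paper's step ``$k + N_{\overline{c}}$ cannot exceed $N'$'' uses implicitly. Summing over $C_t$ rather than all of $\chunkStory$ gives the sharper form, which is the quantity the paper plots as tokens in identified segments in its experiment. Over the whole run you only get the sum of these per-iteration bounds, each measured on the current story $s_t$.
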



\section{Experimental Results}
\label{sec:experimental}

In this section, we present the experiments conducted to validate our approach. We establish our main evaluation metrics and discuss our findings. We perform a hyperparameter sensitivity study before concluding with our experimental results. 

\paragraph{Setup. }
We create a dataset of 118 natural language narratives (involving 90 individualistic and 28 collectivistic narratives). 
The individualistic narrative stories in our study were selected based particularly on narratives following a ``rags to riches'' trajectory common in Western literary traditions \cite{mcadams1993stories} and sourced from classic American literature as well as contemporary biographical narratives of successful figures. Collectivistic narratives, which are less prevalent in Western written fiction, were sourced from diverse geographical and cultural contexts including African, Asian, and Latin American traditions, as well as Western narratives that foreground social relationships and critique individualism \cite{achebe2012hopes,ba1981living,klinkowitz2004vonnegut}.
\paragraph{Narrative Diagnosis. } 
We construct a social science theory based narrative diagnosis survey elaborated in Section~\ref{sec:bck}. The answers to the survey are in the scale of 1-5 with higher scores indicating the higher individualistic or collectivistic perspective of the narrative. Given a candidate story, we use an LLM to answer the survey to obtain ratings of the same story 10 times.  We repeat this for some stories and found stable results on the ratings as seen in Figure~\ref{fig:llm_stability}. Note that these ratings are normalized to form the annotations in the output of $diagnosis(s)$ introduced in Section~\ref{sec:logical_language}.
\begin{figure}[h!]
    \centering
    \includegraphics[width=0.9\linewidth]{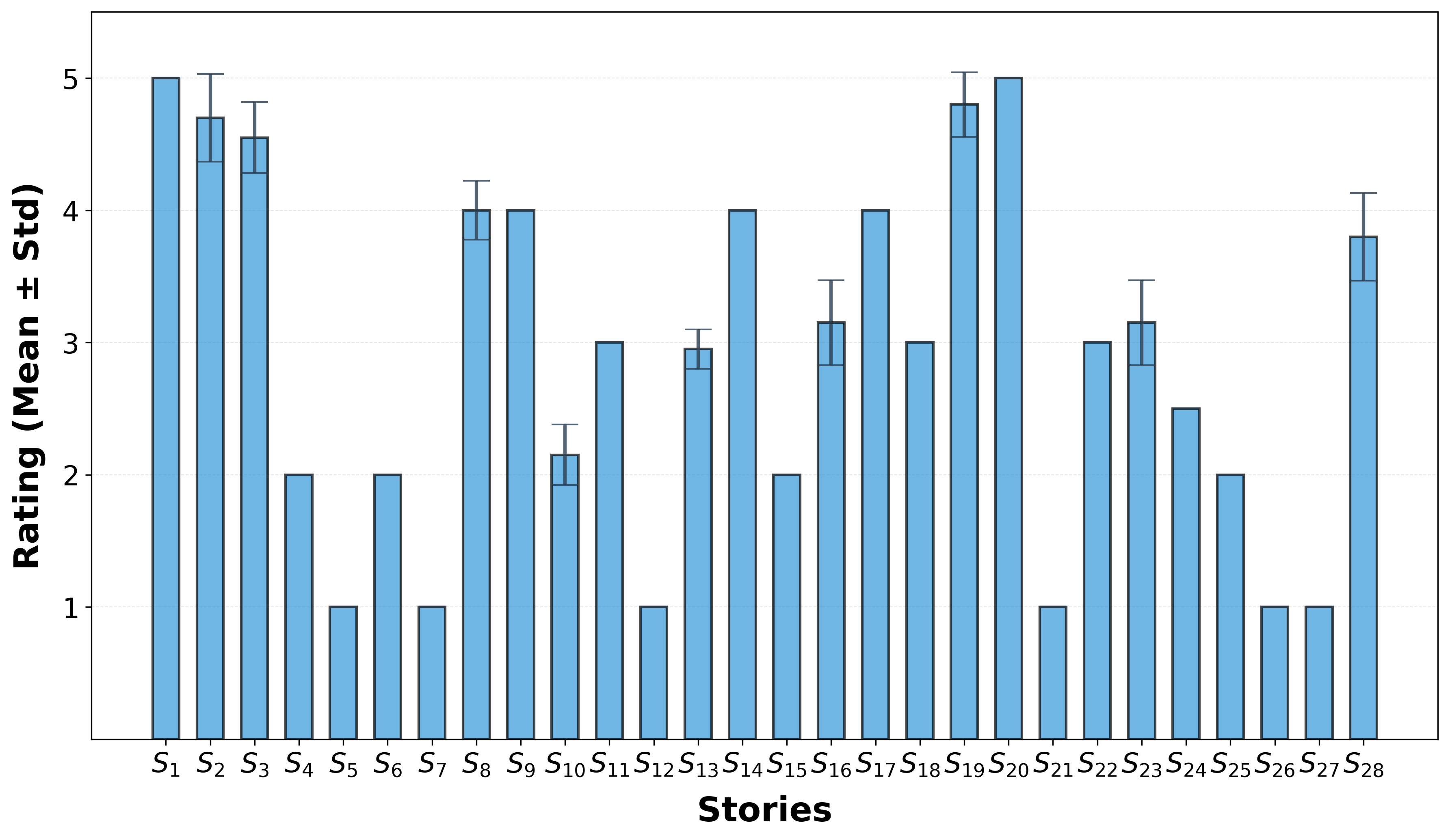}
    \caption{Stability of Social-Science LLM diagnostics over 10 runs for each story.}
    \label{fig:llm_stability}
\end{figure}

\paragraph{Evaluation Metrics. }We use two metrics. First, to capture the narrative of a story we define diagnosis score-- given a narrative, we use a diagnosis to evaluate how individualistic or collectivistic the narrative is based on a survey.  We compute the median of the 20 diagnosis scores from the narrative diagnosis survey (explained in the Section~\ref{sec:bck}) on transformed stories and report the average improvement over the original in the Table~\ref{tab:forward_results} and ~\ref{tab:inverse_results} for different LLMs. We also report the mean, median and mode ratings to compare zero-shot approach and our approach for all the stories (Figures in Appendix D).

The second metric we use is the Kullback-Leibler (KL) divergence to analyze the content similarity between the transformed story and the original story. Let  $S^{trans}$ and $S$ be probability distributions of transformed stories and the original story over a shared vocabulary $\mathcal{X}$, where we employ additive smoothing with $\alpha = 10^{-5}$.
Here, $S^{trans}(x)>0 \implies S(x)>0$ for all $x\in \mathcal{X}$. While  We define KL divergence as:
\begin{equation*}
     D_{\mathrm{KL}}(S^{trans}(x) \,\|\, S(x)) = \sum_{x \in \mathcal{X}} S_{trans}(x)\log\left(\frac{S^{trans}(x)}{S(x)}\right) 
\end{equation*}
We use KL divergence to depict the deviation of the transformed stories through baseline, as well as our approach from the original stories across different language models.

\paragraph{Implementation Details. }For all our experiments, we used Apple M4 Pro with 20-core architecture and 24GB RAM. We use reasoning models GPT-4o, Deepseek-R1, Grok-4 fast reasoning model and a non-reasoning model Llama-4 Maverick 17B 128E Instruct v1.
The task is to transform a story with a collectivistic to individualistic narrative (C$\rightarrow$I)  unless specified the other way around (I$\rightarrow$C). We use following methods:

\textit{Zero-shot Transformation (Baseline).} We use a single task instance to transform the story to a different narrative as a prompt to the LLM.

\textit{Abduction-guided Transformation. }Our algorithm provides suggestion to change characteristics of the story for the transformation. We do this for 3 iterations and report the results of the best of the iterations.

\paragraph{Results of Diagnosis Score and Content Similarity. }
\begin{figure}[h!]
    \centering
    
    
    
    \fbox{
    \begin{minipage}{0.45\textwidth}
        \small
        \textbf{Original (Individualistic):}
        \\
        \small ``She struck another match against the wall. It burned brightly, and when the light hit the wall, it turned see-through like a thin curtain, and she could see inside a room. A snow-white tablecloth was spread out on the table, and on top sat a fancy dinner set. A roast goose steamed beautifully, stuffed with apples and prunes.''
        \vspace{2pt}
        \\
        \textbf{Baseline (Zero-shot LLM): }{\color{red} $\times$ Maintains individualism}
        \\
        \small ``She struck another match, and this time, the wall became translucent, revealing a room filled with warmth and comfort. A table was set with a snow-white cloth, and a roast goose steamed invitingly. In her vision, the goose approached her, but when the match burned out, the image faded, leaving only the cold wall.'' 
        \\
        \textbf{Our Method: }{\color{blue} $\checkmark$ Collectivistic Shift}
        \\
        \small ``She struck another match against the wall. It burned brightly, and when the light hit the wall, it turned see-through like a thin curtain, and she could see inside a room \textit{where an entire community was gathered, intertwined in a tapestry of shared stories and warmth, their faces aglow with the deep joy that comes from collective unity and mutual support.}''
    \end{minipage}
    }
    
    \vspace{0.3cm}
    
    \fbox{
    \begin{minipage}{0.45\textwidth}
        \small
        \textbf{Original (Individualistic):}
        \\
        \small ``But in the corner, leaning against the wall, sat the little girl—with red cheeks and a smiling face—frozen to death on the last evening of the old year. The New Year's sun rose on the small, heartbreaking figure.''
        \vspace{2pt}
        \\
        \textbf{Baseline (Zero-shot LLM): }{\color{red} $\times$ Preserves individualism}
        \\
        \small ``As the community awoke to the New Year, they found the little girl in the corner, her face serene and smiling, though she had succumbed to the cold. The people gathered around, moved by the sight.''
        \vspace{2pt}
        \\
        \textbf{Our Method: }{\color{blue} $\checkmark$ Collectivistic Shift}
        \\
        \small ``But in the corner, leaning against the wall, \textit{the community had gathered around the little girl—forming a circle of warmth and unity, ensuring she was embraced by their collective love and care on the last evening of the old year.}''
    \end{minipage}
}
    
    \caption{Abduction-guided LLM transformation successfully shifts narrative from I$\rightarrow$C across multiple story segments.}
    \label{fig:i_to_c_transform}
    \vspace{-10pt}
\end{figure}

\begin{table}[t]
\centering
\caption{Average improvement over original stories for C$\rightarrow$I transformation. Best performance per LLM is underlined; overall best is bolded.}
\label{tab:forward_results}
\begin{tabular}{lll}
\toprule
\textbf{Approach} & \textbf{Base Model} & \textbf{ Improvement (\%)} \\
\midrule
Zero-Shot       & GPT-4o        & 26.73\\
Abduction (Ours)       & GPT-4o        & \underline{\textbf{97.12}} \\
\midrule
Zero-Shot       & Grok-4          & 61.50 \\
Abduction (Ours)       & Grok-4          & \underline{94.33} \\
\midrule
Zero-Shot       & Llama-4       & 45.76 \\
Abduction (Ours)       &  Llama-4       & \underline{46.54} \\
\midrule
Zero-Shot       & DeepSeek-R1   & 56.79 \\
Abduction (Ours)       & DeepSeek-R1  & \underline{71.91} \\
\bottomrule
\end{tabular}

\end{table}

\begin{figure}[ht]
\centering
\vspace{-1em}
\begin{tabular}{@{}c@{\hspace{0em}}c@{}}
\begin{tabular}{@{}c@{}}
    \textbf{GPT-4o} \\[0em]
        \includegraphics[width=0.47\linewidth]{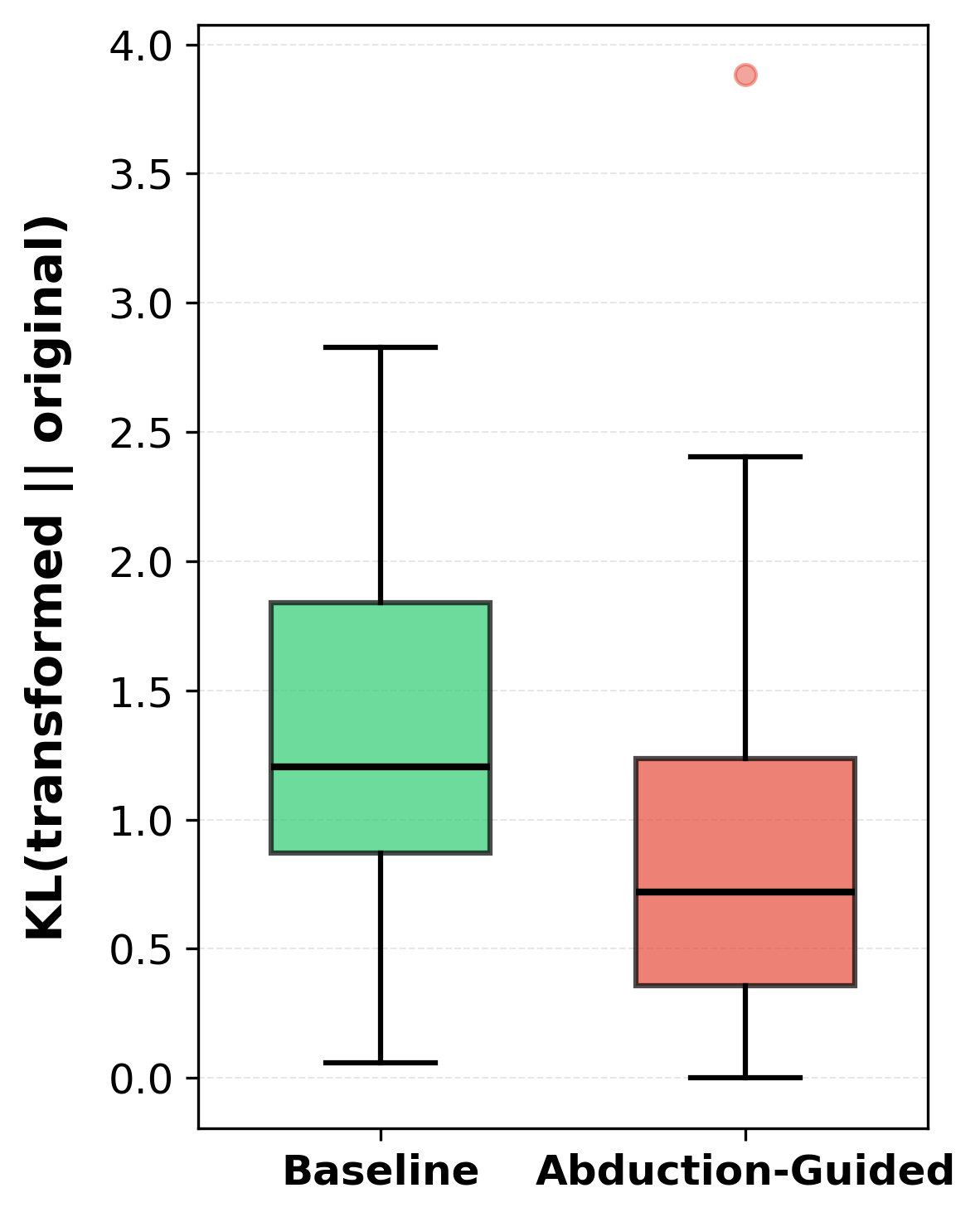}\\[0.0em]
\end{tabular}
&
\begin{tabular}{@{}c@{}}
    \textbf{Grok-4} \\[0em]
        \includegraphics[width=0.47\linewidth]{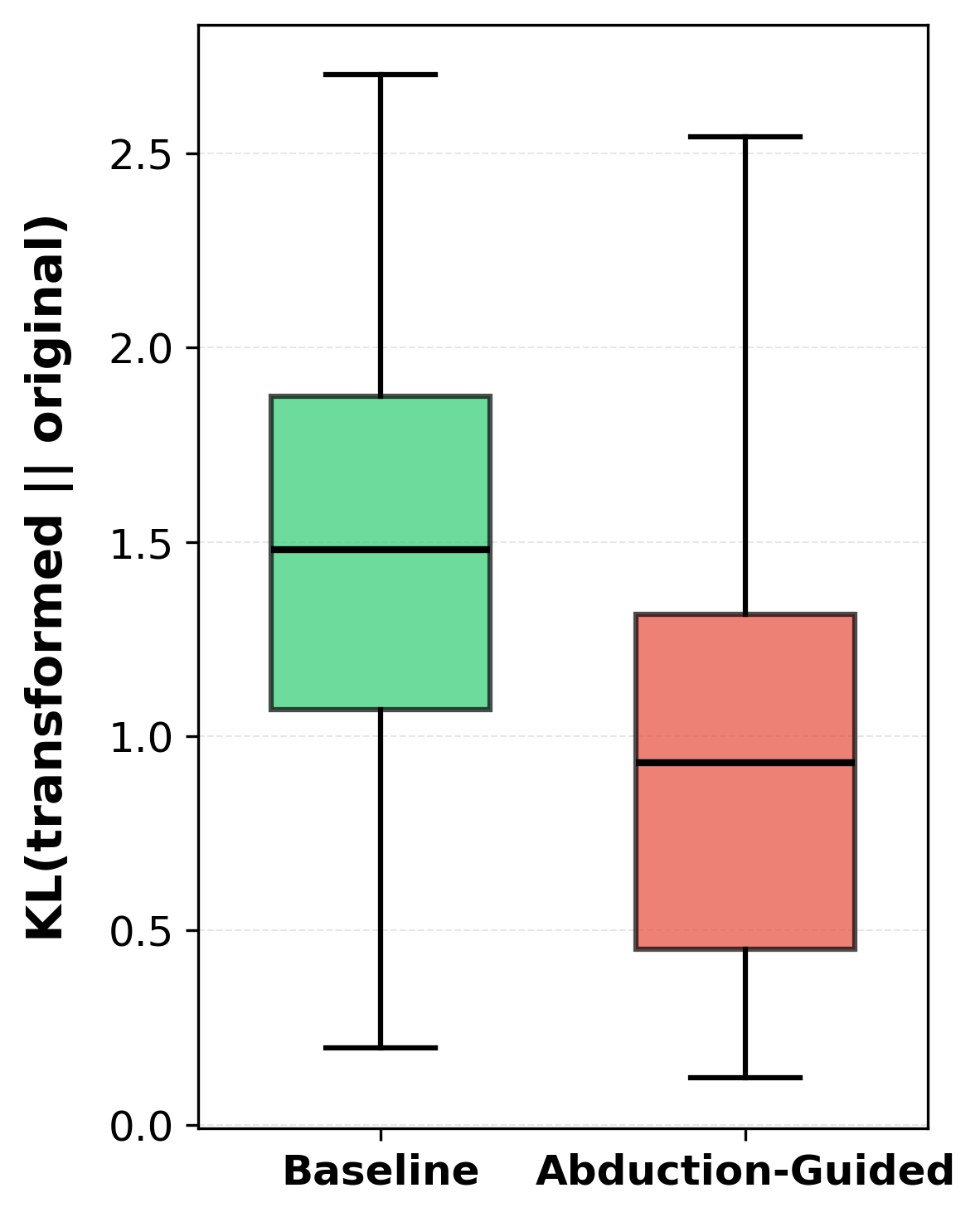}
\end{tabular}
\\[0em]
\begin{tabular}{@{}c@{}}
    \textbf{Deepseek-R1} \\[0em]
        \includegraphics[width=0.47\linewidth]{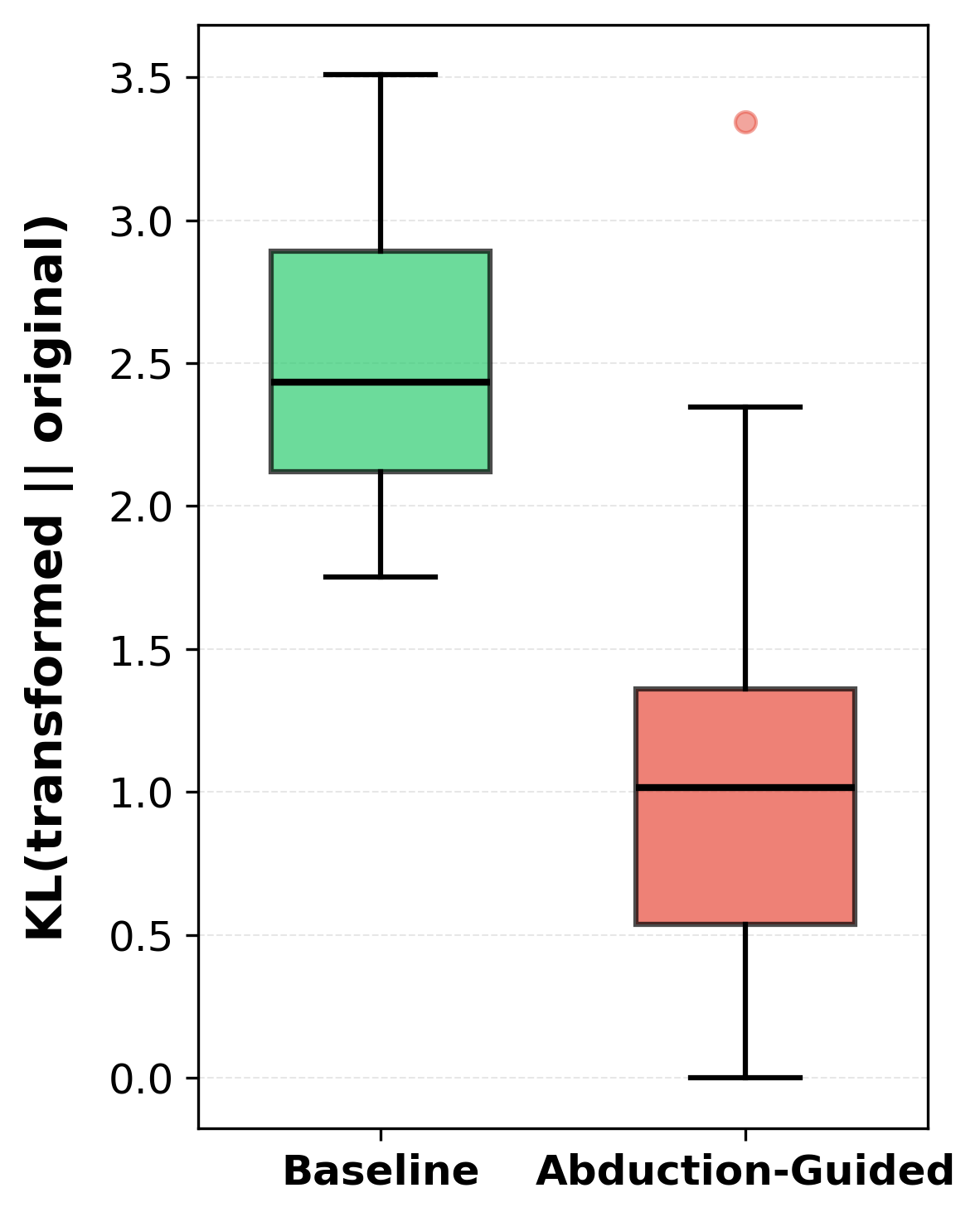}\\[0.0em]
\end{tabular}
&
\begin{tabular}{@{}c@{}}
    \textbf{Llama-4} \\[0em]
        \includegraphics[width=0.47\linewidth]{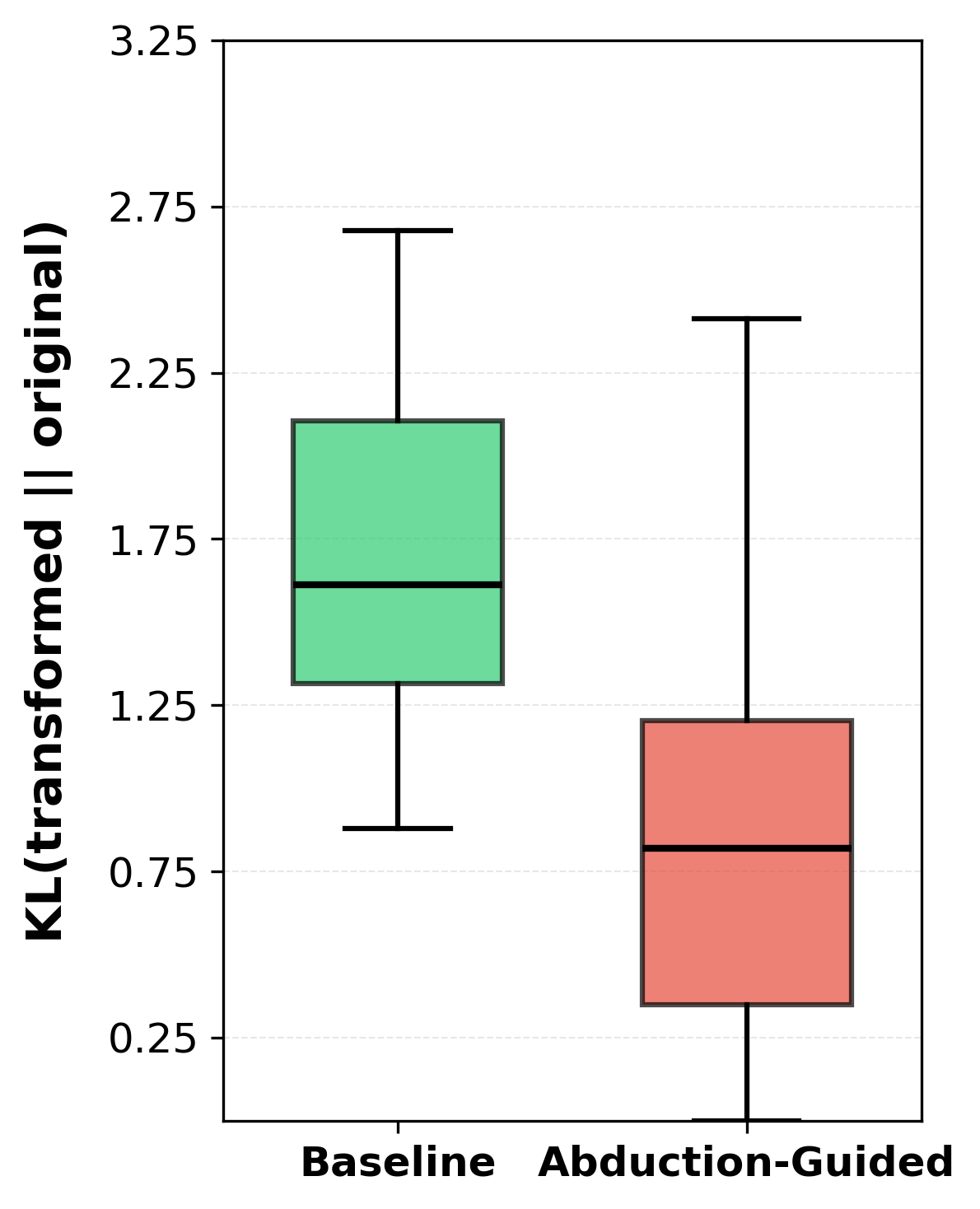}
\end{tabular}
\end{tabular}
\caption{The KL divergence across the stories for C$\rightarrow$I narrative shift. Top left: GPT-4o, Top right: Grok-4, Bottom left: Deepseek-R1, Bottom right: Llama-4.}
\label{fig:kl_comparison}
\end{figure}

We analyze the diagnosis scores of the transformed stories from all approaches in comparison to the original stories. Further, we relate these findings to the content similarity with the original stories of the transformations.

For C$\rightarrow$I narrative shift, the transformed stories from the abduction-guided method seems to give a higher individualistic rating with an improvement (over the original) of $75.44\%$ across all the models. 
Further, GPT has the highest improvement over original by $97.12\%$ as well as baseline by $55.88\%$, while maintaining an improvement in content similarity with the original stories by $40.40\%$ than the baseline (as seen in Figure~\ref{fig:kl_comparison}). 
Comparable results were observed for Grok and Deepseek. 
Although Llama only had an improvement of $7.63\%$ over the baseline, it preserved the similarity with original story by $49.16\%$. 
Moreover, the baseline consistently gave a higher divergence from the original stories while our approach preserves more information from the original story.

\begin{table}[t!]
\centering
\caption{Average improvement over original stories for I$\rightarrow$C transformation. Best performance per LLM is underlined; overall best is bolded.}
\label{tab:inverse_results}
\begin{tabular}{lll}
\toprule
\textbf{Approach} & \textbf{Base Model} & \textbf{ Improvement (\%)} \\
\midrule
Zero-Shot       & GPT-4o        & 75.00\\
    Abduction (Ours)       & GPT-4o   & \underline{95.39} \\
\midrule
Zero-Shot       & Grok-4          & 86.63 \\
Abduction (Ours)       &   Grok-4         & \underline{\textbf{97.32}} \\
\midrule
Zero-Shot       & Llama-4       & 71.43 \\
Abduction (Ours)       & Llama-4      & \underline{85.71} \\
\midrule
Zero-Shot       & DeepSeek-R1   & \underline{87.67} \\
Abduction (Ours)       & DeepSeek-R1  & 73.68 \\
\bottomrule
\end{tabular}

\end{table}

\begin{figure}[ht!]
\centering
\vspace{-1em}
\begin{tabular}{@{}c@{\hspace{0em}}c@{}}
\begin{tabular}{@{}c@{}}
    \textbf{GPT-4o} \\[0em]
        \includegraphics[width=0.47\linewidth]{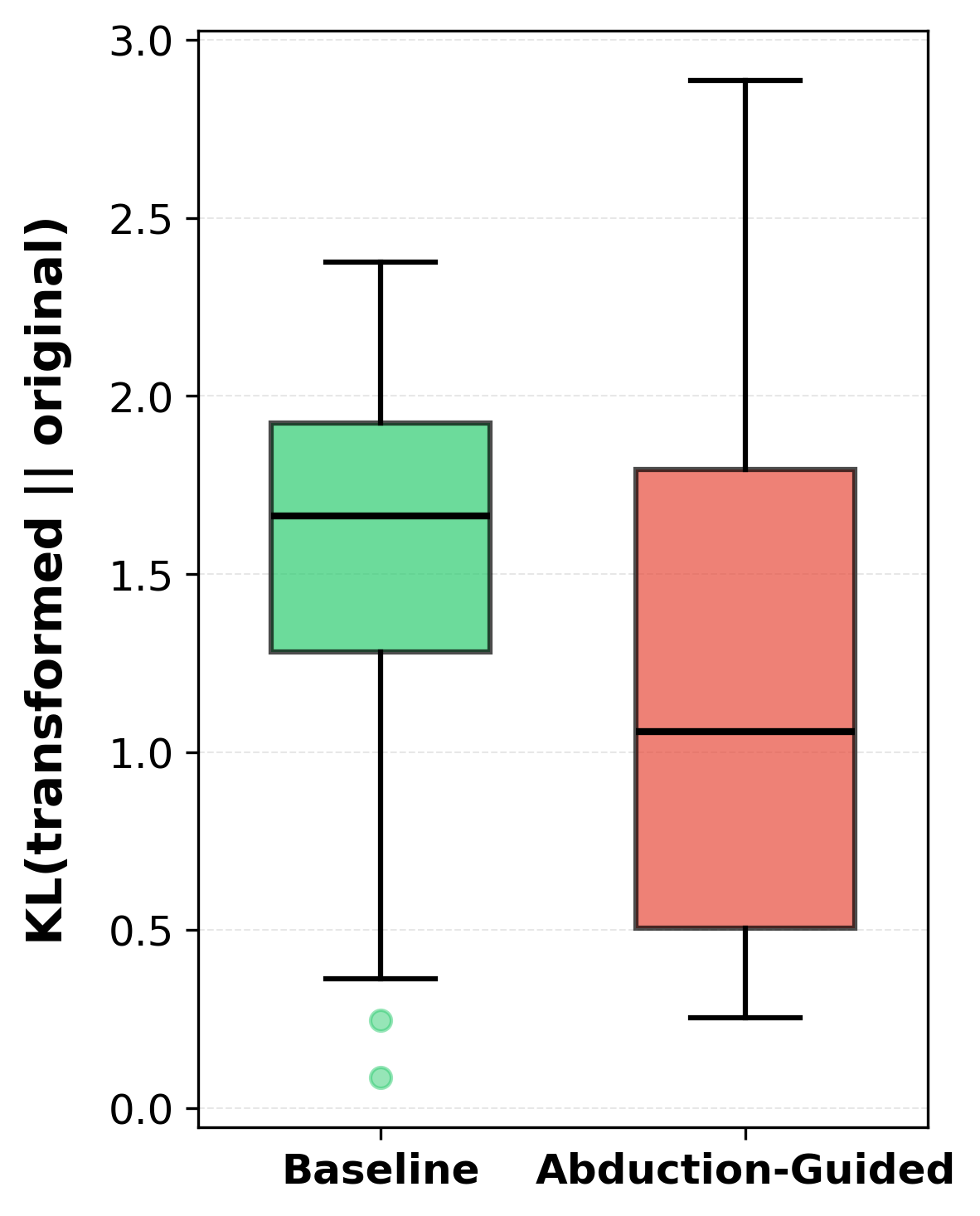}\\[0.0em]
\end{tabular}
&
\begin{tabular}{@{}c@{}}
    \textbf{Grok-4} \\[0em]
        \includegraphics[width=0.47\linewidth]{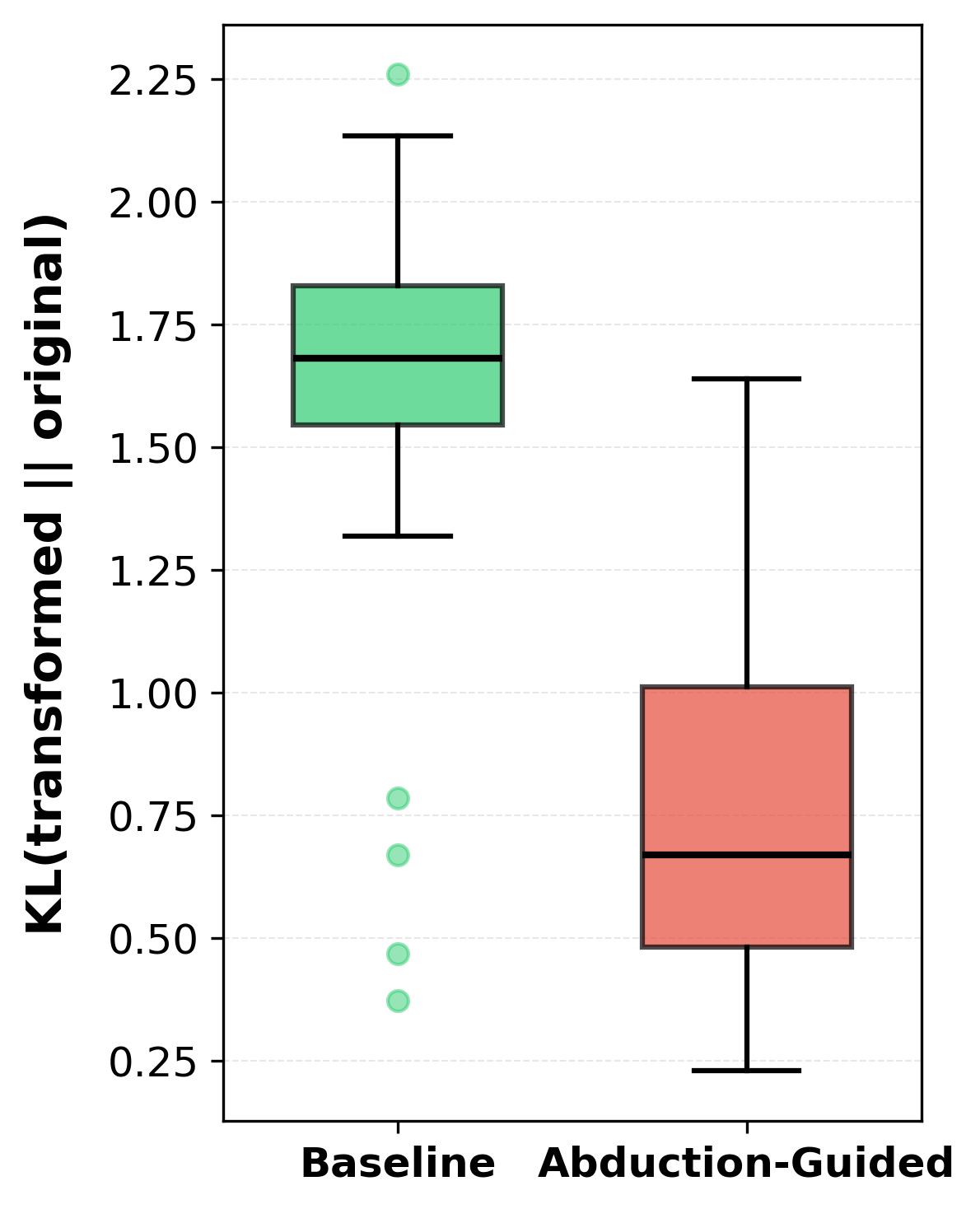}
\end{tabular}
\\[0em]
\begin{tabular}{@{}c@{}}
    \textbf{Deepseek-R1} \\[0em]
        \includegraphics[width=0.47\linewidth]{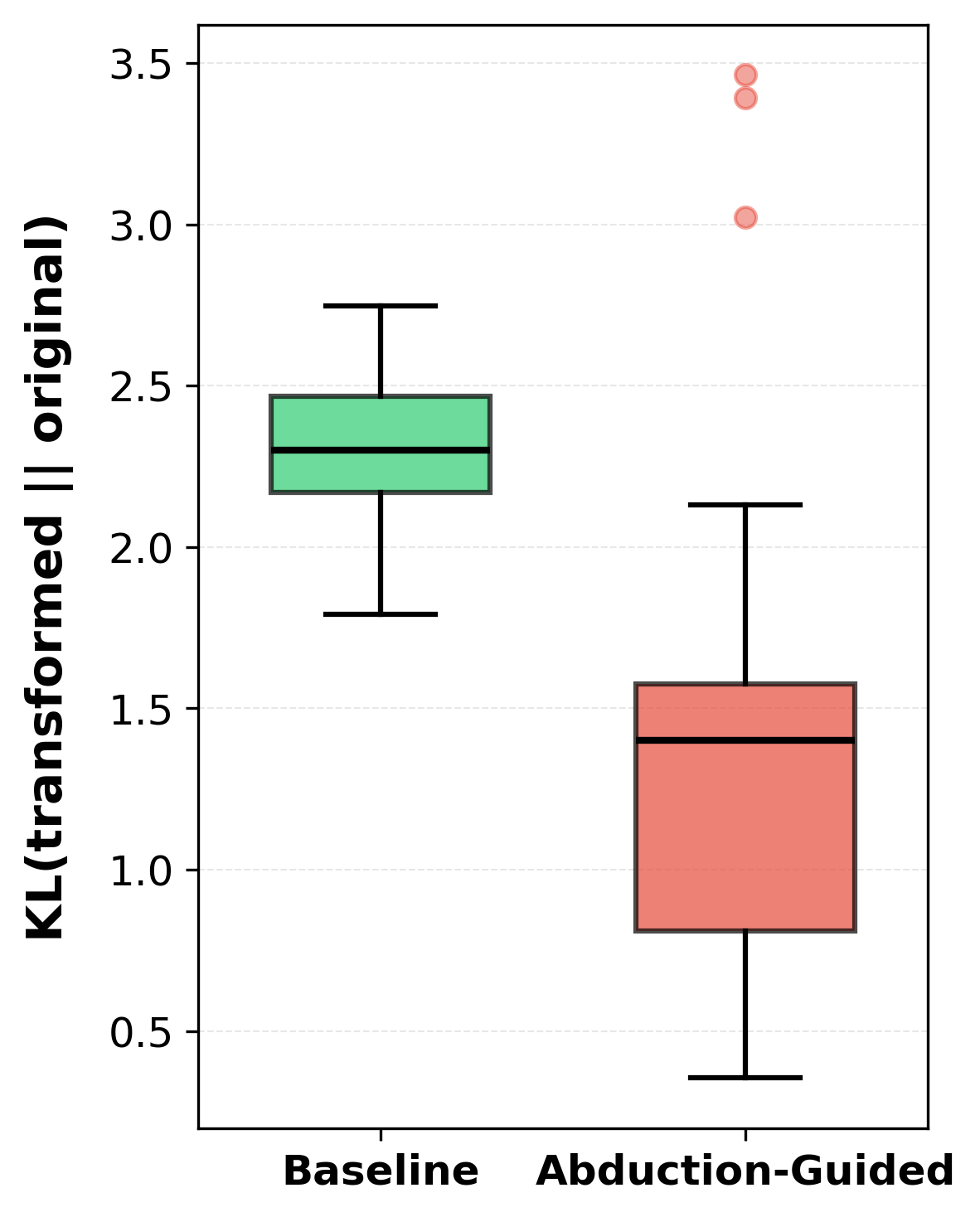}\\[0.0em]
\end{tabular}
&
\begin{tabular}{@{}c@{}}
    \textbf{Llama-4} \\[0em]
        \includegraphics[width=0.47\linewidth]{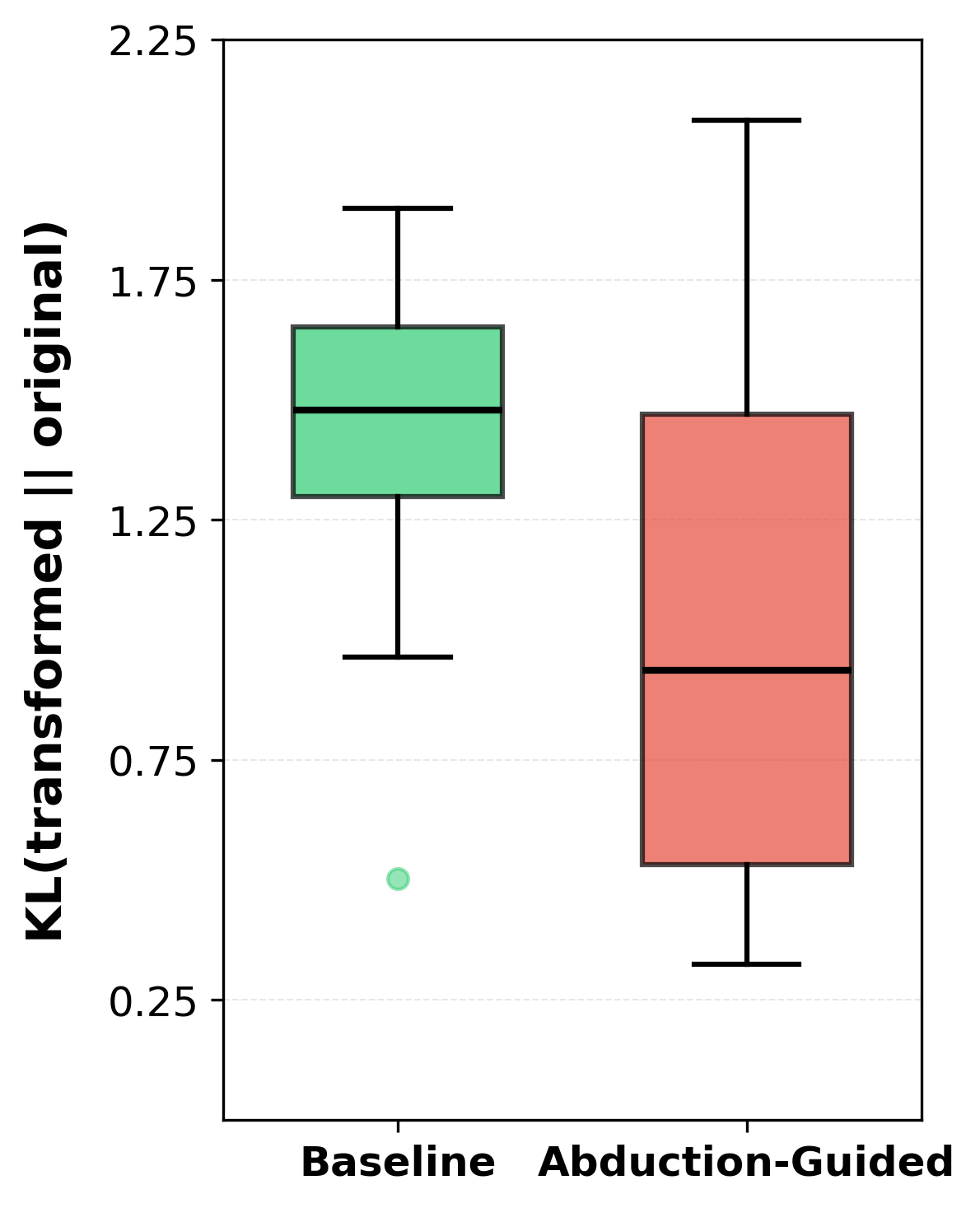}
\end{tabular}

\end{tabular}
\caption{The KL divergence across the stories for I$\rightarrow$C narrative shift. Top left: GPT-4o, Top right: Grok-4, Bottom left: Deepseek-R1, Bottom right: Llama-4.}
\label{fig:kl_comparison_inv}
\end{figure}

On the other hand, for I$\rightarrow$C narrative shift, we observe a $89.64\%$ average improvement across the models over original stories as seen in Table~\ref{tab:inverse_results}. Specifically, GPT outperforms the baseline by $15\%$ while preserving the similarity with the original stories by $36\%$ than the baseline (as seen in Figure \ref{fig:kl_comparison_inv}). Comparable results were observed for Grok and Llama. 
Although Deepseek outperformed our approach by $6\%$, it lost information form the original stories by $39.10\%$ compared to our approach. A study~\cite{deepseekcol} finds that DeepSeek produces more collectivist-oriented content than other LLMs, indicating a stronger collectivistic narrative orientation, which is further supported by its baseline performance in collectivistic story transformation in Table~\ref{tab:inverse_results}.
A qualitative example of I$\rightarrow$C is shown in Figure~\ref{fig:i_to_c_transform} demonstrating how abduction-guided transformation targets specific narrative chunks while preserving narrative structure. Similar C$\rightarrow$I example is provided in Appendix D.

\begin{figure}[ht!]
\begin{tabular}{@{}c@{\hspace{0em}}c@{}}
\begin{tabular}{@{}c@{}}
    \textbf{C $\rightarrow$ I} \\[0em]
        \includegraphics[width=0.49\linewidth]{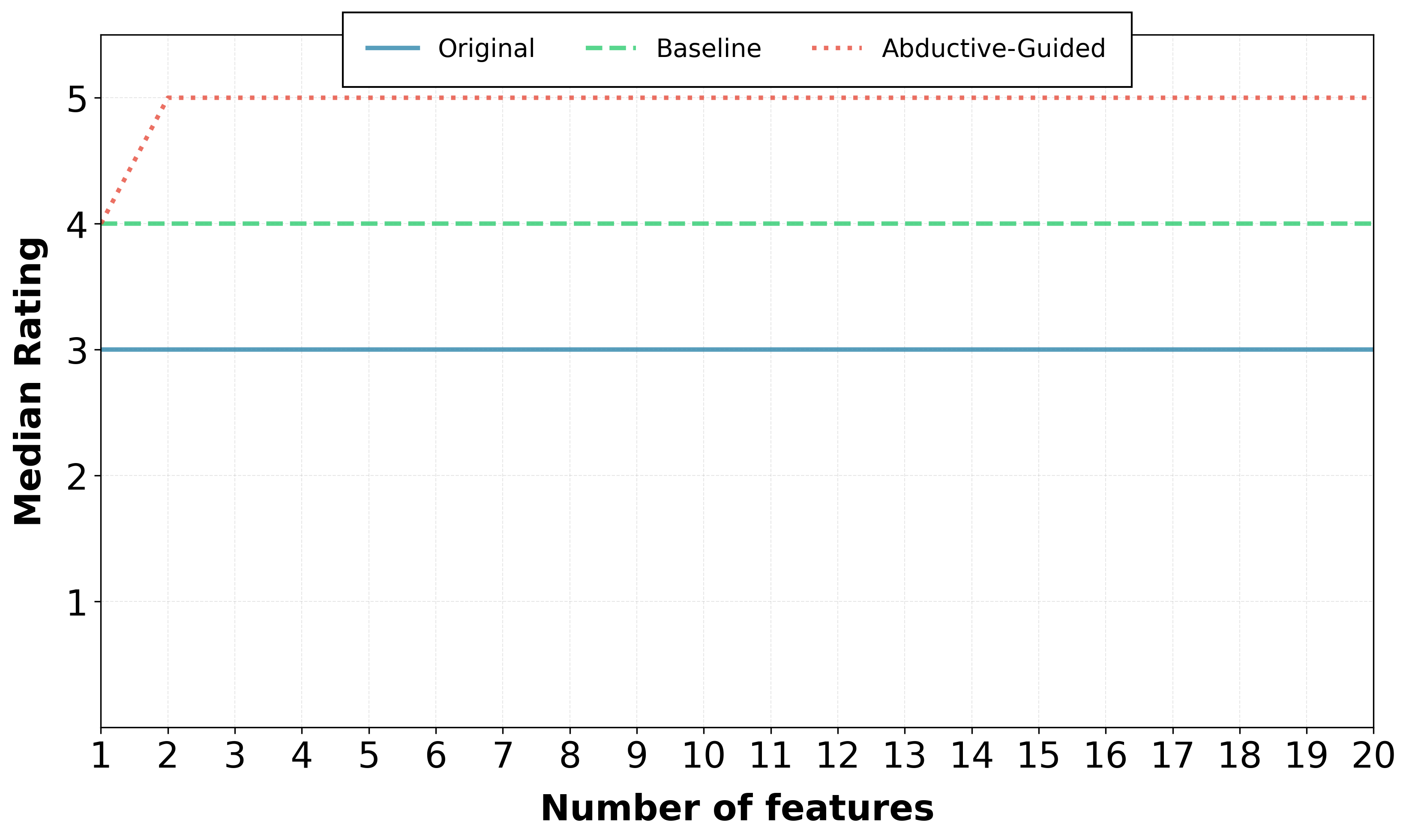}\\[0.0em]
\end{tabular}
&
\begin{tabular}{@{}c@{}}
    \textbf{I $\rightarrow$ C} \\[0em]
        \includegraphics[width=0.49\linewidth]{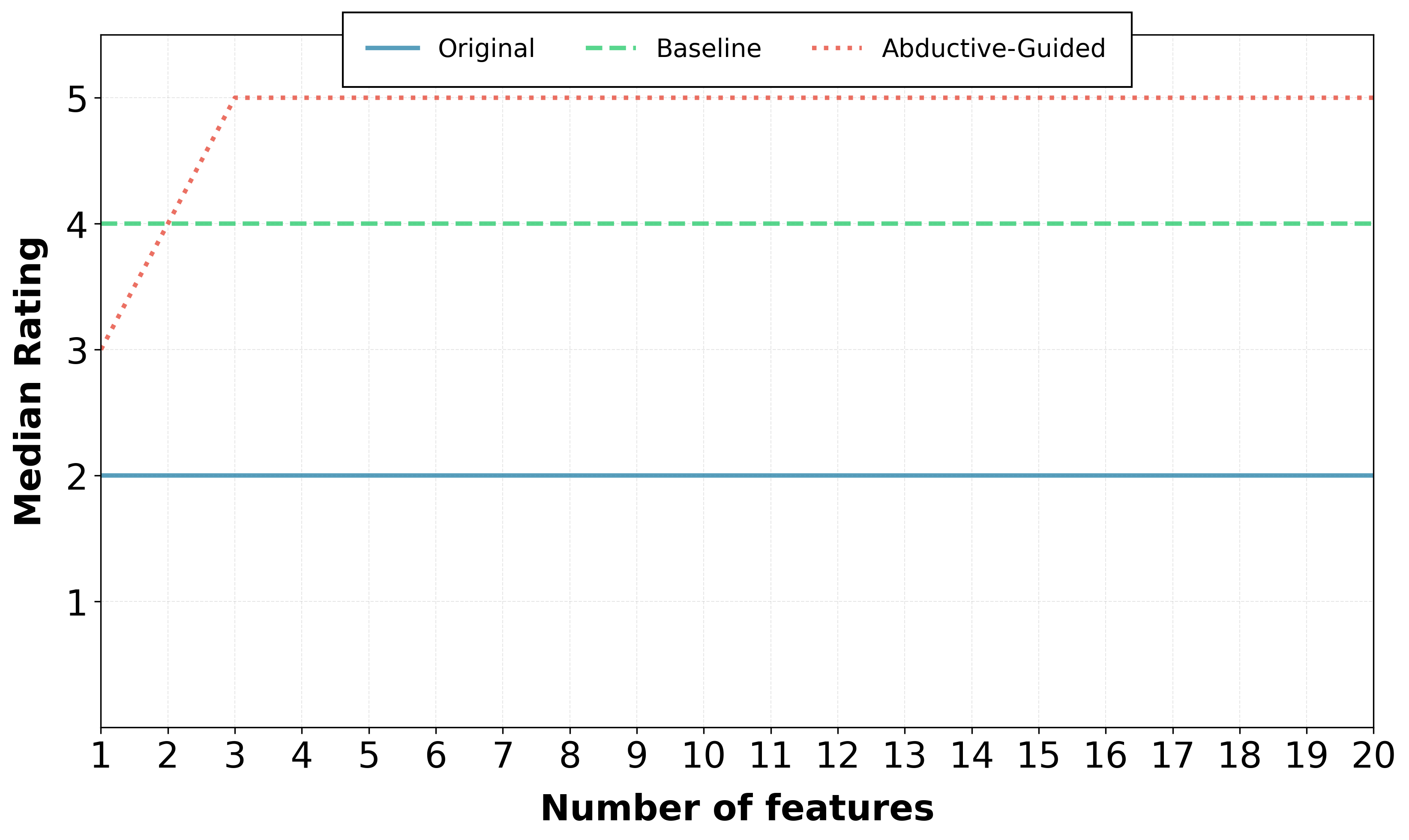}\\[0.0em]
\end{tabular}
\end{tabular}
\caption{Hyperparameter sensitivity. Left: C$\rightarrow$I, Right: I$\rightarrow$C.}
\label{fig:hyperparameter}
\end{figure}

We found that, based on the diagnosis with GPT-4o, for the C$\rightarrow$I shift, 9 stories in the collectivistic corpus were already closer to the individualistic end, while 6 were diagnosed as neutral narratives 
While, for the I$\rightarrow$C, shift, 2 stories were diagnosed as neutral narratives. All of these stories were successfully transformed into the target narrative. Additionally, our abduction-guided approach demonstrates efficiency by targeting only essential narrative segments. 

Overall, for majority models, abduction-guidance improves the baseline for shifting the narrative while maintaining fidelity. Deepseek as a baseline performs competitively with our approach to shift to a collectivistic narrative but loses content similarity from the original story. Moreover, all models perform better at transforming stories into collectivistic than into individualistic narrative.

\begin{figure}
    \centering
    \includegraphics[width=0.8\linewidth]{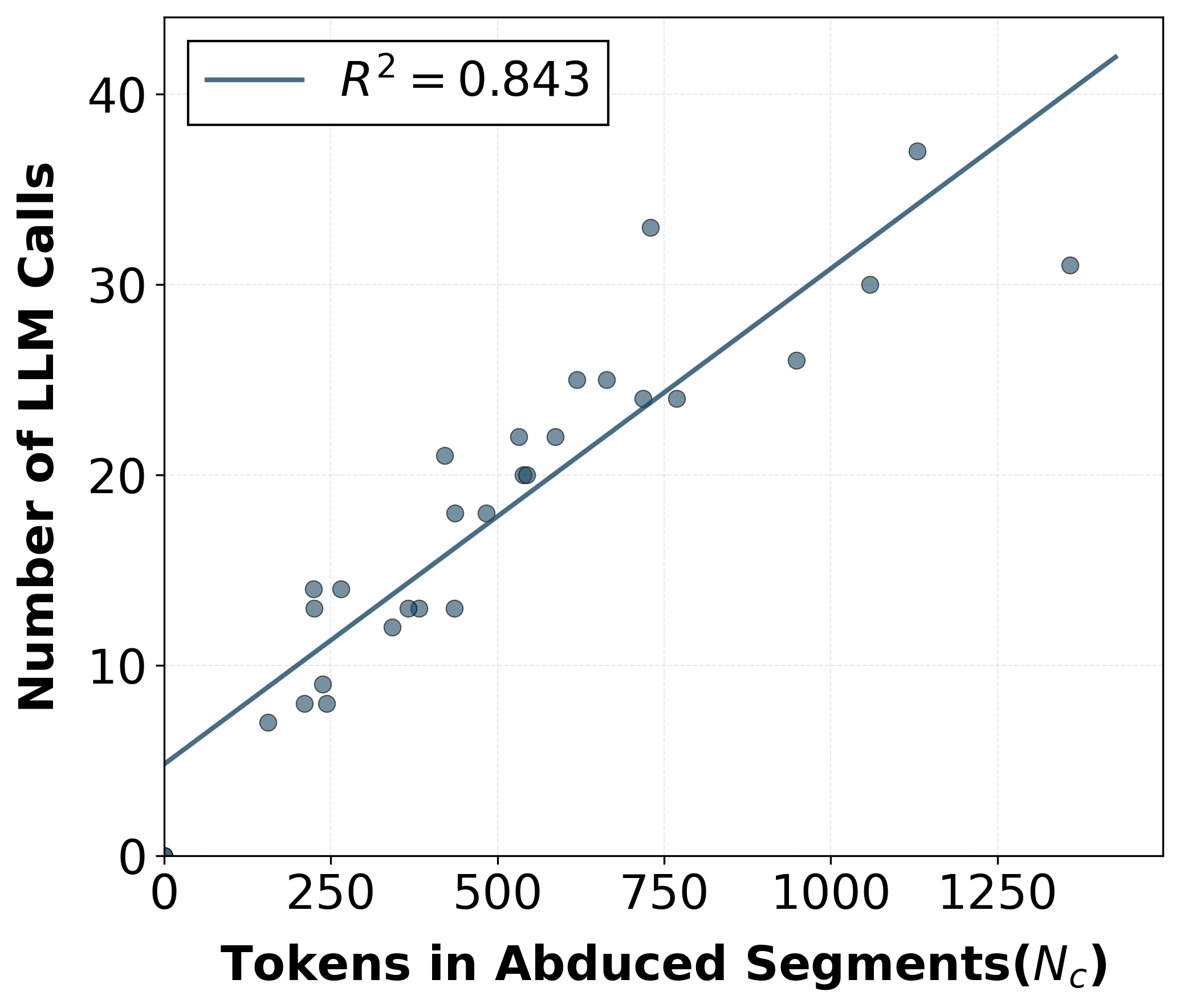}
    \caption{Total tokens of identified segments vs Number of LLM calls for gpt-4o (C$\rightarrow$I).}
    \label{fig:tokens_used}
\end{figure}
\paragraph{Hyperparameter Sensitivity. } We allow for a hyperparameter, number of features, which is the  number of features ($\psi$) in the solution of $\langle \mathcal{O},\mathcal{H},\Pi \rangle$ from Section~\ref{sec:logical_language}.
We report the diagnosis rating across different $\psi$ in Figure~\ref{fig:hyperparameter}. For C$\rightarrow$I, the performance of our approach saturates for $\psi = 2$ at $0.66$ times the original and $0.25$ times the baseline. For I$\rightarrow$C, our approach has a neutral narrative at $\psi = 1$ and shifts toward the target narrative with the increase in $\psi$ while outperforming the baseline before saturating for $\psi = 3$. It achieves the maximum score- $5$ while outperforming the original by $150\%$ and the baseline by $25\%$.

\paragraph{Token usage evaluation.}  
Our approach shows efficiency through targeted chunk modification. Across all stories, only a median of 32.11\% of tokens required transformation, with the remaining content preserved unchanged. Figure~\ref{fig:tokens_used} confirms the theoretical bound of Proposition~\ref{prop:cost}, showing a linear relationship between tokens in identified segments ($N_c$) and number of LLM calls with $R^2 = 0.843$.

\section{Conclusion}\label{sec:conclusion}
Aligning text to an audience's view is significant in journalism, social science and legal communities. While LLMs have been seen to excel in text generation, they seem to lack mechanisms for transforming cultural narrative hidden in the stories while preserving semantic fidelity. Narrative transformation is a high level reasoning task that requires social science domain knowledge, hence we propose a neurosymbolic approach that combines abductive reasoning with LLM based social science diagnostics to shift the narrative of a given story. Our approach shows significant improvement over zero-shot prompting where it either fails to transform the narrative or generate incoherent transformations.  Across both reasoning and non-reasoning models, our abduction-guided approach showed significant improvement over zero-shot baseline. For GPT-4o, our approach outperformed zero-shot narrative transformation by $55.88\%$ while maintaining $40.40\%$ more semantic similarity with the original stories for collectivistic-to-individualistic transformation. This work can be extended to transform the story to align it to a specific cultural audience. Another line of direction is to look at a richer temporal rule structure to inform the abductive transformation.

\section*{Acknowledgments}
This research was funded under the DARPA CONNECT program. Portions of this research relate to U.S. provisional patent application 64/097,607: Neurosymbolic Abductive Reasoning for Reframing Texts (June 24, 2026).
\bibliographystyle{named}
\bibliography{ijcai26}

@String{Computer = "{IEEE} Computer" }

@String{Academic = "Academic Press" }

@article{ks92,
title = {Theory of generalized annotated logic programming and its applications**A preliminary report on this research has appeared in [34].},
journal = {The Journal of Logic Programming},
volume = {12},
number = {4},
pages = {335-367},
year = {1992},
issn = {0743-1066},
doi = {https://doi.org/10.1016/0743-1066(92)90007-P},
url = {https://www.sciencedirect.com/science/article/pii/074310669290007P},
author = {Michael Kifer and V.S. Subrahmanian}
}

@article{Bavikadi_2025,
   title={Geospatial Trajectory Generation via Efficient Abduction: Deployment for Independent Testing},
   volume={416},
   ISSN={2075-2180},
   url={http://dx.doi.org/10.4204/EPTCS.416.24},
   DOI={10.4204/eptcs.416.24},
   journal={Electronic Proceedings in Theoretical Computer Science},
   publisher={Open Publishing Association},
   author={Bavikadi, Divyagna and Aditya, Dyuman and Parkar, Devendra and Shakarian, Paulo and Mueller, Graham and Parvis, Chad and Simari, Gerardo I.},
   year={2025},
   month=feb, pages={274–287} }

@article{Bavikadi2025SeacretAM,
  title={Sea-cret Agents: Maritime Abduction for Region Generation to Expose Dark Vessel Trajectories},
  author={Divyagna Bavikadi and Nathaniel Lee and Paulo Shakarian and Chad Parvis},
  journal={ArXiv},
  year={2025},
  volume={abs/2502.01503},
  url={https://api.semanticscholar.org/CorpusID:276107499}
}

@INPROCEEDINGS {ssTAI22,
author = { Shakarian, Paulo and Simari, Gerardo I. },
booktitle = { 2022 Fourth International Conference on Transdisciplinary AI (TransAI) },
title = {{ Extensions to Generalized Annotated Logic and an Equivalent Neural Architecture }},
year = {2022},
volume = {},
ISSN = {},
pages = {63-70},
doi = {https://doi.org/10.1109/TransAI54797.2022.00017},

publisher = {IEEE Computer Society},
address = {Los Alamitos, CA, USA},
month =sep}

@misc{aditya2023pyreason,
      title={PyReason: Software for Open World Temporal Logic}, 
      author={Dyuman Aditya and Kaustuv Mukherji and Srikar Balasubramanian and Abhiraj Chaudhary and Paulo Shakarian},
      year={2023},
      eprint={2302.13482},
        doi={https://doi.org/10.48550/arXiv.2302.13482},
      archivePrefix={arXiv},
      primaryClass={cs.LO}
}

@book{bercovitch1975puritan,
  title={The Puritan origins of the American self},
  author={Bercovitch, Sacvan},
  year={1975},
  publisher={Yale University Press}
}

@book{klinkowitz2004vonnegut,
  title={The Vonnegut Effect},
  author={Klinkowitz, Jerome},
  year={2004},
  publisher={Univ of South Carolina Press}
}

@incollection{markus2014culture,
  title={Culture and the self: Implications for cognition, emotion, and motivation},
  author={Markus, Hazel Rose and Kitayama, Shinobu},
  booktitle={College student development and academic life},
  pages={264--293},
  year={2014},
  publisher={Routledge}
}

@book{mcadams1993stories,
  title={The stories we live by: Personal myths and the making of the self},
  author={McAdams, Dan P},
  year={1993},
  publisher={Guilford press}
}

@book{moon2020small,
  title={A small boy and others: Imitation and initiation in American culture from Henry James to Andy Warhol},
  author={Moon, Michael},
  year={2020},
  publisher={Duke University Press}
}

@inproceedings{reif2022recipe,
  title={A recipe for arbitrary text style transfer with large language models},
  author={Reif, Emily and Ippolito, Daphne and Yuan, Ann and Coenen, Andy and Callison-Burch, Chris and Wei, Jason},
  booktitle={Proceedings of the 60th Annual Meeting of the Association for Computational Linguistics (Volume 2: Short Papers)},
  pages={837--848},
  year={2022}
}

@article{suzgun2022prompt,
  title={Prompt-and-rerank: A method for zero-shot and few-shot arbitrary textual style transfer with small language models},
  author={Suzgun, Mirac and Melas-Kyriazi, Luke and Jurafsky, Dan},
  journal={arXiv preprint arXiv:2205.11503},
  year={2022}
}

@article{mukherjee2024large,
  title={Are Large Language Models Actually Good at Text Style Transfer?},
  author={Mukherjee, Sourabrata and Ojha, Atul Kr and Dusek, Ondrej},
  journal={CoRR},
  year={2024}
}

@article{deepseekcol,
author = {Segerer, Robin},
year = {2025},
month = {05},
pages = {},
title = {Cultural Value Alignment in Large Language Models: A Prompt-based Analysis of Schwartz Values in Gemini, ChatGPT, and DeepSeek},
doi = {10.31234/osf.io/bmkzw_v1}
}

@article{triandis1995individualism,
  title={Individualism and collectivism West view Press},
  author={Triandis, HC},
  journal={Boulder, CO},
  year={1995}
}

@article{green2000role,
  title={The role of transportation in the persuasiveness of public narratives.},
  author={Green, Melanie C and Brock, Timothy C},
  journal={Journal of personality and social psychology},
  volume={79},
  number={5},
  pages={701},
  year={2000},
  publisher={American Psychological Association}
}

@book{hofstede1984culture,
  title={Culture's consequences: International differences in work-related values},
  author={Hofstede, Geert},
  volume={5},
  year={1984},
  publisher={sage}
}

@article{singelis1994measurement,
  title={The measurement of independent and interdependent self-construals},
  author={Singelis, Theodore M},
  journal={Personality and social psychology bulletin},
  volume={20},
  number={5},
  pages={580--591},
  year={1994},
  publisher={Sage Publications Sage CA: Thousand Oaks, CA}
}

@article{bruner1991narrative,
  title={The narrative construction of reality},
  author={Bruner, Jerome},
  journal={Critical inquiry},
  volume={18},
  number={1},
  pages={1--21},
  year={1991},
  publisher={University of Chicago Press}
}

@article{paul2016russian,
  title={The Russian “firehose of falsehood” propaganda model},
  author={Paul, Christopher and Matthews, Miriam},
  journal={Rand Corporation},
  volume={2},
  number={7},
  pages={1--10},
  year={2016},
  publisher={JSTOR}
}

@article{singelis1995culture,
  title={Culture, self, and collectivist communication: Linking culture to individual behavior},
  author={Singelis, Theodore M and Brown, William J},
  journal={Human communication research},
  volume={21},
  number={3},
  pages={354--389},
  year={1995},
  publisher={Oxford University Press}
}

@book{achebe2012hopes,
  title={Hopes and impediments: Selected essays},
  author={Achebe, Chinua},
  year={2012},
  publisher={Penguin}
}

@article{ba1981living,
  title={The living tradition},
  author={B{\^a}, Amadou Hampat{\'e}},
  journal={General history of Africa},
  volume={1},
  pages={166--205},
  year={1981},
  publisher={Heinemann Unesco}
}

@inproceedings{singh-etal-2025-llms,
    title = "Can {LLM}s Narrate Tabular Data? An Evaluation Framework for Natural Language Representations of Text-to-{SQL} System Outputs",
    author = "Singh, Jyotika  and
      Sun, Weiyi  and
      Agarwal, Amit  and
      Krishnamurthy, Viji  and
      Benajiba, Yassine  and
      Ravi, Sujith  and
      Roth, Dan",
    editor = "Potdar, Saloni  and
      Rojas-Barahona, Lina  and
      Montella, Sebastien",
    booktitle = "Proceedings of the 2025 Conference on Empirical Methods in Natural Language Processing: Industry Track",
    month = nov,
    year = "2025",
    address = "Suzhou (China)",
    publisher = "Association for Computational Linguistics",
    url = "https://aclanthology.org/2025.emnlp-industry.60/",
    doi = "10.18653/v1/2025.emnlp-industry.60",
    pages = "883--902",
    ISBN = "979-8-89176-333-3"
}

@inproceedings{mukherjee2025evaluating,
  title={Evaluating Text Style Transfer Evaluation: Are There Any Reliable Metrics?},
  author={Mukherjee, Sourabrata and Ojha, Atul Kr and McCrae, John Philip and Du{\v{s}}ek, Ond{\v{r}}ej},
  booktitle={Proceedings of the 2025 Conference of the Nations of the Americas Chapter of the Association for Computational Linguistics: Human Language Technologies (Volume 4: Student Research Workshop)},
  pages={418--434},
  year={2025}
}

@article{yue2025relate,
  title={Relate-sim: Leveraging turning point theory and llm agents to predict and understand long-term relationship dynamics through interactive narrative simulations},
  author={Yue, Matthew and Xu, Zhikun and Gupta, Vivek and Ha, Thao and Sharabi, Liesal and Zhou, Ben},
  journal={arXiv preprint arXiv:2510.00414},
  year={2025}
}

@inproceedings{chun2025conflictlens,
  title={Conflictlens: Llm-based conflict resolution training in romantic relationship},
  author={Chun, Jiwon and Zhang, Gefei and Xia, Meng},
  booktitle={Adjunct Proceedings of the 38th Annual ACM Symposium on User Interface Software and Technology},
  pages={1--3},
  year={2025}
}

@inproceedings{park2023generative,
  title={Generative agents: Interactive simulacra of human behavior},
  author={Park, Joon Sung and O'Brien, Joseph and Cai, Carrie Jun and Morris, Meredith Ringel and Liang, Percy and Bernstein, Michael S},
  booktitle={Proceedings of the 36th annual acm symposium on user interface software and technology},
  pages={1--22},
  year={2023}
}

@inproceedings{tian2024large,
  title={Are Large Language Models Capable of Generating Human-Level Narratives?},
  author={Tian, Yufei and Huang, Tenghao and Liu, Miri and Jiang, Derek and Spangher, Alexander and Chen, Muhao and May, Jonathan and Peng, Nanyun},
  booktitle={Proceedings of the 2024 Conference on Empirical Methods in Natural Language Processing},
  pages={17659--17681},
  year={2024}
}

@inproceedings{pei2024swag,
  title={SWAG: Storytelling with action guidance},
  author={Pei, Jonathan and Patel, Zeeshan and El-Refai, Karim and Li, Tianle},
  booktitle={Findings of the Association for Computational Linguistics: EMNLP 2024},
  pages={14086--14106},
  year={2024}
}

@inproceedings{piper-bagga-2024-using,
    title = "Using Large Language Models for Understanding Narrative Discourse",
    author = "Piper, Andrew  and
      Bagga, Sunyam",
    editor = "Lal, Yash Kumar  and
      Clark, Elizabeth  and
      Iyyer, Mohit  and
      Chaturvedi, Snigdha  and
      Brei, Anneliese  and
      Brahman, Faeze  and
      Chandu, Khyathi Raghavi",
    booktitle = "Proceedings of the 6th Workshop on Narrative Understanding",
    month = nov,
    year = "2024",
    address = "Miami, Florida, USA",
    publisher = "Association for Computational Linguistics",
    url = "https://aclanthology.org/2024.wnu-1.4/",
    doi = "10.18653/v1/2024.wnu-1.4",
    pages = "37--46"
}

@inproceedings{li-etal-2024-search,
    title = "In Search of the Long-Tail: Systematic Generation of Long-Tail Inferential Knowledge via Logical Rule Guided Search",
    author = "Li, Huihan  and
      Ning, Yuting  and
      Liao, Zeyi  and
      Wang, Siyuan  and
      Li, Xiang Lorraine  and
      Lu, Ximing  and
      Zhao, Wenting  and
      Brahman, Faeze  and
      Choi, Yejin  and
      Ren, Xiang",
    editor = "Al-Onaizan, Yaser  and
      Bansal, Mohit  and
      Chen, Yun-Nung",
    booktitle = "Proceedings of the 2024 Conference on Empirical Methods in Natural Language Processing",
    month = nov,
    year = "2024",
    address = "Miami, Florida, USA",
    publisher = "Association for Computational Linguistics",
    url = "https://aclanthology.org/2024.emnlp-main.140/",
    doi = "10.18653/v1/2024.emnlp-main.140",
    pages = "2348--2370"
}

@inproceedings{javadi2025can,
  title={Can stories help LLMs reason? curating information space through narrative},
  author={Javadi, Vahid Sadiri and Trippas, Johanne R and Lal, Yash Kumar and Flek, Lucie},
  booktitle={The 2nd Workshop on Analogical Abstraction in Cognition, Perception, and Language (Analogy-Angle II)},
  pages={92},
  year={2025}
}

@article{choi2025individualism,
  title={Individualism--collectivism and intergroup bias},
  author={Choi, Hoon-Seok},
  journal={Asian Journal of Social Psychology},
  volume={28},
  number={2},
  pages={e70010},
  year={2025},
  publisher={Wiley Online Library}
}

@inproceedings{rooein2025biased,
  title={Biased tales: Cultural and topic bias in generating children’s stories},
  author={Rooein, Donya and Zouhar, Vil{\'e}m and Nozza, Debora and Hovy, Dirk},
  booktitle={Proceedings of the 2025 Conference on Empirical Methods in Natural Language Processing},
  pages={52--72},
  year={2025}
}

@article{jenner2025using,
  title={Using large language models for narrative analysis: a novel application of generative AI},
  author={Jenner, Sarah and Raidos, Dimitris and Anderson, Emma and Fleetwood, Stella and Ainsworth, Ben and Fox, Kerry and Kreppner, Jana and Barker, Mary},
  journal={Methods in Psychology},
  volume={12},
  pages={100183},
  year={2025},
  publisher={Elsevier}
}

@article{tao2025cat,
  title={CAT-LLM: Style-enhanced Large Language Models with Text Style Definition for Chinese Article-style Transfer},
  author={Tao, Zhen and Xi, Dinghao and Li, Zhiyu and Tang, Liumin and Xu, Wei},
  journal={ACM Transactions on Knowledge Discovery from Data},
  volume={19},
  number={7},
  pages={1--33},
  year={2025},
  publisher={ACM New York, NY}
}

@incollection{GREEN20241,
title = {Chapter One - Narrative transportation: How stories shape how we see ourselves and the world},
editor = {Bertram Gawronski},
series = {Advances in Experimental Social Psychology},
publisher = {Academic Press},
volume = {70},
pages = {1-82},
year = {2024},
issn = {0065-2601},
doi = {https://doi.org/10.1016/bs.aesp.2024.03.002},
url = {https://www.sciencedirect.com/science/article/pii/S0065260124000145},
author = {Melanie C. Green and Markus Appel}
}
 \appendix
\newpage

\section{Formal Proof of Runtime Result} \label{app:pf}
We show theoretically the bound on the number of LLM calls with Proposition~\ref{prop:cost_appendix} and that it increases linearly with the number of abduced story chunks.

\begin{proposition}\label{prop:cost_appendix}
    Let $s$ be a story represented as a sequence of tokens and $\chunkStory$ denote 
the set of chunks from $s$. For each chunk $c \in \chunkStory$, 
let $\text{size}(c)$ denote the number of tokens in $c$.  
Then total number of LLM transformations is bounded by following quantity.
\[
N_c = \sum_{c \in \chunkStory} \text{size}(c)
\]


\end{proposition}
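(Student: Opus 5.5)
The plan is to bound the LLM calls in Algorithm~\ref{alg:iterative_abduction} by the number of distinct chunks that can ever be selected for transformation, and then bound that number by the total token count of $s$. The only place \textit{llm\_transform} is called is the inner loop over $C_t$ (lines 6--9). Diagnosis calls are part of the observation step, not transformations, so I exclude them. The total number of transformations is therefore $\sum_t |C_t|$.

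\textbf{Step 1: each chunk is transformed at most once per iteration.} By construction, $C_t \subseteq \domainSet_{c,\observationSet_t}$, the chunks of the current story. This holds because every explanation atom $\cfeat(c,f)_{\mu'}\in\explanationSet_t\subseteq\hypothesisSet$ refers to a chunk already occurring in $\observationSet_t$, by the definition of $\hypothesisSet$. Since $C_t$ is a set, each selected chunk triggers exactly one call to \textit{llm\_transform}, so the number of calls in iteration $t$ is $|C_t|\le|\chunkStory|$.

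\textbf{Step 2: chunks to tokens.} Every chunk is a nonempty subsequence of tokens, so $\text{size}(c)\ge 1$ for every $c\in\chunkStory$. Hence
\[
|C_t| = \sum_{c\in C_t} 1 \le \sum_{c\in C_t}\text{size}(c) \le \sum_{c\in\chunkStory}\text{size}(c)=N_c .
\]
This inequality also gives the linear dependence on the tokens of the abduced chunks that the text claims, via the middle term $\sum_{c\in C_t}\text{size}(c)$.

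\textbf{Step 3: the iterations, which is the main obstacle.} Summing naively over the $T_{\max}$ iterations only yields $T_{\max}N_c$. To reach $N_c$ itself, I would argue that a token belongs to at most one abduced chunk across the whole run. First I would try the following argument. A chunk selected at iteration $t$ has its feature annotations raised toward the target, since $\mu'\ge\mu$ by the definition of $\hypothesisSet$ and the consistency discussion in Section~\ref{sec:logical_language}. Abducing that chunk again would give $\parsimony = 0$, because its annotation is already at the top of the lattice, so it would not be selected again by $\phi_k$.

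If that argument fails, I would fall back to either of two options. The first is to treat $T_{\max}$ (fixed at 3 in the experiments) as a constant absorbed into the bound. The second is to interpret ``total'' as per iteration, which Steps 1--2 already establish. In either case the charging argument is the same: charge each \textit{llm\_transform} call to the tokens of the chunk it rewrites, so that the calls are bounded by the tokens charged, which are at most $N_c$.
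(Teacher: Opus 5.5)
Your Steps 1--2 correctly prove what the paper actually proves, by a different route. The paper's proof defines $k$ as the number of \textit{llm\_transform} calls in a \emph{single} iteration. It reads $N_c$ as the number of tokens in the identified (abduced) chunks and shows $k \le N_c$ by contradiction. It writes the story's total token count as $N' = N_c + N_{\overline{c}}$, turns $k > N_c$ into $k + N_{\overline{c}} > N'$, and declares this impossible because calls plus non-identified tokens ``cannot exceed'' $N'$. That last step is the conclusion restated: it silently assumes each call can be charged to its own identified token. The decomposition of $N'$ also presumes the identified chunks are disjoint.

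Your direct count supplies the missing justification. The inner loop makes exactly one call per element of the set $C_t$, and every chunk has at least one token, so $k = |C_t| \le \sum_{c\in C_t}\text{size}(c)$. This needs no disjointness. It covers both readings of $N_c$ at once: the tokens of the abduced chunks, as in the proof and Figure~\ref{fig:tokens_used}, and all of $\chunkStory$, as the statement is literally written. It also shows the bound is loose, since the true count is the number of selected chunks.

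Drop the first argument of Step 3; it does not work, for three reasons.
\begin{itemize}
\item $\phi_k$ is an $\argmax$ under the constraint $|\explanationSet| = k$. It therefore returns $k$ atoms whenever $k$ candidates exist, even if all have parsimony $0$.
\item $C_t$ collects any chunk carrying \emph{some} feature in $\explanationSet_t$. A chunk whose targeted feature is saturated can still be selected through another feature.
\item $\observationSet_{t+1}$ is a fresh LLM diagnosis of the rewritten story $s_{t+1}$. Its rewritten chunks are new constants outside the original $\chunkStory$, need not be rated at the top of the lattice, and can be longer than what they replaced.
\end{itemize}
So no bound of the total over all $T_{\max}$ iterations by the original $N_c$ follows, and absorbing $T_{\max}$ gives $T_{\max}N_c$, not $N_c$. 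Commit to the per-iteration reading, with $s$ taken to be the current story $s_t$ so that $C_t \subseteq \domainSet^{s_t}_c$. That is exactly the paper's claim.
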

\begin{proof}
    We define k as the number of LLM transformation calls in a single iteration of abduction-guided approach. Assume BWOC that $k > N_c$. 

We define $N'$ as the total tokens in story $s$:

\[
N' = N_c + N_{\overline{c}}
\]

where $N_{\overline{c}}$ denotes tokens not in any identified chunk. 
\[
N_c = N' - N_{\overline{c}}
\]
By our assumption, $k > N_c$,
\[
k > N' - N_{\overline{c}}
\]
Adding $N_{\overline{c}}$ on both sides,
\[
k + N_{\overline{c}} > N'
\]
However, by definition $N'$ is the maximum possible tokens in the story and thus $k + N_{\overline{c}}$ (LLM calls in addition to non-identified 
tokens) cannot exceed $N'$. This is a contradiction. Therefore, $k \leq N_c$.
\end{proof}

\section{Narrative Diagnostic Survey}\label{app:surv}
 We provide complete set of individualistic narrative based diagnostic questions in Figure~\ref{fig:individualistic_diagnostic_questions} and collectivistic narrative based diagnostic questions in Figure~\ref{fig:collectivistic_diagnostic_questions}. 

\begin{figure*}[h!]
    \centering
    \fbox{
    \begin{minipage}{0.95\textwidth}
        \small
        
        \textbf{Feature 1: Protagonist-Centered Focus}
        
        ``Is the narrative driven primarily by one character's experiences, choices, and inner life?''
        
        \vspace{0.1cm}
        
        \textbf{Feature 2: Internal Goals}
        
        ``Does the protagonist pursue personal ambitions (e.g., self-actualization, fame, self-expression) over collective objectives?''
        
        \vspace{0.1cm}
        
        \textbf{Feature 3: Decision-Driven Plot}
        
        ``Are key turning points in the story determined by the protagonist's own decisions rather than group mandates or fate?''
        
        \vspace{0.1cm}
        
        \textbf{Feature 4: Self-Reliance}
        
        ``Does the protagonist overcome obstacles through their own resourcefulness rather than relying on communal support?''
        
        \vspace{0.1cm}
        
        \textbf{Feature 5: Individual Accolades}
        
        ``Are awards, status, or recognition attributed primarily to the single protagonist rather than to a team or ensemble?''
        
        \vspace{0.1cm}
        
        \textbf{Feature 6: Meritocracy Emphasis}
        
        ``Is success portrayed as earned by the protagonist's talent, hard work, or innate brilliance, rather than by lineage or group standing?''
        
        \vspace{0.1cm}
        
        \textbf{Feature 7: ``Man vs. Self/World'' Conflict}
        
        ``Is the central struggle internal (e.g., self doubt, identity) or between the protagonist and external forces, rather than group conflicts?''
        
        \vspace{0.1cm}
        
        \textbf{Feature 8: Solo Confrontations}
        
        ``Do climactic showdowns feature the lone protagonist facing the antagonist or obstacle, rather than a collaborative effort?''
        
        \vspace{0.1cm}
        
        \textbf{Feature 9: Inner Journey}
        
        ``Is the character arc centered on the protagonist discovering their own values, strengths, or purpose?''
        
        \vspace{0.1cm}
        
        \textbf{Feature 10: Uniqueness \& Self-Expression}
        
        ``Are characters celebrated for what makes them unique (quirks, dreams) or for `being true to themselves'?''
        
        \vspace{0.1cm}
        
        \textbf{Feature 11: Self-Construal}
        
        ``Does the narrative present the self as stable and independent, defined by personal traits rather than social roles?''
        
        \vspace{0.1cm}
        
        \textbf{Feature 12: Behavioral Guidance}
        
        ``Are actions in the story guided by the protagonist's personal attitudes and preferences rather than social norms or group expectations?''
        
        \vspace{0.1cm}
        
        \textbf{Feature 13: Relationship Orientation}
        
        ``Does the story depict relationships as optional and based on mutual benefit rather than duty and loyalty?''
        
        \vspace{0.1cm}
        
        \textbf{Feature 14: Primary Conflict}
        
        ``Is the central conflict about asserting one's individual identity or resisting conformity?''
        
        \vspace{0.1cm}
        
        \textbf{Feature 15: Resolution Style}
        
        ``Does the story resolve conflicts by standing up for personal rights and achieving justice rather than through compromise and reconciliation?''
        
        \vspace{0.1cm}
        
        \textbf{Feature 16: Moral Emphasis}
        
        ``Does the narrative emphasize autonomy, personal integrity, or self-actualization as moral virtues?''
        
        \vspace{0.1cm}
        
        \textbf{Feature 17: Relationship Framing}
        
        ``Does the story frame relationships as non-essential, allowing the protagonist to pursue goals independently?''
        
        \vspace{0.1cm}
        
        \textbf{Feature 18: Vertical Individualism}
        
        ``Does the story accept social inequality as a natural consequence of individual achievement?''
        
        \vspace{0.1cm}
        
        \textbf{Feature 19: Personal Ethics over Group Norms}
        
        ``Does the protagonist's personal code or conscience take precedence over cultural or familial expectations?''
        
        \vspace{0.1cm}
        
        \textbf{Feature 20: Self-Actualization Climax}
        
        ``Does the emotional payoff come from the protagonist's personal breakthrough instead of restoring group harmony?''
        
    \end{minipage}
    }
    \caption{Complete individualistic diagnostic questions: 20 questions assessing individualistic narrative orientation.}
    \label{fig:individualistic_diagnostic_questions}
\end{figure*}

\begin{figure*}[h!]
    \centering
    \fbox{
    \begin{minipage}{0.95\textwidth}
        \small
        
        \textbf{Feature 1: Protagonist-Centered Focus}
        
        ``Is the narrative driven primarily by the group's or community's shared experiences, collective choices, and communal identity?''
        
        \vspace{0.1cm}
        
        \textbf{Feature 2: Internal Goals}
        
        ``Do the characters pursue group ambitions (e.g., community well-being, family honor, shared success) over individual goals?''
        
        \vspace{0.1cm}
        
        \textbf{Feature 3: Decision-Driven Plot}
        
        ``Are key turning points in the story determined by collective decisions, group mandates, or community traditions rather than by a single individual's decision?''
        
        \vspace{0.1cm}
        
        \textbf{Feature 4: Self-Reliance}
        
        ``Do the characters overcome obstacles through communal support, collective action, or shared resources rather than individual effort?''
        
        \vspace{0.1cm}
        
        \textbf{Feature 5: Individual Accolades}
        
        ``Are awards, status, or recognition attributed primarily to the group, ensemble, or community effort rather than to an individual?''
        
        \vspace{0.1cm}
        
        \textbf{Feature 6: Meritocracy Emphasis}
        
        ``Is success portrayed as resulting from group support, family lineage, or communal contributions rather than solely individual talent?''
        
        \vspace{0.1cm}
        
        \textbf{Feature 7: ``Man vs. Self/World'' Conflict}
        
        ``Is the central struggle between the group and external forces or within group cohesion, rather than an individual's internal conflict?''
        
        \vspace{0.1cm}
        
        \textbf{Feature 8: Solo Confrontations}
        
        ``Do climactic showdowns feature collaborative group efforts or collective confrontation, rather than a lone individual?''
        
        \vspace{0.1cm}
        
        \textbf{Feature 9: Inner Journey}
        
        ``Is the character arc centered on the group discovering shared values, collective strengths, or communal purpose?''
        
        \vspace{0.1cm}
        
        \textbf{Feature 10: Uniqueness \& Self-Expression}
        
        ``Are characters celebrated for conforming to group norms, fulfilling social roles, or contributing to the collective identity rather than uniqueness?''
        
        \vspace{0.1cm}
        
        \textbf{Feature 11: Self-Construal}
        
        ``Does the narrative present the self as interdependent, defined by social roles and relationships rather than solely personal traits?''
        
        \vspace{0.1cm}
        
        \textbf{Feature 12: Behavioral Guidance}
        
        ``Are actions in the story guided by social norms, group expectations, or communal values rather than personal preferences?''
        
        \vspace{0.1cm}
        
        \textbf{Feature 13: Relationship Orientation}
        
        ``Does the story depict relationships as based on duty, loyalty, obligation, and collective well-being rather than solely mutual benefit?''
        
        \vspace{0.1cm}
        
        \textbf{Feature 14: Primary Conflict}
        
        ``Is the central conflict about maintaining social harmony, fulfilling collective roles, or adhering to group norms rather than asserting individuality?''
        
        \vspace{0.1cm}
        
        \textbf{Feature 15: Resolution Style}
        
        ``Does the story resolve conflicts through compromise, reconciliation, and restoring group harmony rather than individual vindication?''
        
        \vspace{0.1cm}
        
        \textbf{Feature 16: Moral Emphasis}
        
        ``Does the narrative emphasize group solidarity, communal responsibility, or collective welfare as moral virtues?''
        
        \vspace{0.1cm}
        
        \textbf{Feature 17: Relationship Framing}
        
        ``Does the story frame relationships as essential, requiring individuals to consider the impact of their actions on family and community?''
        
        \vspace{0.1cm}
        
        \textbf{Feature 18: Vertical Individualism}
        
        ``Does the story emphasize equality, shared prosperity, and collective welfare over social inequality and individual hierarchy?''
        
        \vspace{0.1cm}
        
        \textbf{Feature 19: Personal Ethics over Group Norms}
        
        ``Does adherence to cultural norms, family expectations, or communal codes take precedence over personal preferences?''
        
        \vspace{0.1cm}
        
        \textbf{Feature 20: Self-Actualization Climax}
        
        ``Does the emotional payoff come from restoring group harmony, collective well-being, or communal success rather than an individual breakthrough?''
        
    \end{minipage}
    }
    \caption{Complete collectivistic diagnostic questions: 20 questions assessing collectivistic narrative orientation.}
    \label{fig:collectivistic_diagnostic_questions}
\end{figure*}

\section{Details on Systems Prompts} \label{app:pmts}

We present the system prompt used for transforming the story with the abduced changes.The LLM prompt to transform particular identified story chunk given the current narrative and target narrative is provided in the Figure~\ref{fig:transformation_prompt}.

\begin{figure*}[ht!]
\centering
\begin{tcolorbox}[
    colback=gray!5,
    colframe=black!75,
    boxrule=0.5pt,
    arc=2pt,
    left=6pt,
    right=6pt,
    top=6pt,
    bottom=6pt,
    fontupper=\small\ttfamily
]
\textbf{System:} You are given a \textcolor{blue}{[source\_type]} story below.

\vspace{0.3em}
\textbf{** start of story **}

\vspace{0.2em}
\textcolor{blue}{[story]}

\vspace{0.2em}
\textbf{** end of story **}

\vspace{0.3em}
Your goal is to make the story more \textcolor{blue}{[target\_type]}. To make it more \textcolor{blue}{[target\_type]}, you will update only the selected segment from the story which is provided below.

\vspace{0.3em}
\textbf{Selected segment:} ** \textcolor{blue}{[segment]} **

\vspace{0.3em}
Now, rewrite the whole story by updating only the selected segment to make the story more \textcolor{blue}{[target\_type]}. Don't change other parts of the story, and just output the rewritten story, nothing else.
\end{tcolorbox}
\caption{Transformation prompt template. Placeholders \textcolor{blue}{[source\_type]}, \textcolor{blue}{[target\_type]}, \textcolor{blue}{[story]}, and \textcolor{blue}{[segment]} are instantiated for each identified segment.}
\label{fig:transformation_prompt}
\end{figure*}

\section{Additional Experimental Results}\label{app:res}
In figure~\ref{fig:med_diag_comparison}, \ref{fig:med_diag_comparison_grok4}, \ref{fig:med_diag_comparison_deepseek}, \ref{fig:med_diag_comparison_llama}, we report the median of diagnosis scores and the blue solid bar shows the diagnosis scores, which forms a lower envelope indicating the narrative of the original stories to be on the collectivistic (left column) or individualistic (right column) end. Similarly, \ref{fig:mode_diag_comparison}, \ref{fig:mode_diag_comparison_grok4}, \ref{fig:mode_diag_comparison_deepseek}, \ref{fig:mode_diag_comparison_llama} shows the mode and 
\ref{fig:mean_diag_comparison}, \ref{fig:mean_diag_comparison_grok4}, \ref{fig:mean_diag_comparison_deepseek}, \ref{fig:mean_diag_comparison_llama} median of the diagnosis scores for all of the LLMs.

A qualitative example of C$\rightarrow$I is shown in Figure~\ref{fig:c_i_transform} demonstrating how abduction-guided transformation targets specific narrative chunks while preserving narrative structure.

\begin{figure*}[t!]
\begin{tabular}{@{}c@{\hspace{0em}}c@{}}
\begin{tabular}{@{}c@{}}
    \textbf{C$\rightarrow$I} \\[0em]
        \includegraphics[width=0.49\linewidth]{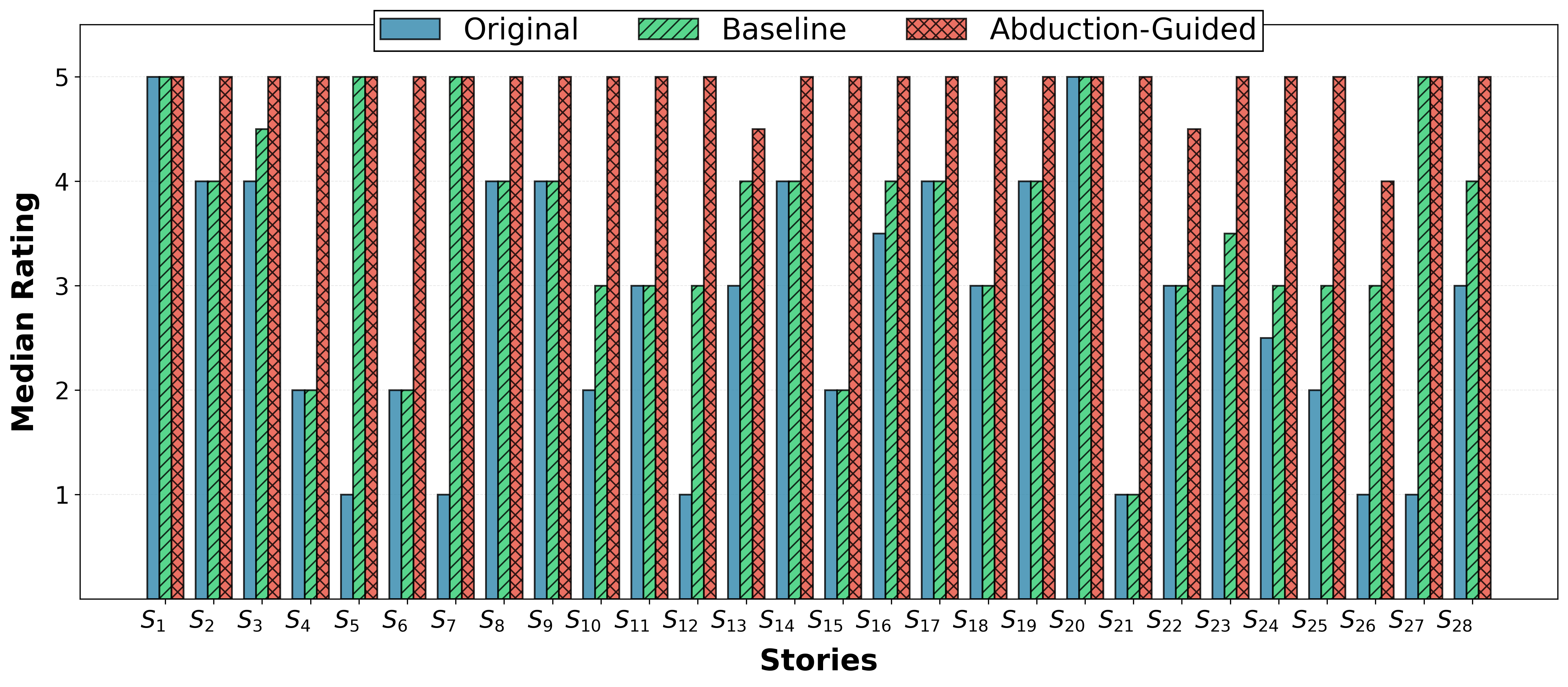}\\[0.0em]
\end{tabular}
&
\begin{tabular}{@{}c@{}}
    \textbf{I$\rightarrow$C} \\[0em]
        \includegraphics[width=0.49\linewidth]{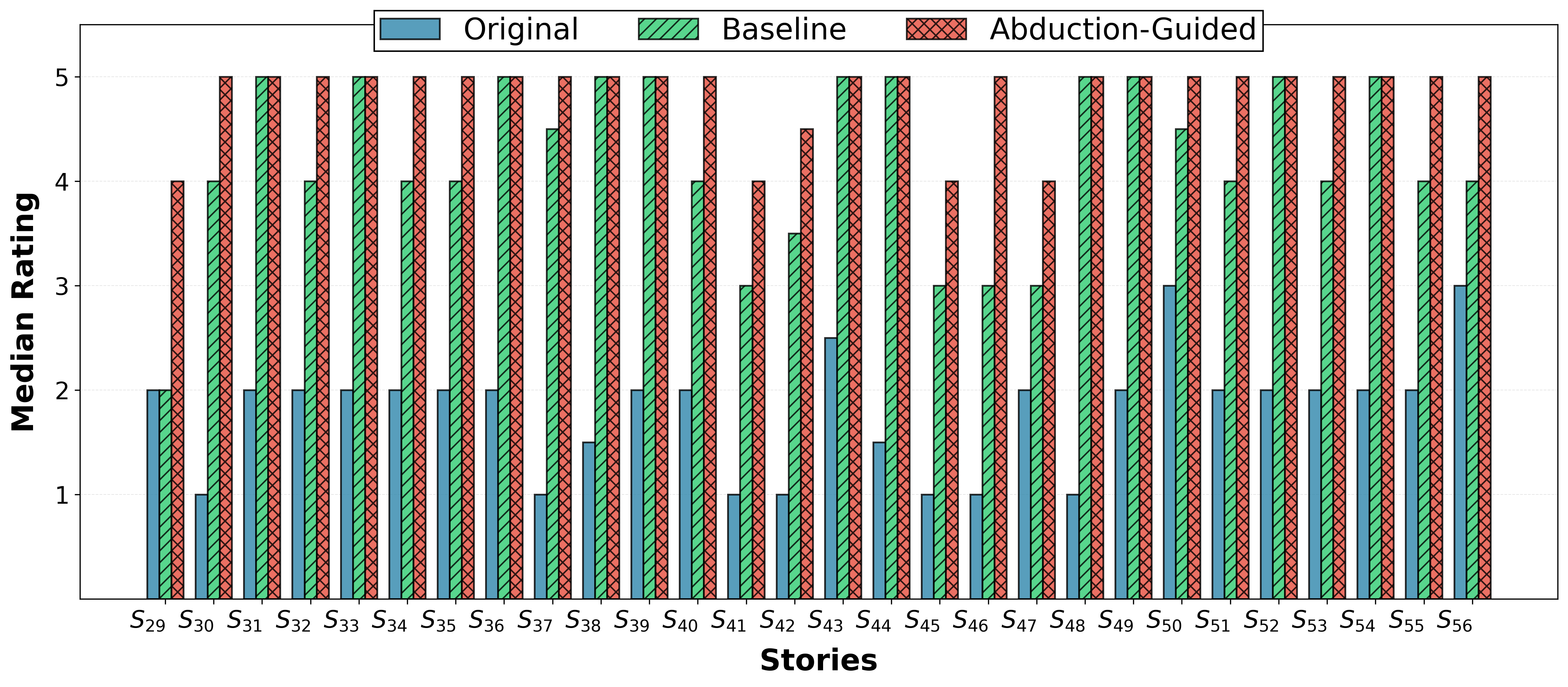}\\[0.0em]
\end{tabular}
\end{tabular}
\caption{The median of diagnosis ratings for GPT-4o across the stories in the dataset. Left: C$\rightarrow$I, Right: I$\rightarrow$C. Note that for both directions higher rating is better.}
\label{fig:med_diag_comparison}
\end{figure*}
\begin{figure*}[!t]
\begin{tabular}{@{}c@{\hspace{0em}}c@{}}
\begin{tabular}{@{}c@{}}
    \textbf{C$\rightarrow$I} \\[0em]
        \includegraphics[width=0.49\linewidth]{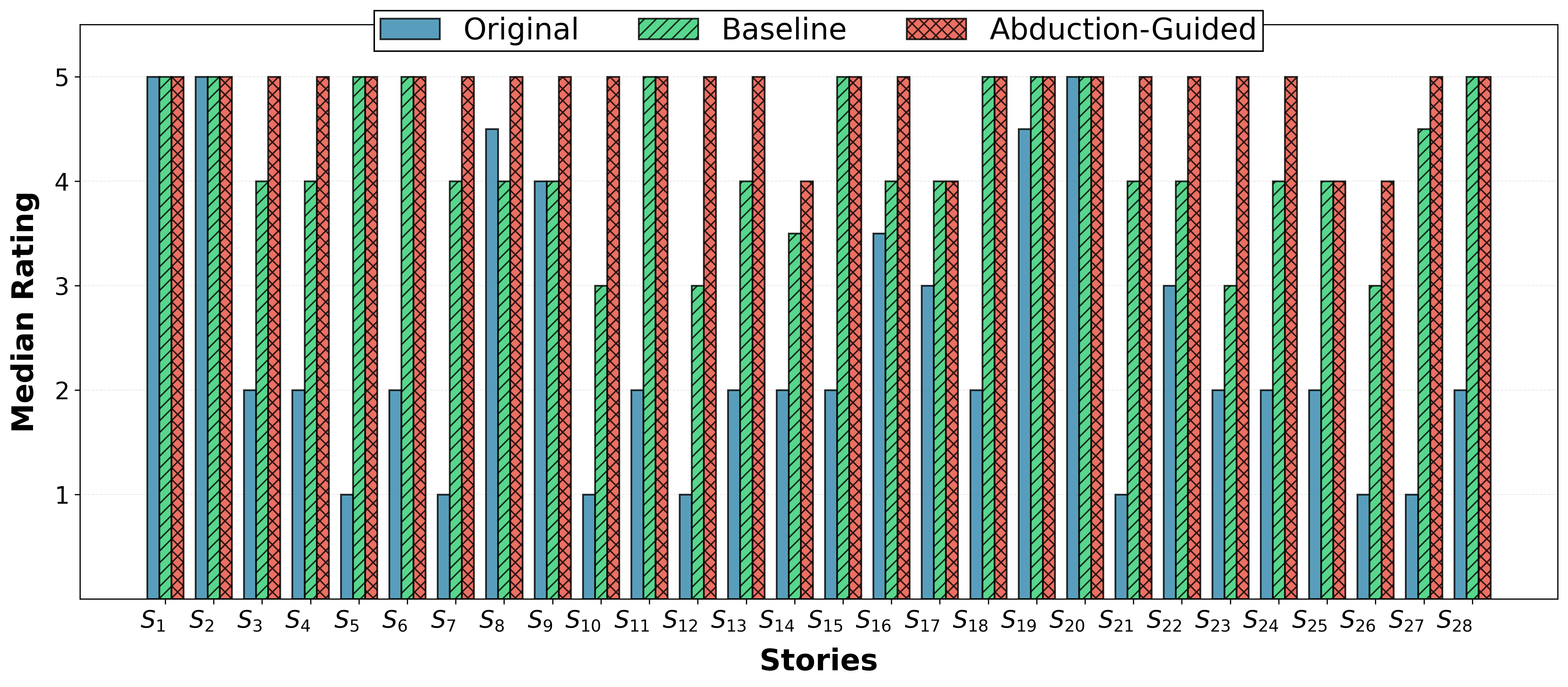}\\[0.0em]
\end{tabular}
&
\begin{tabular}{@{}c@{}}
    \textbf{I$\rightarrow$C} \\[0em]
        \includegraphics[width=0.49\linewidth]{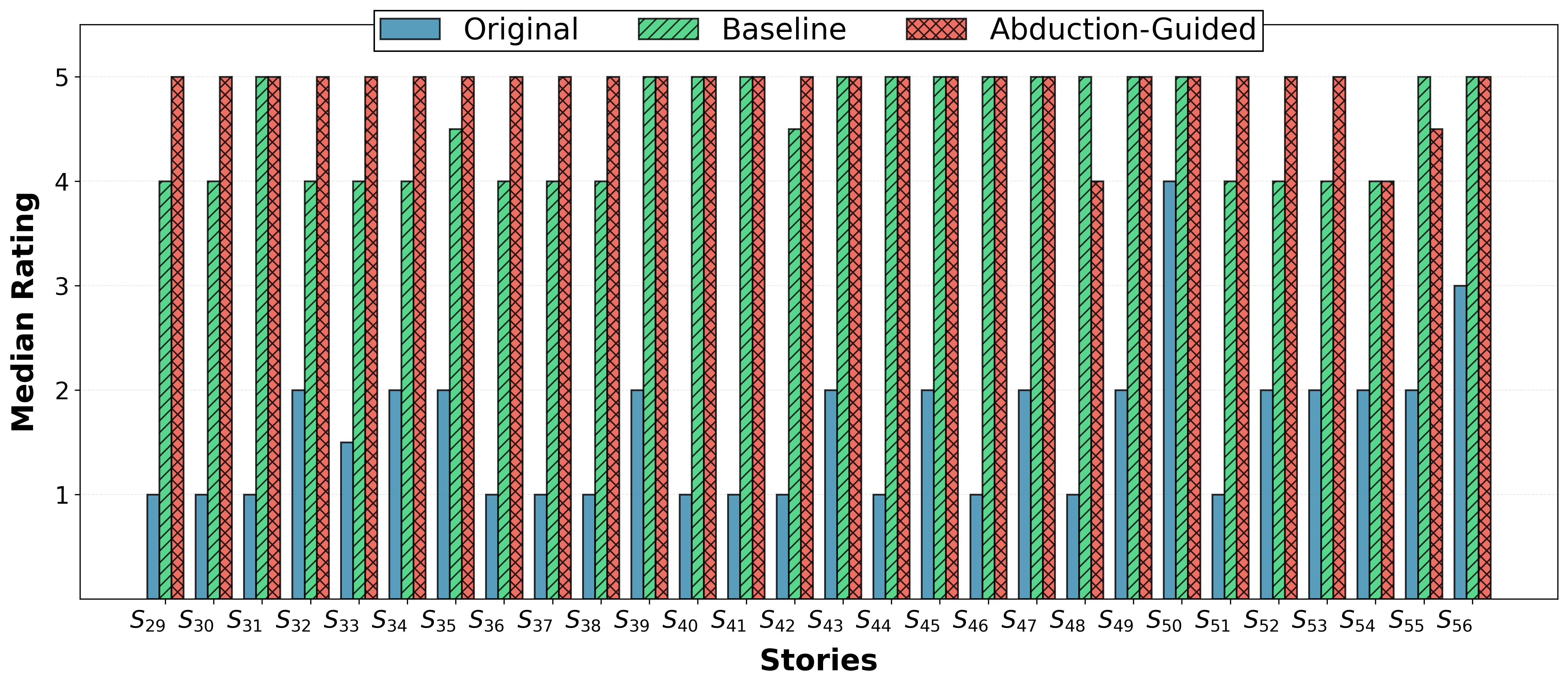}\\[0.0em]
\end{tabular}
\end{tabular}
\caption{The median of diagnosis ratings for Grok-4 across the stories in the dataset. Left: C$\rightarrow$I, Right: I$\rightarrow$C. Note that for both directions higher rating is better.}
\label{fig:med_diag_comparison_grok4}
\end{figure*}

\begin{figure*}[!t]
\begin{tabular}{@{}c@{\hspace{0em}}c@{}}
\begin{tabular}{@{}c@{}}
    \textbf{C$\rightarrow$I} \\[0em]
        \includegraphics[width=0.49\linewidth]{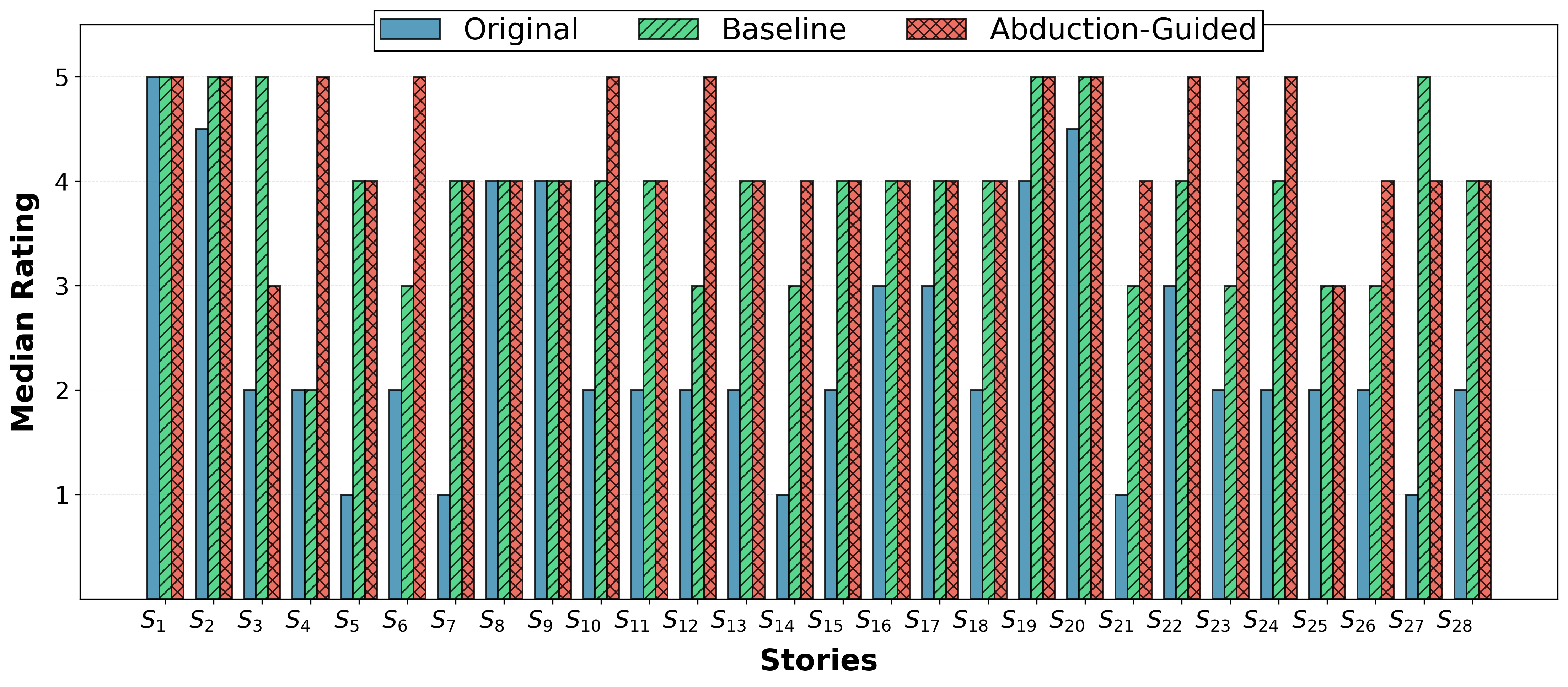}\\[0.0em]
\end{tabular}
&
\begin{tabular}{@{}c@{}}
    \textbf{I$\rightarrow$C} \\[0em]
        \includegraphics[width=0.49\linewidth]{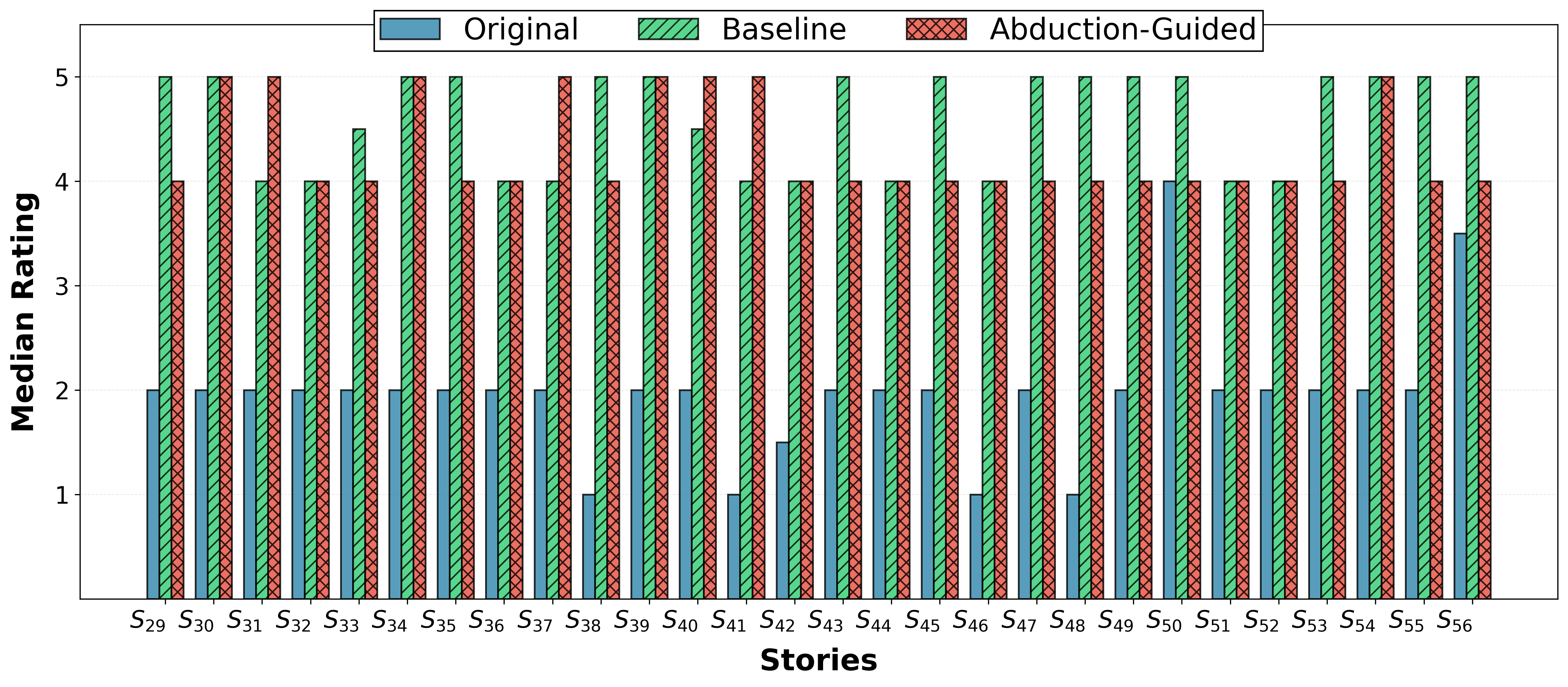}\\[0.0em]
\end{tabular}
\end{tabular}
\caption{The median of diagnosis ratings for Deepseek-R1 across the stories in the dataset. Left: C$\rightarrow$I, Right: I$\rightarrow$C. Note that for both directions higher rating is better.}
\label{fig:med_diag_comparison_deepseek}
\end{figure*}

\begin{figure*}[!t]
\begin{tabular}{@{}c@{\hspace{0em}}c@{}}
\begin{tabular}{@{}c@{}}
    \textbf{C$\rightarrow$I} \\[0em]
        \includegraphics[width=0.49\linewidth]{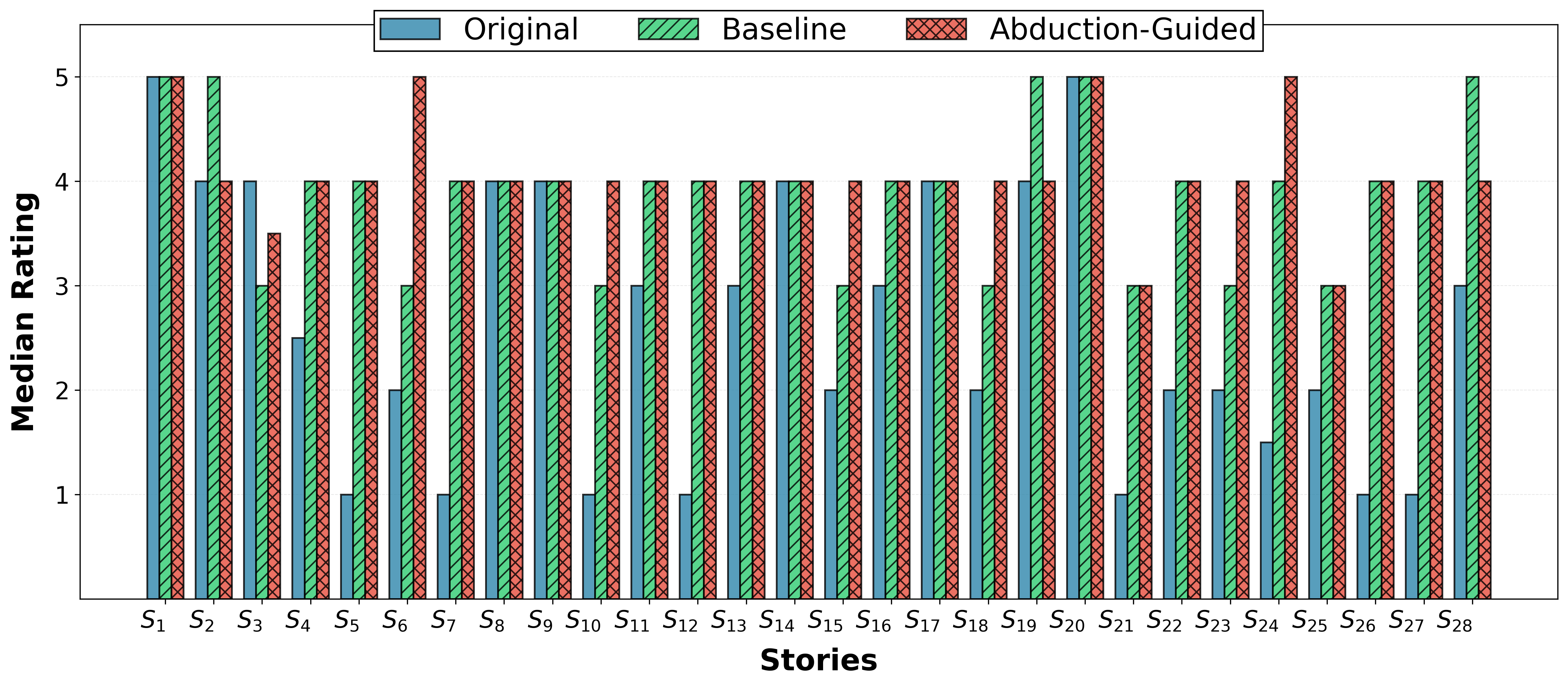}\\[0.0em]
\end{tabular}
&
\begin{tabular}{@{}c@{}}
    \textbf{I$\rightarrow$C} \\[0em]
        \includegraphics[width=0.49\linewidth]{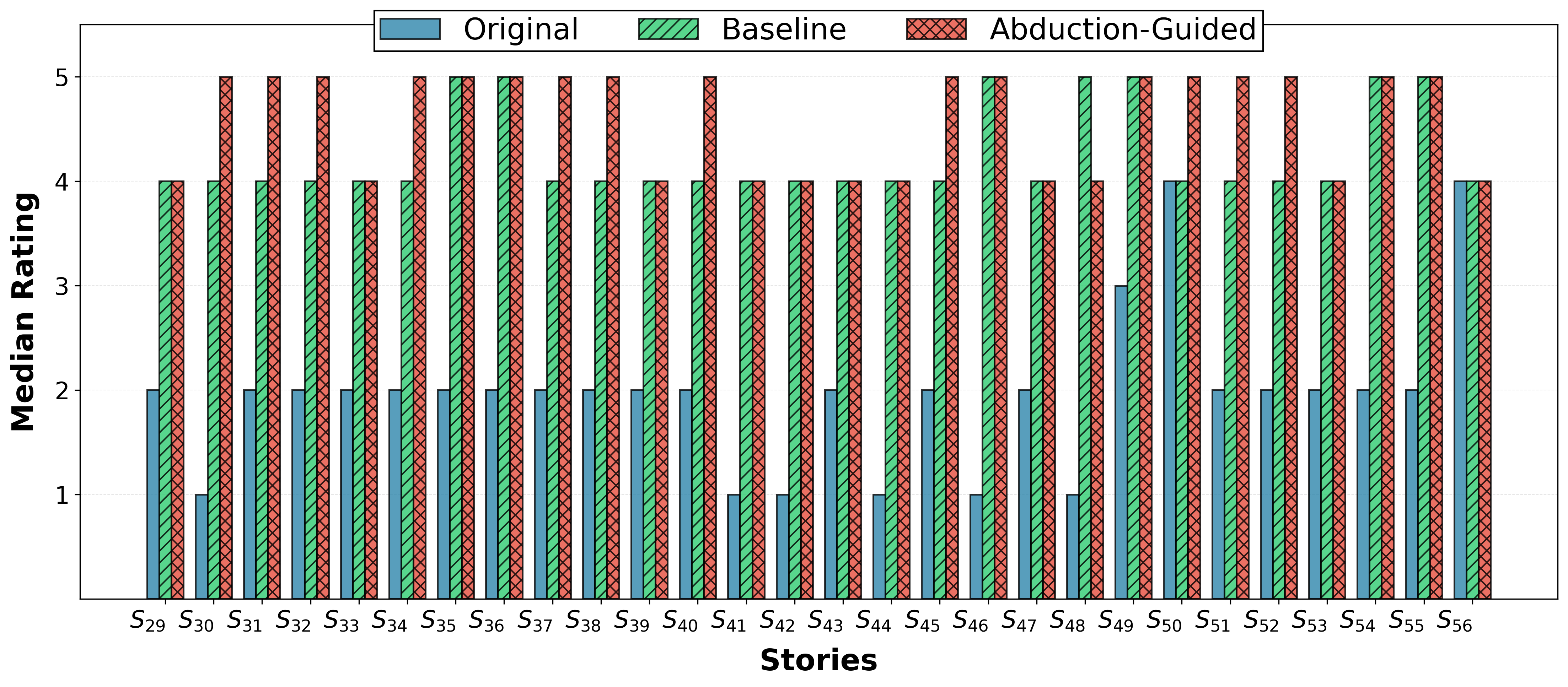}\\[0.0em]
\end{tabular}
\end{tabular}
\caption{The median of diagnosis ratings for Llama across the stories in the dataset. Left: C$\rightarrow$I, Right: I$\rightarrow$C. Note that for both directions higher rating is better.}
\label{fig:med_diag_comparison_llama}
\end{figure*}

\begin{figure*}[t!]
\begin{tabular}{@{}c@{\hspace{0em}}c@{}}
\begin{tabular}{@{}c@{}}
    \textbf{C$\rightarrow$I} \\[0em]
        \includegraphics[width=0.49\linewidth]{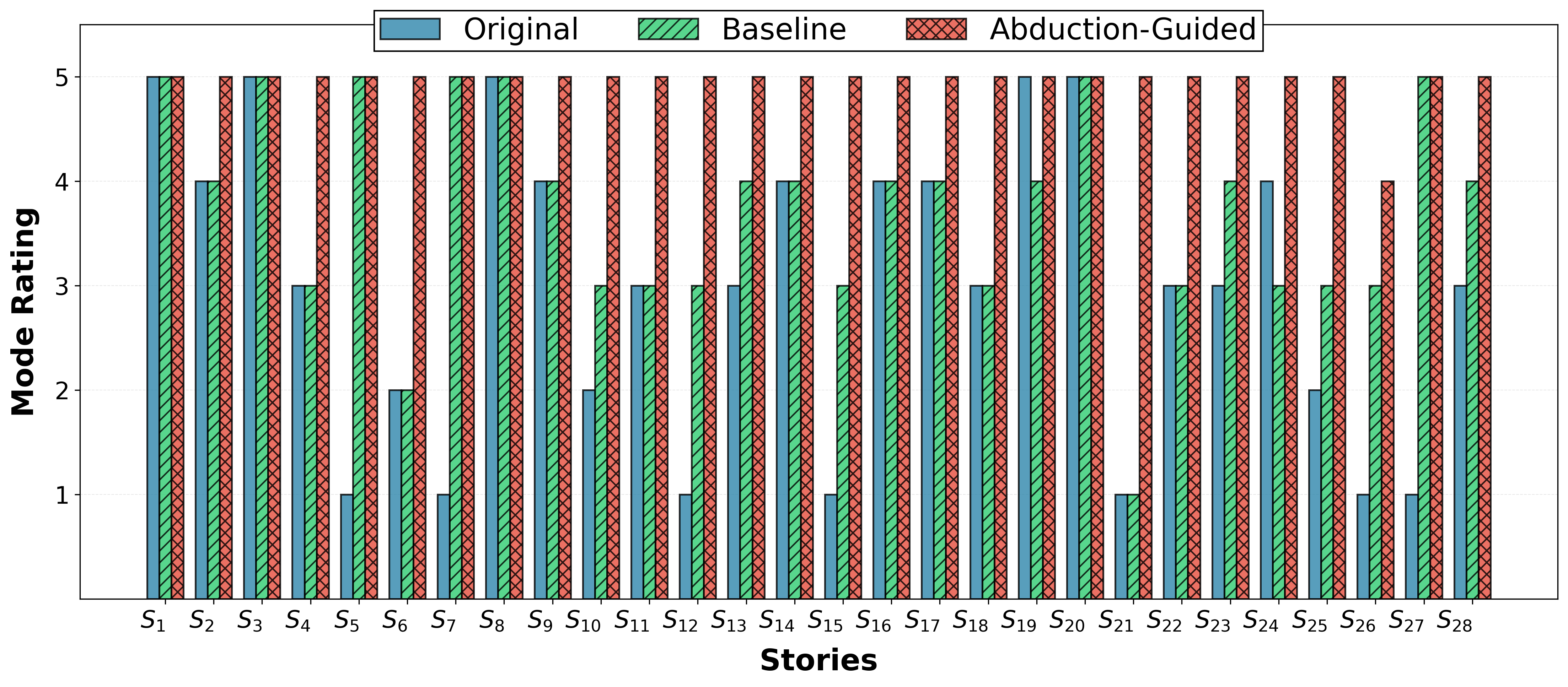}\\[0.0em]
\end{tabular}
&
\begin{tabular}{@{}c@{}}
    \textbf{I$\rightarrow$C} \\[0em]
        \includegraphics[width=0.49\linewidth]{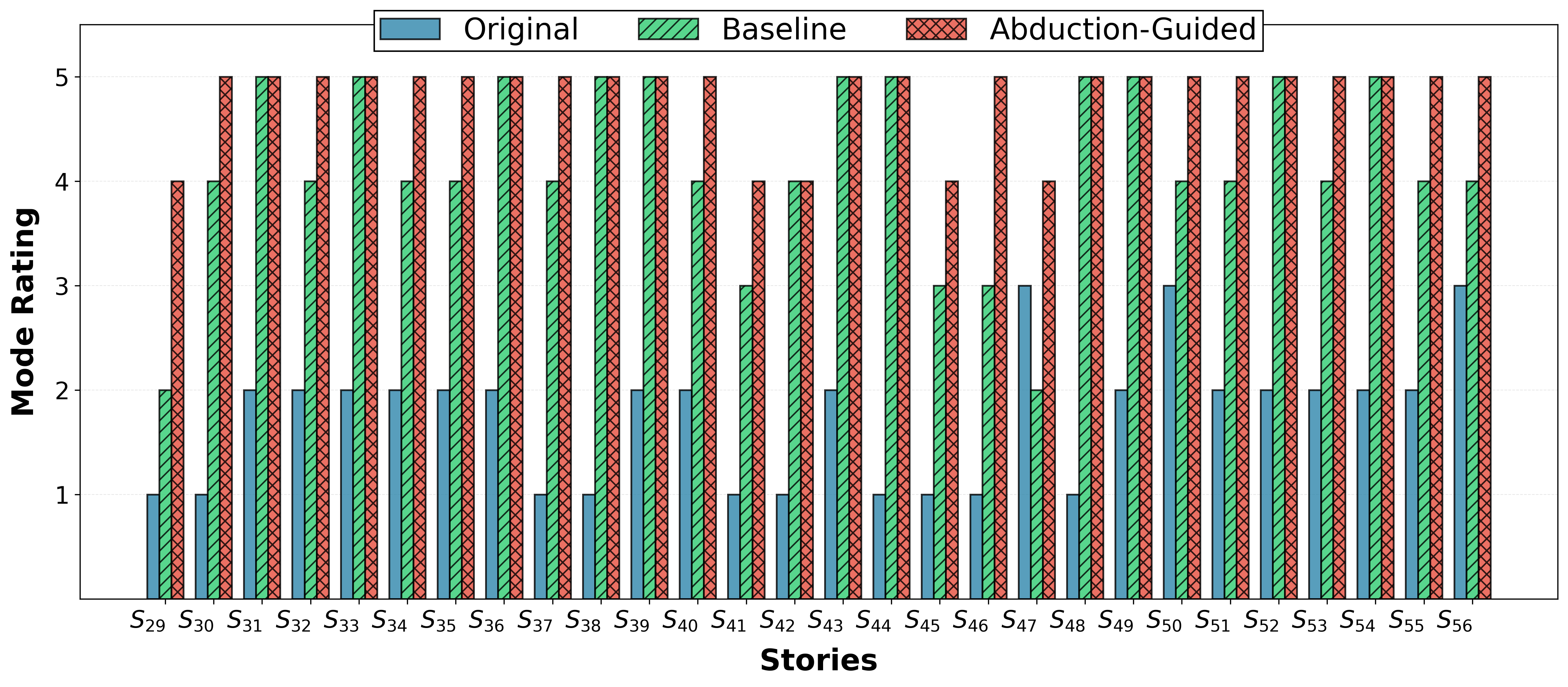}\\[0.0em]
\end{tabular}
\end{tabular}
\caption{The mode of diagnosis ratings for GPT-4o across the stories in the dataset. Left: C$\rightarrow$I, Right: I$\rightarrow$C. Note that for both directions higher rating is better.}
\label{fig:mode_diag_comparison}
\end{figure*}
\begin{figure*}[!t]
\begin{tabular}{@{}c@{\hspace{0em}}c@{}}
\begin{tabular}{@{}c@{}}
    \textbf{C$\rightarrow$I} \\[0em]
        \includegraphics[width=0.49\linewidth]{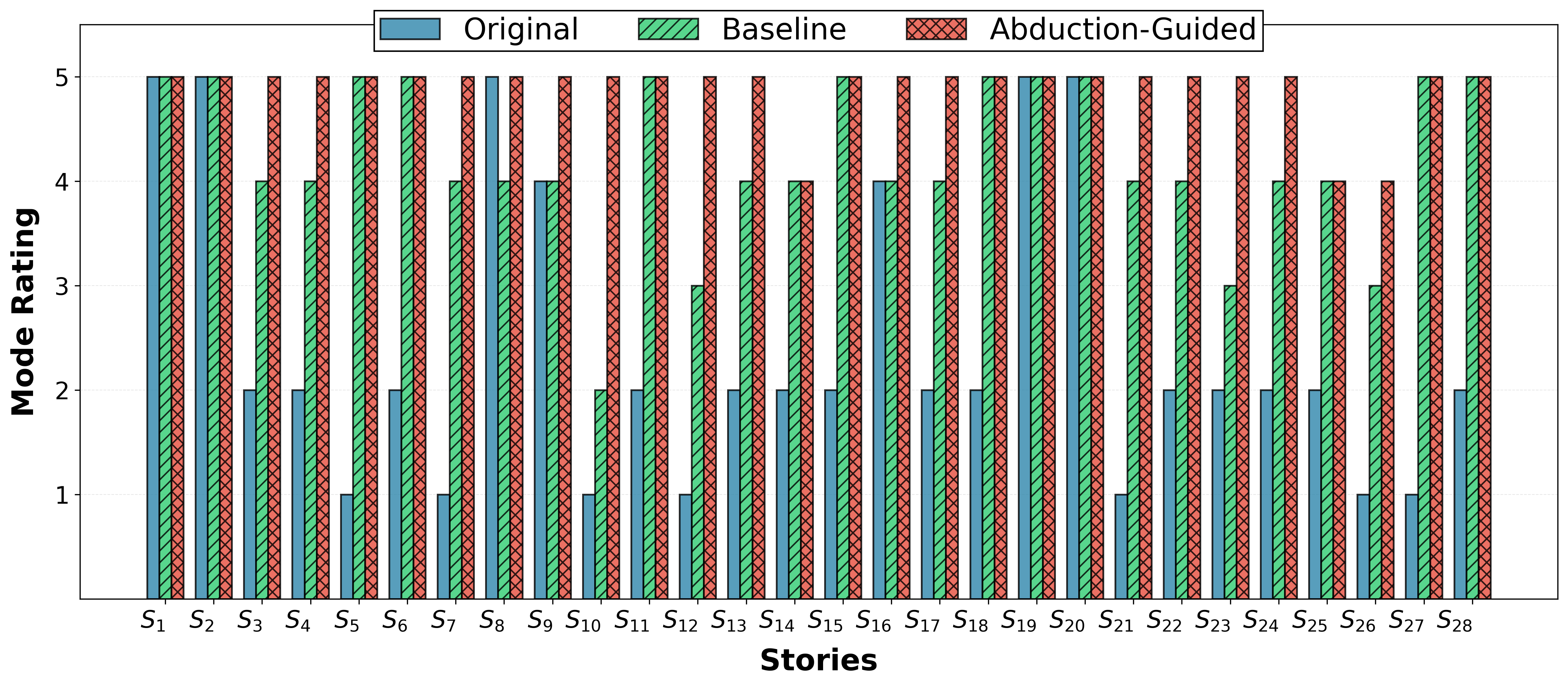}\\[0.0em]
\end{tabular}
&
\begin{tabular}{@{}c@{}}
    \textbf{I$\rightarrow$C} \\[0em]
        \includegraphics[width=0.49\linewidth]{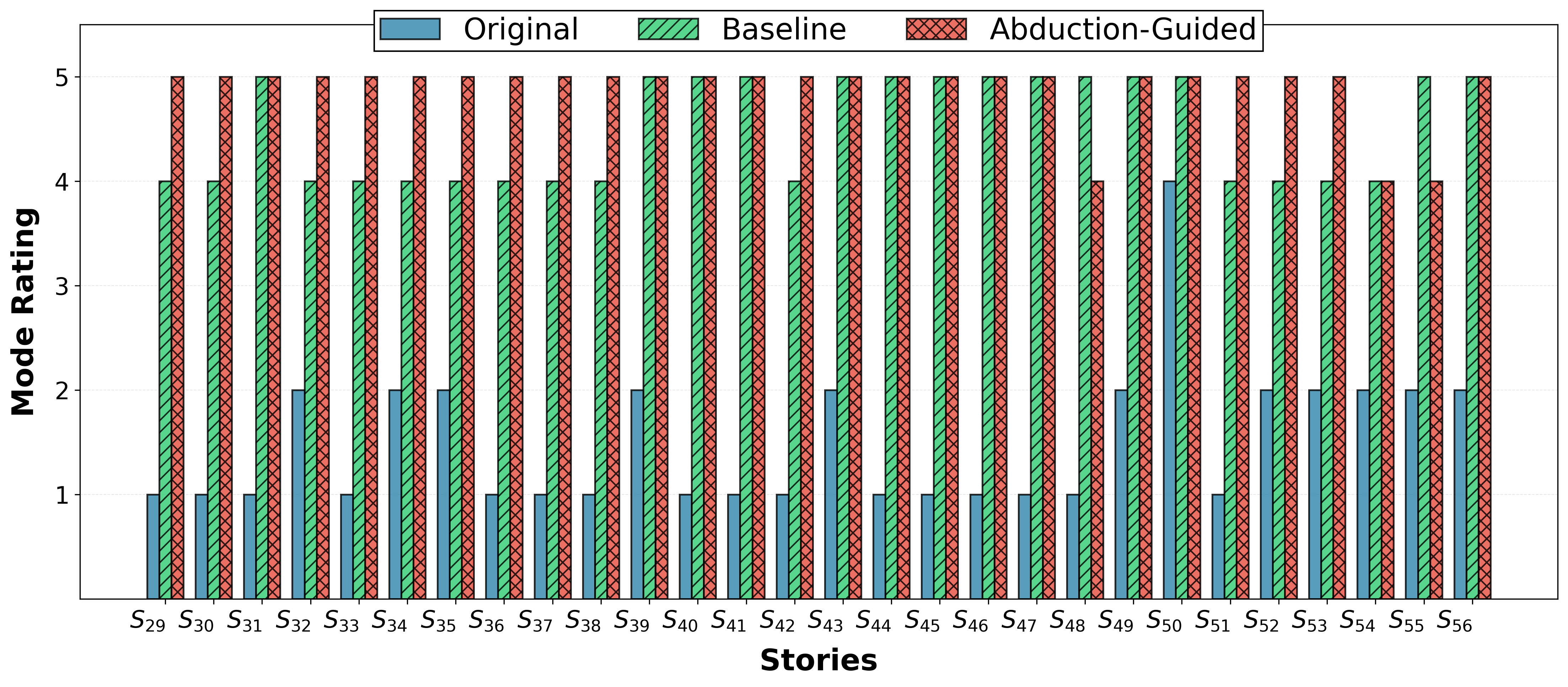}\\[0.0em]
\end{tabular}
\end{tabular}
\caption{The mode of diagnosis ratings for Grok-4 across the stories in the dataset. Left: C$\rightarrow$I, Right: I$\rightarrow$C. Note that for both directions higher rating is better.}
\label{fig:mode_diag_comparison_grok4}
\end{figure*}

\begin{figure*}[!t]
\begin{tabular}{@{}c@{\hspace{0em}}c@{}}
\begin{tabular}{@{}c@{}}
    \textbf{C$\rightarrow$I} \\[0em]
        \includegraphics[width=0.49\linewidth]{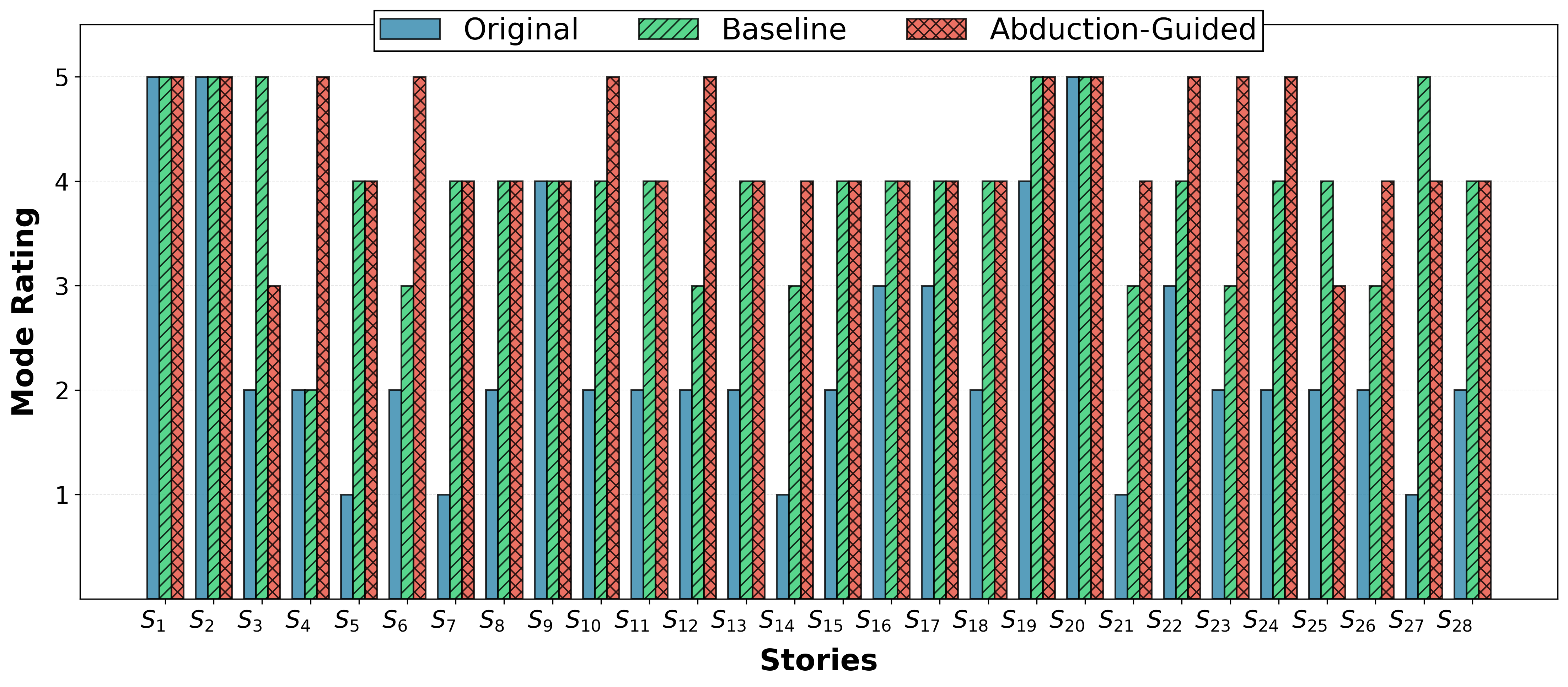}\\[0.0em]
\end{tabular}
&
\begin{tabular}{@{}c@{}}
    \textbf{I$\rightarrow$C} \\[0em]
        \includegraphics[width=0.49\linewidth]{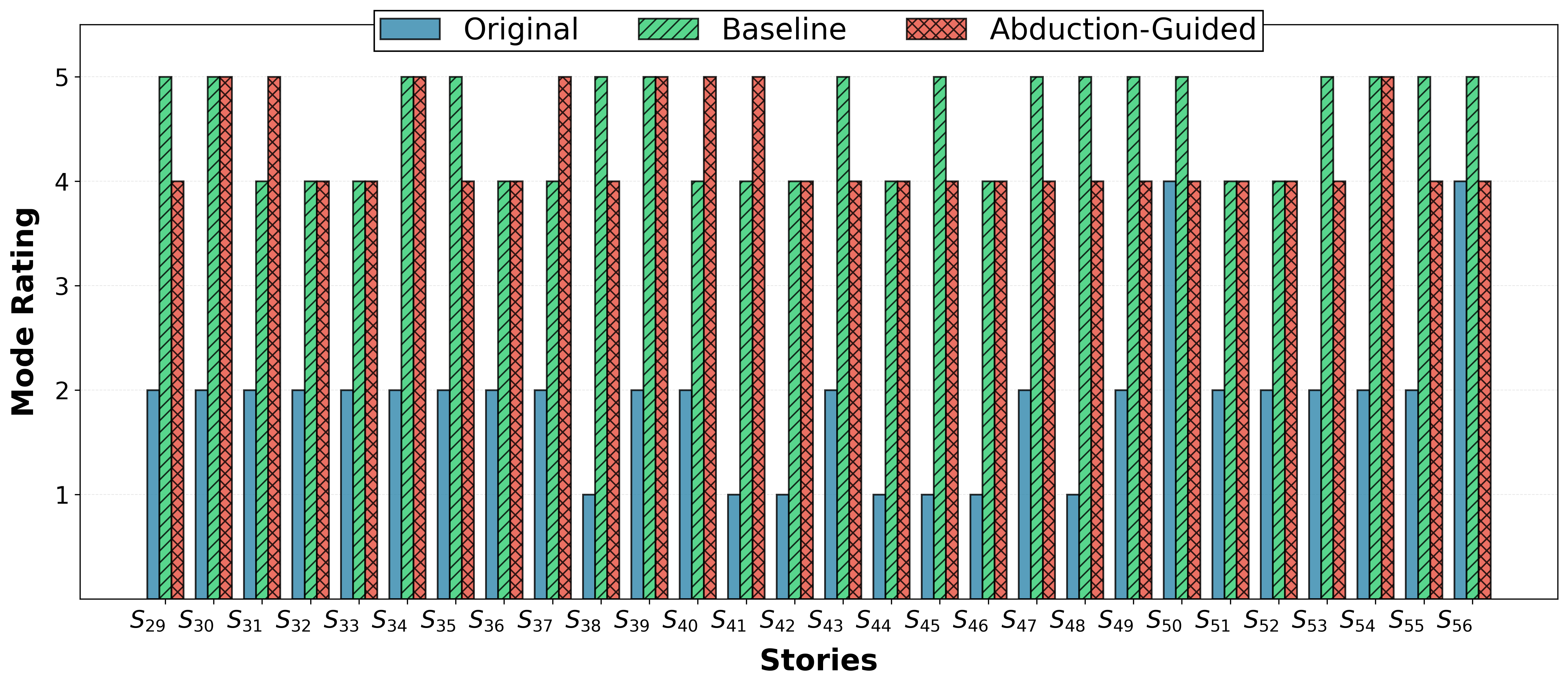}\\[0.0em]
\end{tabular}
\end{tabular}
\caption{The mode of diagnosis ratings for Deepseek-R1 across the stories in the dataset. Left: C$\rightarrow$I, Right: I$\rightarrow$C. Note that for both directions higher rating is better.}
\label{fig:mode_diag_comparison_deepseek}
\end{figure*}

\begin{figure*}[!t]
\begin{tabular}{@{}c@{\hspace{0em}}c@{}}
\begin{tabular}{@{}c@{}}
    \textbf{C$\rightarrow$I} \\[0em]
        \includegraphics[width=0.49\linewidth]{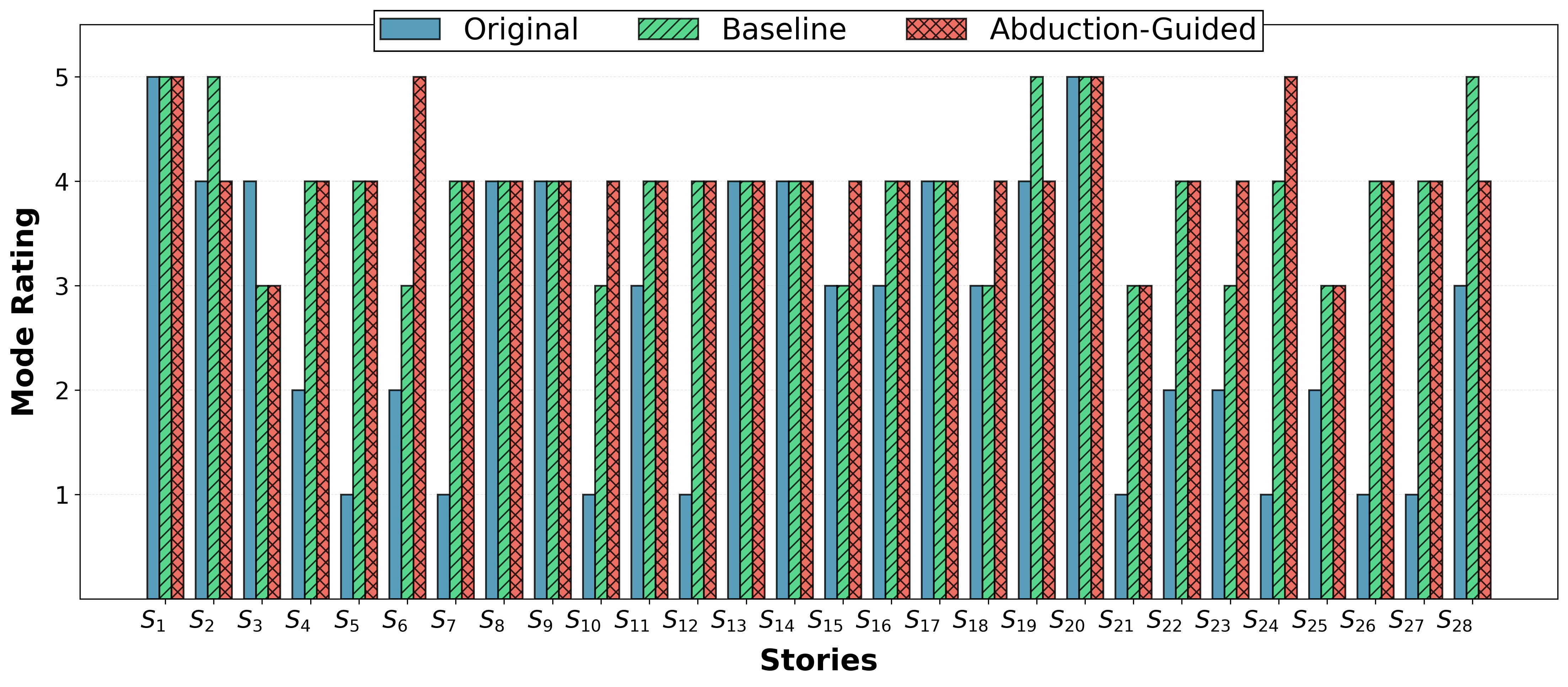}\\[0.0em]
\end{tabular}
&
\begin{tabular}{@{}c@{}}
    \textbf{I$\rightarrow$C} \\[0em]
        \includegraphics[width=0.49\linewidth]{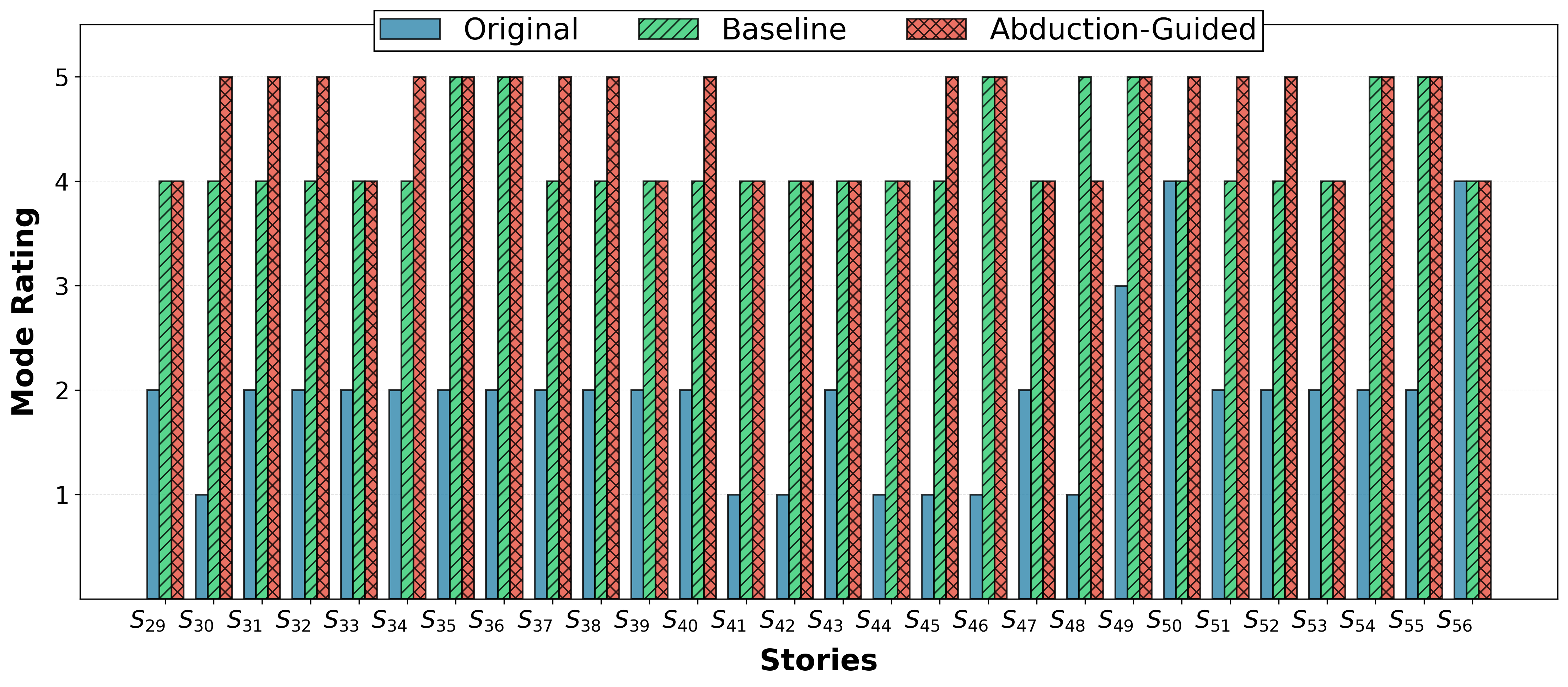}\\[0.0em]
\end{tabular}
\end{tabular}
\caption{The mode of diagnosis ratings for Llama across the stories in the dataset. Left: C$\rightarrow$I, Right: I$\rightarrow$C. Note that for both directions higher rating is better.}
\label{fig:mode_diag_comparison_llama}
\end{figure*}

\begin{figure*}[t!]
\begin{tabular}{@{}c@{\hspace{0em}}c@{}}
\begin{tabular}{@{}c@{}}
    \textbf{C$\rightarrow$I} \\[0em]
        \includegraphics[width=0.49\linewidth]{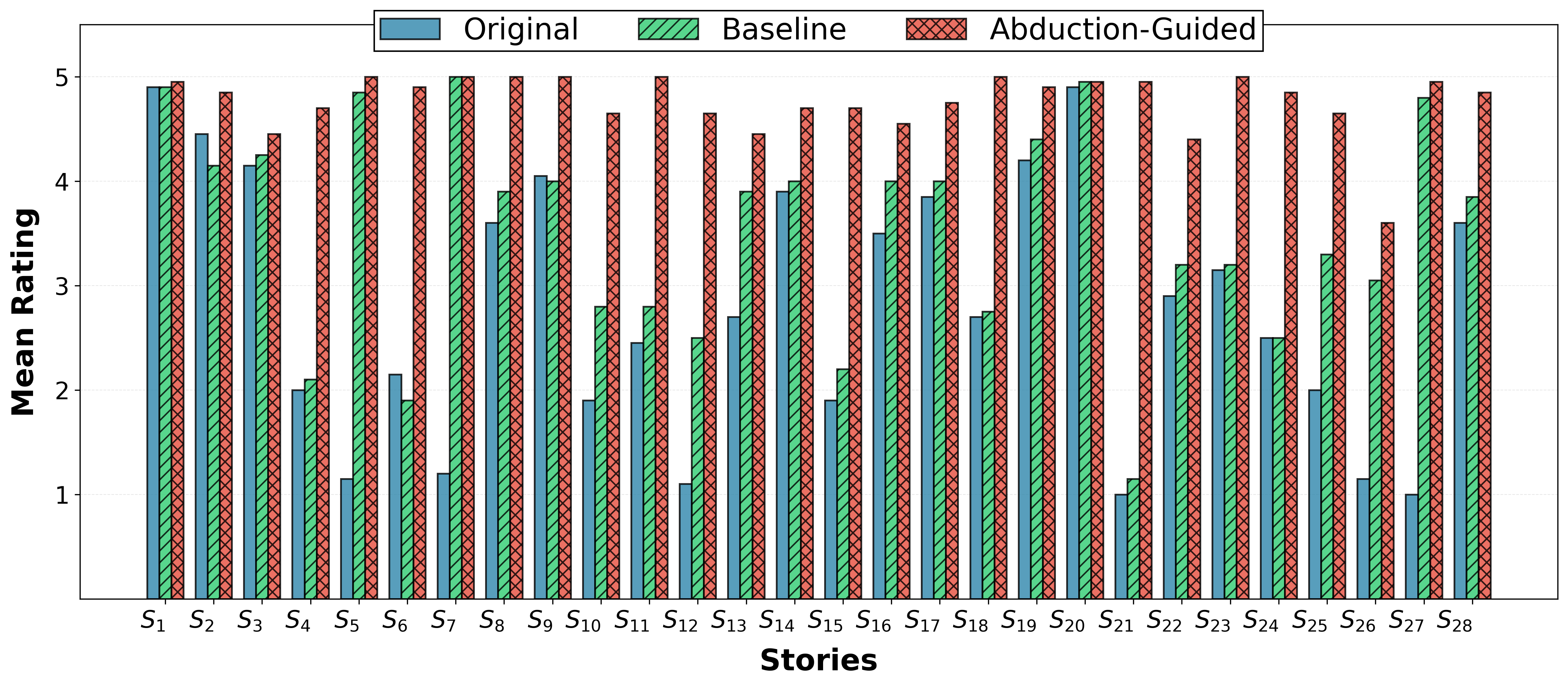}\\[0.0em]
\end{tabular}
&
\begin{tabular}{@{}c@{}}
    \textbf{I$\rightarrow$C} \\[0em]
        \includegraphics[width=0.49\linewidth]{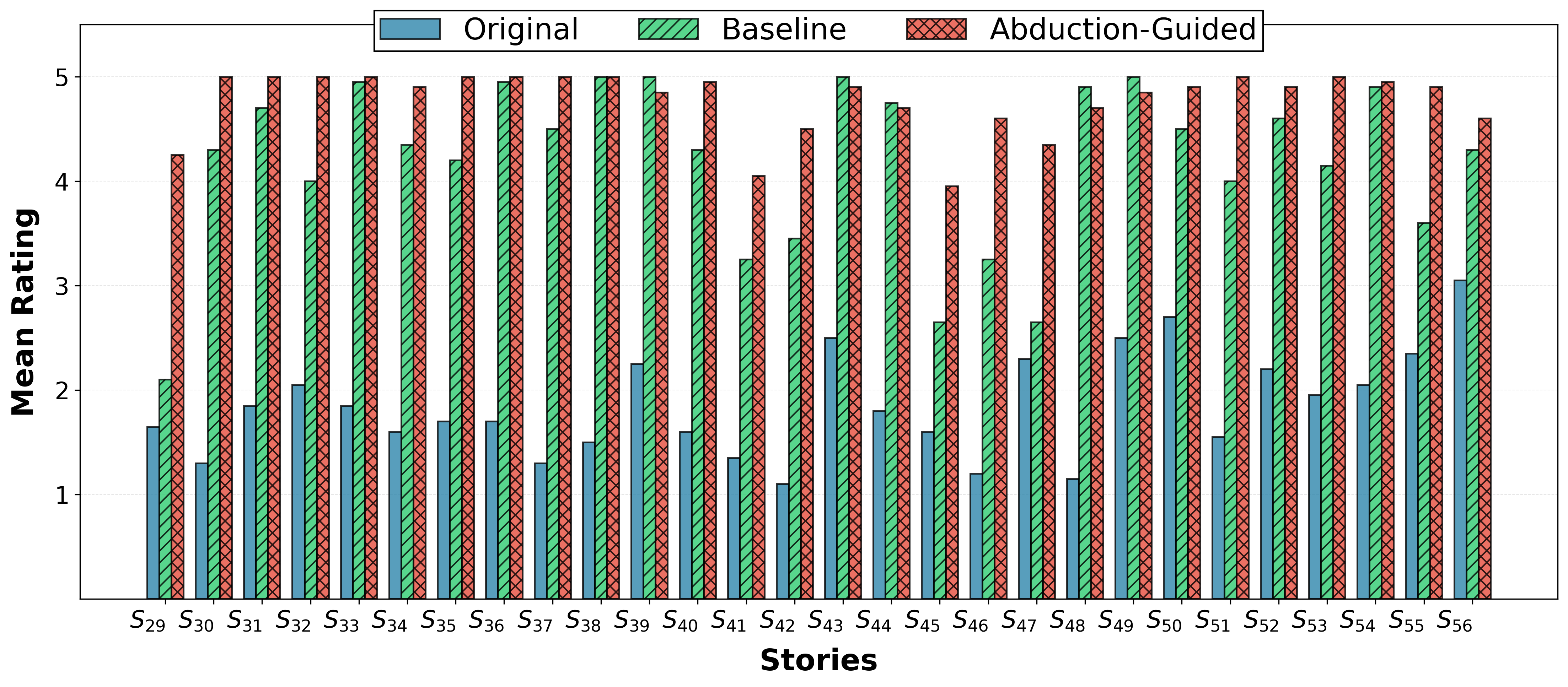}\\[0.0em]
\end{tabular}
\end{tabular}
\caption{The mean of diagnosis ratings for GPT-4o across the stories in the dataset. Left: C$\rightarrow$I, Right: I$\rightarrow$C. Note that for both directions higher rating is better.}
\label{fig:mean_diag_comparison}
\end{figure*}
\begin{figure*}[!t]
\begin{tabular}{@{}c@{\hspace{0em}}c@{}}
\begin{tabular}{@{}c@{}}
    \textbf{C$\rightarrow$I} \\[0em]
        \includegraphics[width=0.49\linewidth]{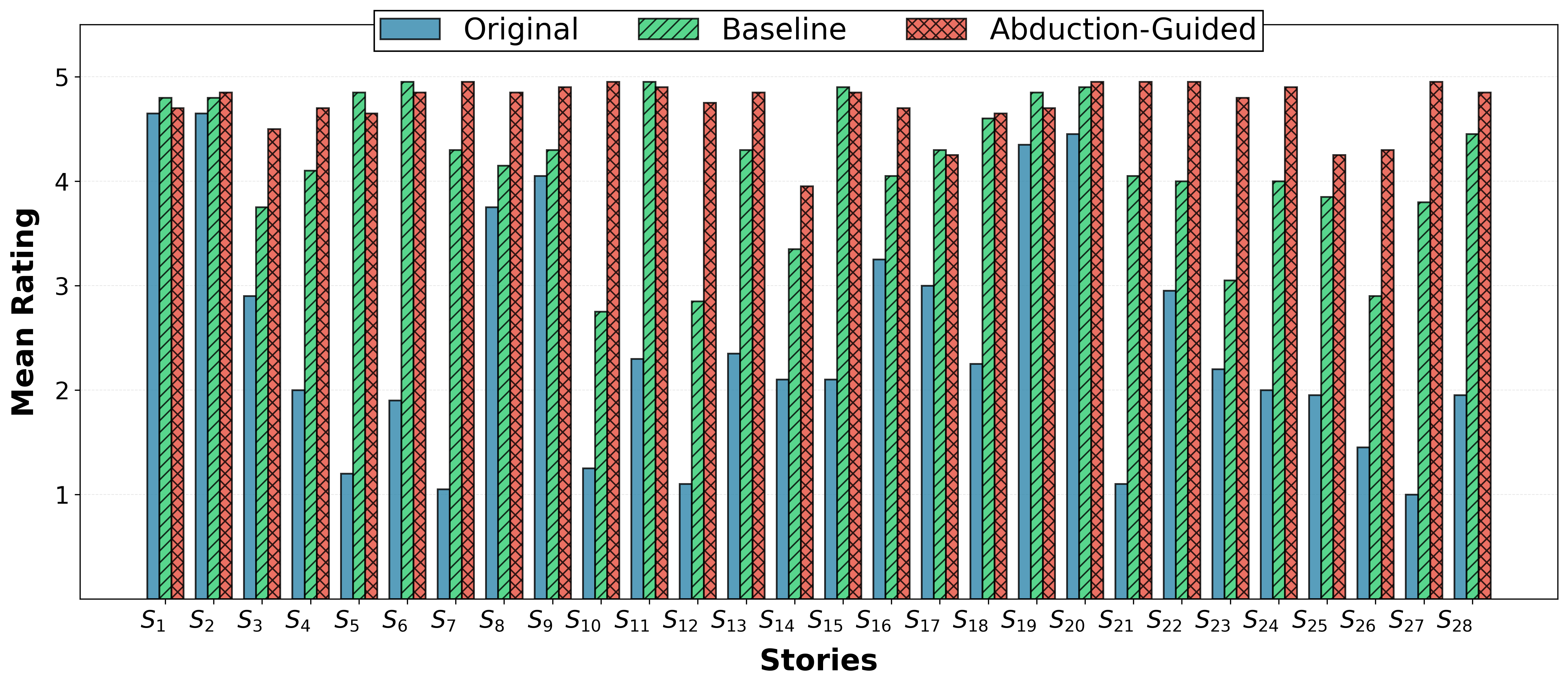}\\[0.0em]
\end{tabular}
&
\begin{tabular}{@{}c@{}}
    \textbf{I$\rightarrow$C} \\[0em]
        \includegraphics[width=0.49\linewidth]{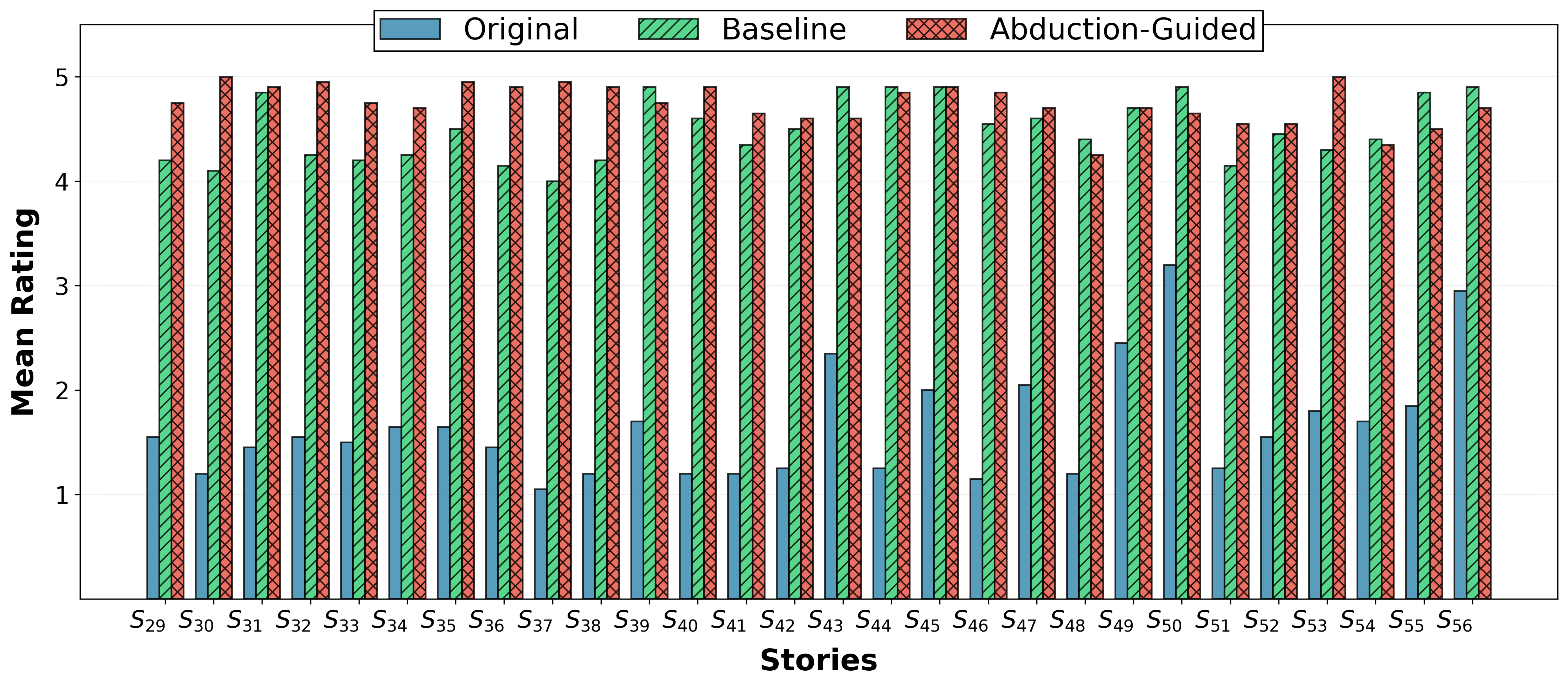}\\[0.0em]
\end{tabular}
\end{tabular}
\caption{The mean of diagnosis ratings for Grok-4 across the stories in the dataset. Left: C$\rightarrow$I, Right: I$\rightarrow$C. Note that for both directions higher rating is better.}
\label{fig:mean_diag_comparison_grok4}
\end{figure*}

\begin{figure*}[!t]
\begin{tabular}{@{}c@{\hspace{0em}}c@{}}
\begin{tabular}{@{}c@{}}
    \textbf{C$\rightarrow$I} \\[0em]
        \includegraphics[width=0.49\linewidth]{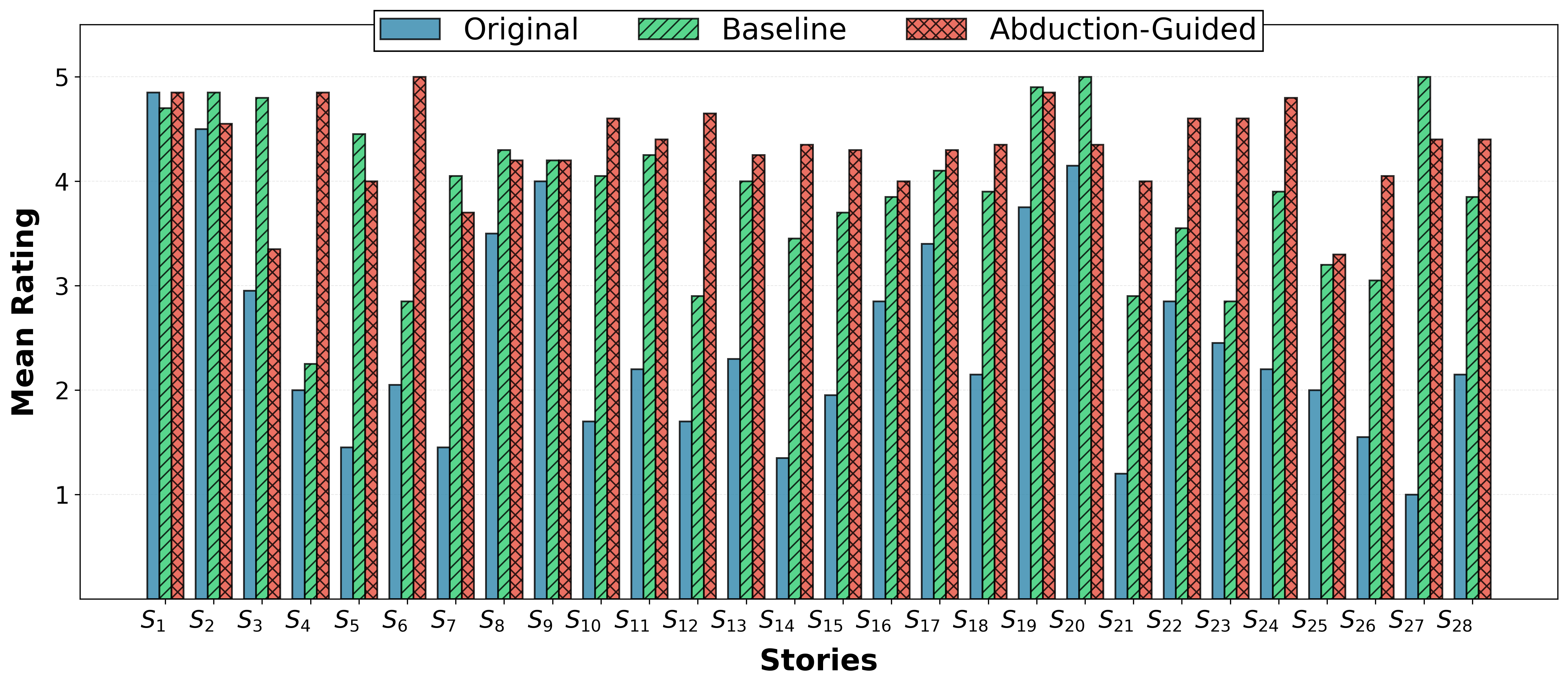}\\[0.0em]
\end{tabular}
&
\begin{tabular}{@{}c@{}}
    \textbf{I$\rightarrow$C} \\[0em]
        \includegraphics[width=0.49\linewidth]{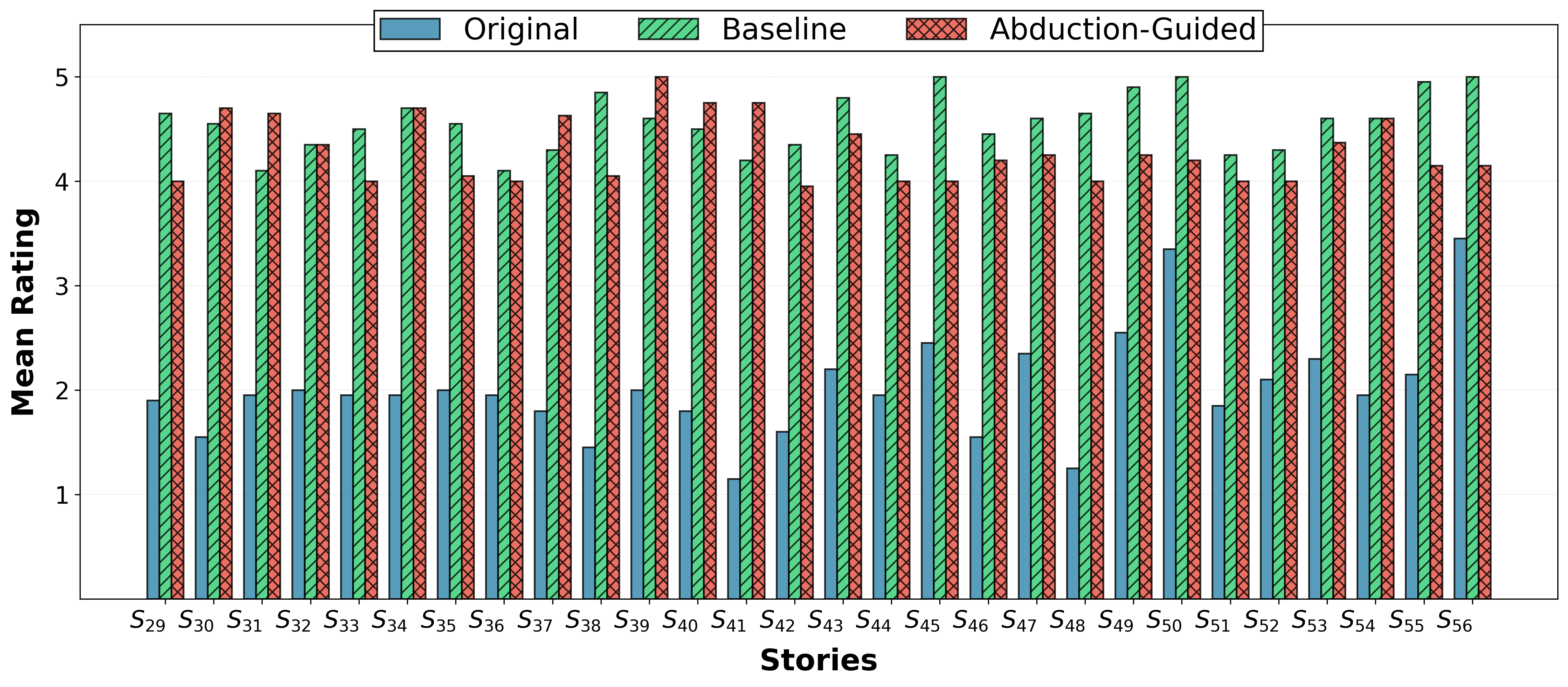}\\[0.0em]
\end{tabular}
\end{tabular}
\caption{The mean of diagnosis ratings for Deepseek-R1 across the stories in the dataset. Left: C$\rightarrow$I, Right: I$\rightarrow$C. Note that for both directions higher rating is better.}
\label{fig:mean_diag_comparison_deepseek}
\end{figure*}

\begin{figure*}[!t]
\begin{tabular}{@{}c@{\hspace{0em}}c@{}}
\begin{tabular}{@{}c@{}}
    \textbf{C$\rightarrow$I} \\[0em]
        \includegraphics[width=0.49\linewidth]{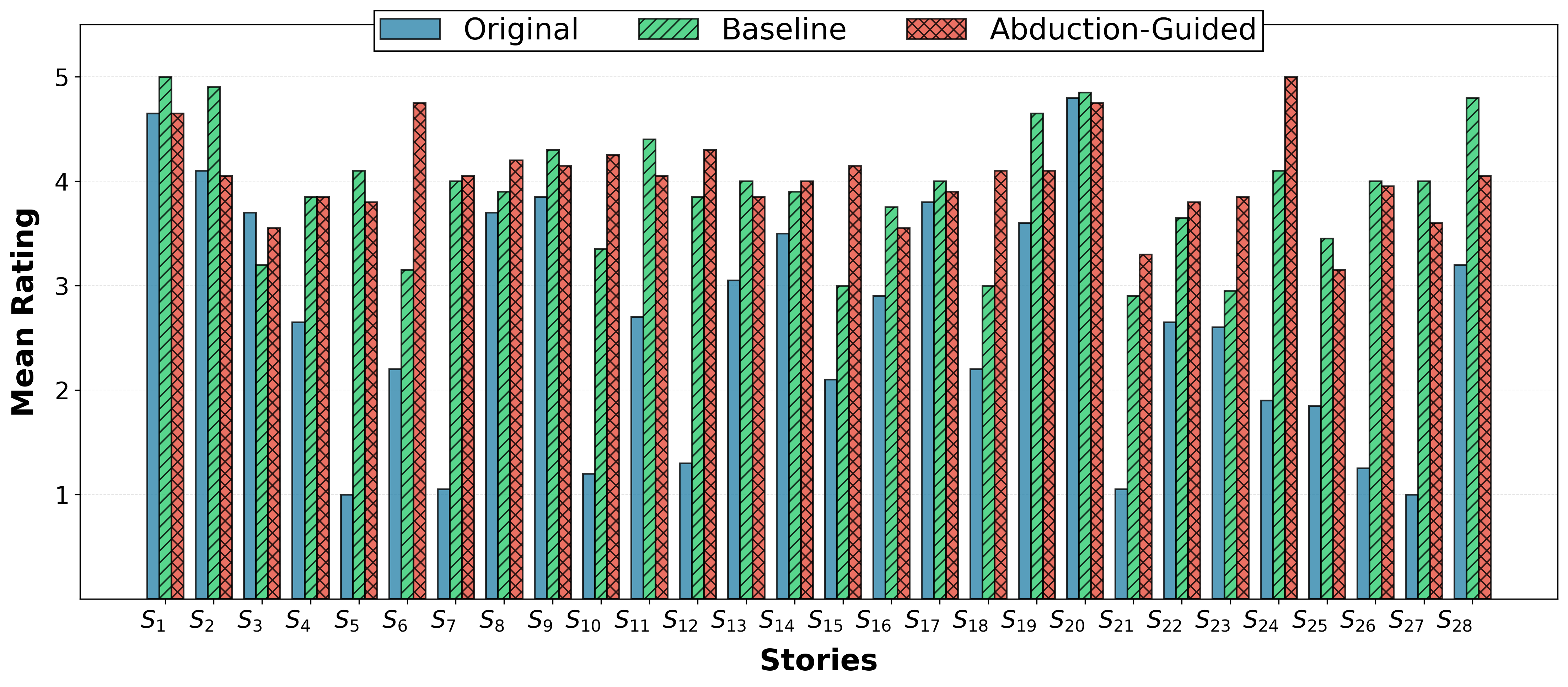}\\[0.0em]
\end{tabular}
&
\begin{tabular}{@{}c@{}}
    \textbf{I$\rightarrow$C} \\[0em]
        \includegraphics[width=0.49\linewidth]{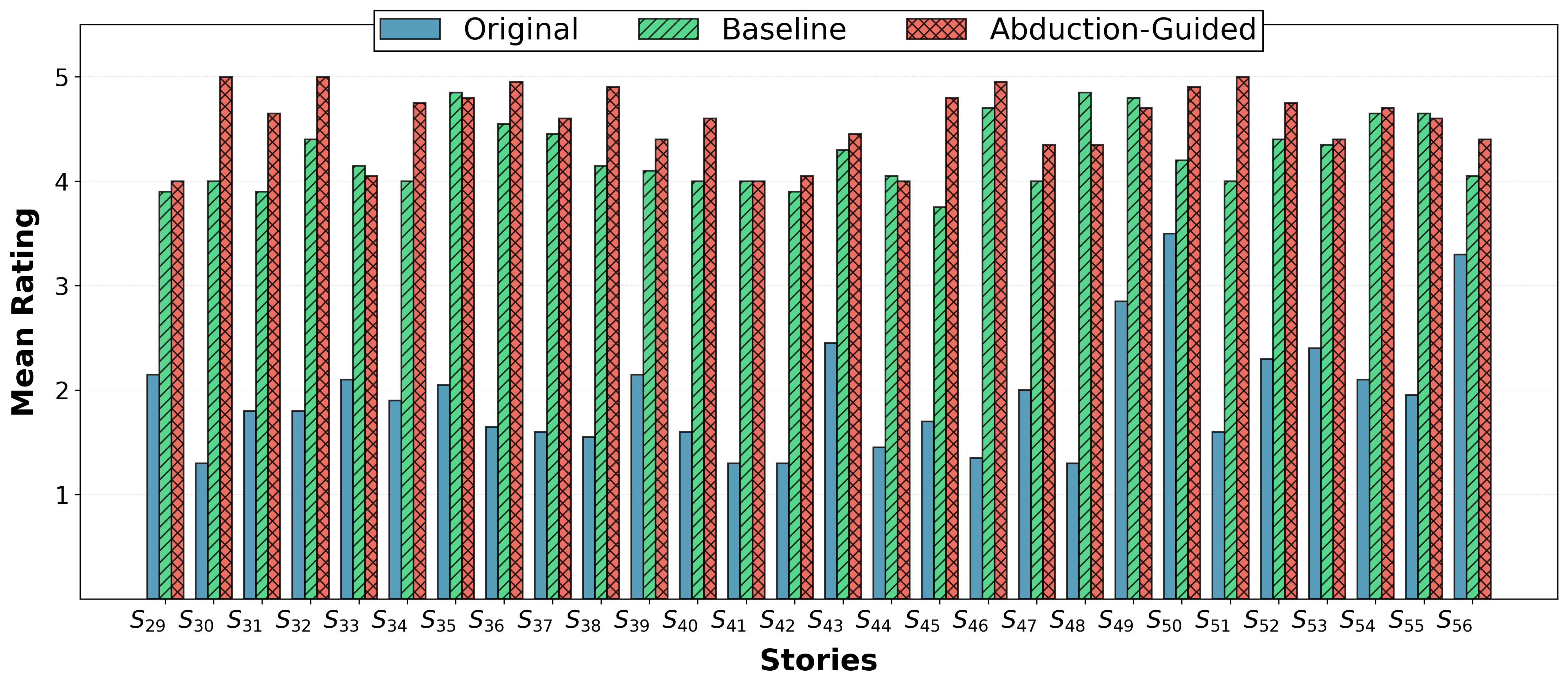}\\[0.0em]
\end{tabular}
\end{tabular}
\caption{The mean of diagnosis ratings for Llama across the stories in the dataset. Left: C$\rightarrow$I, Right: I$\rightarrow$C. Note that for both directions higher rating is better.}
\label{fig:mean_diag_comparison_llama}
\end{figure*}

\begin{figure*}[t]
    \centering
    \fbox{
    \begin{minipage}{0.98\textwidth}
        \small
\textbf{Original (Collectivistic):}
\begin{quote}
\small The group of [women] from the community are warriors — women with great courage, enthusiasm, and a spirit of service, said Fernando, a Convoy of Hope agriculture technician in Nicaragua. We have often seen them give up their time to help other people. This work that we have done supporting the Agriculture program has brought us other experiences, the Warrior Women said. Agriculture is not just a job for men. Now we feel empowered to produce our own food. We are grateful to all … of Convoy of Hope, especially to the facilitators. We believe that they have given us the opportunity to learn, and that is why we now have the experiences to continue improving our lives, our family, and our community.
\end{quote}

\textbf{Baseline (Zero-shot LLM): }{\color{red} $\times$ Maintains collectivistic framing}
\begin{quote}
\small ``Andria and her group are true warriors,'' praises Fernando, a Convoy of Hope agriculture technician. ``Their spirit of service is remarkable, and their contribution invaluable. Through this journey, Andria has gained more than agricultural skills. ``Agriculture isn't just for men,'' she asserts. ``We've learned to produce our own food and feel empowered by it.'' \textit{Grateful for the support from Convoy of Hope,} Andria is optimistic about the future. \textit{``The opportunity to learn has enriched our lives, our families, and our community,''} she reflects.''
\end{quote}

\textbf{Our Method: }{\color{blue} $\checkmark$ Individualistic Shift}
\begin{quote}
\small ``\textit{Each woman is a warrior in her own right — an individual with extraordinary courage, enthusiasm, and a personal spirit of service,} said Fernando, a Convoy of Hope agriculture technician in Nicaragua. We have often seen them give up their time to help other people.
\textit{This journey I have undertaken in supporting the Agriculture program} has brought \textit{me} other valuable experiences,'' one of the Warrior Women said. ``Agriculture is not just a job for men. Now I feel empowered to produce my own food.
\textit{I am proud of my own determination and efforts. It's my drive and ambition that have allowed me to learn} and gain the experiences needed to improve my life and the lives of those around me.''
\end{quote}

    \end{minipage}
    }
    \caption{Abduction-guided LLM transformation successfully shifts narrative from C$\rightarrow$I.}
    \label{fig:c_i_transform}
\end{figure*}

\end{document}